%% file: main.tex
\documentclass[10pt]{article} 

\usepackage[preprint]{tmlr}

\input{math_commands.tex}

\usepackage{hyperref}
\usepackage{url}

\usepackage[utf8]{inputenc}
\usepackage[T1]{fontenc}

\usepackage{amsthm}
\usepackage{mathtools}
\usepackage{bm}
\usepackage{nicefrac}

\usepackage{graphicx}
\usepackage{tikz}
\usepackage{subcaption}

\usepackage{booktabs}
\usepackage{tabularx}

\usepackage{algorithm}
\usepackage{algorithmic}
\usepackage{enumitem}
\usepackage{amssymb}   
\usepackage{xcolor}
\usepackage{soul}
\usepackage{microtype}

\usepackage{hyperref}
\usepackage{url}
\usepackage[capitalize,noabbrev]{cleveref}

\newtheorem{theorem}{Theorem}[section]
\newtheorem{proposition}[theorem]{Proposition}
\newtheorem{lemma}[theorem]{Lemma}
\newtheorem{corollary}[theorem]{Corollary}

\theoremstyle{remark}
\newtheorem{remark}[theorem]{Remark}

\DeclareMathOperator{\Id}{Id}

\newcommand{\Pc}{\mathcal{P}}

\newcommand{\sg}{\mathrm{sg}}

\title{Generative Drifting is Secretly Score Matching: a Spectral and Variational Perspective}

\author{\name Erkan Turan \email turan@lix.polytechnique.fr \\
      LIX, Ecole Polytechnique, IP Paris
      \AND
      \name Nicolas Dufour \email nicolas.dufour@kyutai.org \\
       Kyutai
      \AND
      \name Maks Ovsjanikov \email maks@lix.polytechnique.fr\\
      \addr LIX, Ecole Polytechnique, IP Paris
     }

\def\month{MM}  
\def\year{YYYY} 
\def\openreview{\url{https://openreview.net/forum?id=XXXX}} 

\begin{document}

\maketitle
\let\AND\relax
\begin{abstract}
Generative Modeling via Drifting~\citep{deng2026drifting} has recently achieved
state-of-the-art one-step image generation through a kernel-based drift operator, yet the success is largely empirical and its
theoretical foundations remain poorly understood. In this paper, we make the following observation: \emph{under a Gaussian kernel, the drift
operator is exactly a score difference on smoothed distributions}. 
This insight allows us to answer all three key questions, which were left open in the original work: (1) whether a vanishing drift guarantees equality of distributions
($V_{p,q}=0\Rightarrow p=q$), (2) how to choose between kernels, and (3) why the
stop-gradient operator is indispensable for stable training.   Our observations position drifting within the well-studied score-matching family and enable a rich theoretical perspective for subsequent analysis.  By linearizing the  McKean-Vlasov
dynamics resulting from our formulation and probing these dynamics in Fourier space, we reveal frequency-dependent convergence
timescales comparable to \emph{Landau damping} in plasma kinetic theory: the
Gaussian kernel suffers an exponential high-frequency bottleneck, potentially explaining
the empirical preference for the Laplacian kernel.  Our analysis also suggests a fix: an exponential bandwidth annealing schedule
$\sigma(t)=\sigma_0 e^{-rt}$ that reduces convergence time from
$\exp(O(K_{\max}^2))$ to $O(\log K_{\max})$.  Finally, by formalizing drifting as
a Wasserstein gradient flow of the smoothed KL divergence, we prove that the
stop-gradient operator is not a heuristic but is derived directly from the frozen-field discretization mandated by the Jordan, Kinderlehrer and Otto (JKO)
scheme, and removing it severs training from any gradient-flow guarantee.
This variational perspective further provides a general template for
constructing novel drift operators, which we demonstrate with a Sinkhorn
divergence drift. We validate our analysis on toy datasets and scale it up to ImageNet. Code is available \href{https://github.com/nicolas-dufour/drifting_pytorch}{here}\footnote{\url{https://github.com/nicolas-dufour/drifting_pytorch}}.
\end{abstract}

\section{Introduction}
\label{sec:intro}
The dominant paradigm in continuous generative modeling relies on learning the \textit{score function} of data distributions and then using the learned score to guide the generation process. Energy-Based Models exploit the score to bypass the intractable partition function  \cite{grathwohl2019your,du2019implicit,song2021train}; Score-Based Generative Models and Diffusion Models \citep{pmlr-v37-sohl-dickstein15, ho2020denoising, song2021scorebased} are unified under this framework via Denoising Score Matching \citep{vincent2011connection} which established that training a denoiser is theoretically equivalent to score matching; furthermore, Flow Matching \citep{lipman2023flow} has recently been shown to fit within the same family \citep{gao2025diffusionmeetsflow}. Bridging these paradigms has allowed researchers to develop robust mathematical machinery and principled training practices across all of these models.

A recent approach, \emph{Generative Modeling via Drifting}
\citep{deng2026drifting}, appears to depart from this formalism.  Instead
of learning a score or velocity field, drifting prescribes a
\emph{kernel-based drift operator} $V_{p,q}$ that pulls generated samples toward data
while pushing them away from one another.  The generator is trained to match
its own drifted outputs until the drift vanishes, achieving impressive one-step image
generation without distillation, teacher models, or adversarial training.

Despite its impressive empirical results, the mathematical structure of drifting remains largely underexplored. Specifically, \citet{deng2026drifting} left three
foundational open questions:
\begin{enumerate}[leftmargin=*, itemsep=1pt, topsep=3pt]
  \item \emph{Identifiability.} Does $V_{p,q}=0$ guarantee $p=q$?
  \item \emph{Kernel selection.} The drift operator depends on the choice of the kernel; how should they be defined and selected?
  \item \emph{Algorithmic stability.} Is the stop-gradient operator
        essential, and, if so, what are its theoretical justifications?
\end{enumerate}
All three questions share a common root: we do not know what the drift
operator \emph{actually computes}.

In this paper we show that the drift operator, under the Gaussian kernel,
\emph{is} a score difference on smoothed distributions:
\begin{equation}
\label{eq:key_identity_intro}
  V_{p,q}^{(\sigma)}(x) \;=\; \sigma^2\,\nabla_x\log\frac{p_\sigma(x)}{q_\sigma(x)}.
\end{equation}


\begin{figure*}[t]
  \centering
  \includegraphics[width=\textwidth]{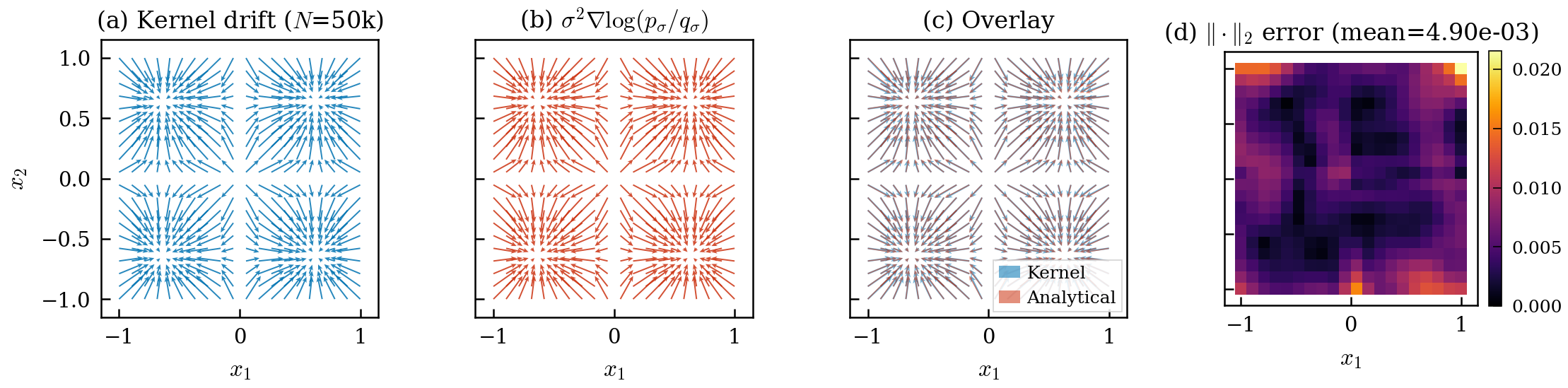}
  \caption{
    Numerical confirmation of Theorem~\ref{thm:score_matching} on a 4-mode
    Gaussian mixture.
    \textbf{(a)}~Empirical kernel mean-shift drift ($N=50$k samples).
    \textbf{(b)}~Analytical score-difference form
    $\sigma^2\nabla\log(p_\sigma/q_\sigma)$.
    \textbf{(c)}~Overlay: the two fields are visually indistinguishable.
    \textbf{(d)}~Pointwise $\ell_2$ error heatmap (mean $4.9\times10^{-3}$).
    Details are provided in Section \ref{app:score_check}.
  }
  \label{fig:score_matching_verification}
\end{figure*}

This identity, which we derive from first principles by direct kernel substitution,
immediately positions drifting within the score-matching family (see Figure~\ref{fig:score_matching_verification} for direct numerical confirmation). More importantly, it provides the theoretical framework required to resolve all three open questions. Specifically, we prove identifiability via the injectivity of Gaussian convolution (Sec. \ref{sec:score_connection}), kernel selection via a Fourier-space stability analysis of the linearized McKean-Vlasov equation (Sec. \ref{sec:fourier_stability}), and stop-gradient necessity via the JKO discretization of a Wasserstein gradient flow (Sec. \ref{sec:gradient_flow}).

To summarize, our key contributions are:
\begin{enumerate}[leftmargin=*, itemsep=2pt, topsep=4pt]
  \item \textbf{Score-matching identity and identifiability} (\S\ref{sec:score_connection}). Direct kernel substitution gives $V_{p,q}^{(\sigma)}=\sigma^2\nabla\log(p_\sigma/q_\sigma)$, placing drifting in the score-matching family and reducing identifiability to Fourier injectivity.
  \item \textbf{Landau damping and kernel diagnosis} (\S\ref{sec:fourier_stability}). Linearizing the McKean--Vlasov dynamics around equilibrium yields mode-resolved convergence timescales: exponential slowdown for the Gaussian kernel, polynomial for the Laplacian, a first principled explanation of the empirical kernel preference.
  \item \textbf{Wasserstein gradient flow and stop-gradient necessity} (\S\ref{sec:gradient_flow}). Drifting is the JKO-discretized Wasserstein gradient flow of the smoothed KL divergence, and the stop-gradient operator is precisely the frozen-field structure this discretization requires; removing it severs training from any descent guarantee.
  \item \textbf{Algorithmic improvements} (\S\ref{sec:practical}). An exponential bandwidth schedule $\sigma(t)=\sigma_0 e^{-rt}$ provably reduces convergence time from $\exp(\mathcal{O}(K_{\max}^2))$ to $\mathcal{O}(\log K_{\max})$ while retaining Gaussian identifiability, and the variational template $V=-\nabla(\delta\mathcal{F}/\delta q)$ yields new operators such as a Sinkhorn-divergence drift.
\end{enumerate}


\section{Background}
\label{sec:background}

We briefly recall the basic principles behind score-matching and discuss how Diffusion and Flow Models are unified under this lens.

\paragraph{Score matching and Energy-Based Models.}
The score function $\nabla_x\log p_\theta(x)$ bypasses the intractable normalization constant in energy-based models and enables Langevin sampling:
$X_{t+1} = X_t + \frac{\eta^2}{2}\nabla_x\log p_\theta(X_t) + \eta z$, where $z\sim\mathcal{N}(0,I)$ and $\eta$ is the step size.
Because the raw data score is unavailable, \citet{vincent2011connection} introduced denoising score matching: by working with the Gaussian-perturbed distribution $q(\tilde x)=\int\mathcal{N}(\tilde x;x,\sigma^2 I)p_D(x)\,dx$, the objective is replaced by matching the score of the transition kernel,  which simplifies to $-(\tilde x - x)/\sigma^2$. This perturbed score will reappear at the heart of our analysis.

\paragraph{Diffusion and Flow Models.}
Diffusion models \citep{pmlr-v37-sohl-dickstein15, ho2020denoising, song2021scorebased} connect the noise-predicting network $\epsilon_\theta$ to the score via Tweedie's formula, $\epsilon_\theta(x_t)\approx -\sigma_t\nabla_{x_t}\log p_t(x_t)$. This connection is most clear when viewing the forward perturbation as a Stochastic Differential Equation (SDE), $dx = f(x,t)\,dt + g(t)\,dw$, where $f(x, t)$ is the drift coefficient, $g(t)$ is the diffusion coefficient, and $w$ is standard Brownian motion. The generative process is then given by the exact reverse-time SDE:
\begin{equation}
\label{eq:reverse}
 dx=[f(x,t)-g(t)^2\nabla_{x}\log p_t(x)]\,dt + g(t)\,d\bar{w},
\end{equation}
where $\bar{w}$ is standard Brownian motion flowing backward in time, and $p_t(x)$ is the marginal distribution of the perturbed data at time $t$. 

Flow models \citep{lipman2023flow} learn a velocity field $v_\theta$ via
$\dot x_t=v_\theta(x_t)$, sampling by ODE integration
\begin{equation}
\label{eq:ode_sampling}
x_1=x_0+\int_0^1 v_\theta(x_\tau)\,d\tau
\end{equation}
Recent work \citep{gao2025diffusionmeetsflow} has shown that these two families are equivalent, completing the unification under score matching.  In both cases, \emph{the dynamics operate at inference time}: the generator is a learned field that is integrated to produce samples.

\section{Drifting Models}
\label{sec:drifting}

Drifting \citep{deng2026drifting} departs from the above paradigm by
shifting all dynamics from inference to training.  Let $p$ be a data
distribution on $\R^d$ and $f_\theta:\R^k\to\R^d$ a generator whose
pushforward $q=(f_\theta)_\#\mathcal{N}(0,I)$ should approximate $p$.
The \emph{drift operator} $V_{p,q}=V_p^+-V_q^-$ is defined as:
\begin{equation}
\label{eq:drift}
  V_p^+(x) = \frac{\E_{y\sim p}[k(x,y)(y-x)]}{\E_{y\sim p}[k(x,y)]},
  \qquad
  V_q^-(x) = \frac{\E_{y\sim q}[k(x,y)(y-x)]}{\E_{y\sim q}[k(x,y)]},
\end{equation}
where $k:\R^d\times\R^d\to\R_{>0}$ is a positive kernel.  The attractive
term $V_p^+$ pulls generated samples toward nearby data; the repulsive term
$V_q^-$ pushes them away from one another to prevent mode collapse.

During training, noise samples $\epsilon \in \mathbb{R}^k$ are mapped to $x=f_\theta(\epsilon)$ and
drifted to target states $\tilde x=x+V_{p,q}(x)$. Note the target data distribution $p$ only enters the optimization through the drift operator $V_{p,q}$. The generator is trained
via a \emph{stop-gradient loss}:
\begin{equation}
\label{eq:loss}
  \mathcal{L}(\theta) = \E_\epsilon[\|f_\theta(\epsilon)-\sg[\tilde x]\|^2].
\end{equation}
\citet{deng2026drifting} established $q=p\Rightarrow V_{p,q}=0$ but left
the converse, \textit{identifiability}, open.  Moreover, as mentioned above, their reliance on a Laplacian kernel
with grid-searched bandwidths lacks theoretical justification, and the
necessity of $\sg[\cdot]$ remains a heuristic.  We resolve all
three in the sections that follow.

\section{Drifting is Score Matching}
\label{sec:score_connection}

\subsection{The Core Identity}
The following result is the foundation for all subsequent analysis.
\begin{theorem}[Gaussian drift as score difference]
\label{thm:score_matching}
Under the Gaussian kernel $\varphi_\sigma$, the drift operator admits the
closed form expression:
\begin{equation}
\label{eq:drift_score}
  V_{p,q}^{(\sigma)}(x) \;=\; \sigma^2\,\nabla_x \log \frac{p_\sigma(x)}{q_\sigma(x)},
\end{equation}
where $p_\sigma:=p*\varphi_\sigma$ and $q_\sigma:=q*\varphi_\sigma$.
\end{theorem}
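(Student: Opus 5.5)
We compute each of the two mean-shift terms $V_p^+$ and $V_q^-$ separately under the Gaussian kernel $k(x,y)=\varphi_\sigma(x-y)\propto\exp(-\|x-y\|^2/2\sigma^2)$, and show that each is $\sigma^2$ times the score of the corresponding smoothed density. Subtracting the two then gives \eqref{eq:drift_score}. The argument is the same for $p$ and $q$, so we write it for a generic probability measure $\mu$ with $\mu_\sigma:=\mu*\varphi_\sigma$. Any normalization constant of the kernel cancels between numerator and denominator of \eqref{eq:drift}, so we may take $k=\varphi_\sigma$ exactly.

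\textbf{Key steps.} First, the denominator is $\E_{y\sim\mu}[\varphi_\sigma(x-y)]=\int\varphi_\sigma(x-y)\,d\mu(y)=\mu_\sigma(x)$. Second, the Gaussian satisfies
\begin{equation*}
\nabla_x\varphi_\sigma(x-y)=-\frac{x-y}{\sigma^2}\,\varphi_\sigma(x-y)=\frac{y-x}{\sigma^2}\,\varphi_\sigma(x-y).
\end{equation*}
This is the same transition-kernel score that appears in denoising score matching. Hence the numerator is
\begin{equation*}
\E_{y\sim\mu}[\varphi_\sigma(x-y)(y-x)]=\sigma^2\int\nabla_x\varphi_\sigma(x-y)\,d\mu(y)=\sigma^2\nabla_x\mu_\sigma(x).
\end{equation*}
Third, dividing the numerator by the denominator gives $\sigma^2\nabla\mu_\sigma/\mu_\sigma=\sigma^2\nabla\log\mu_\sigma$. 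Applying this with $\mu=p$ and $\mu=q$ and taking the difference yields $\sigma^2(\nabla\log p_\sigma-\nabla\log q_\sigma)=\sigma^2\nabla\log(p_\sigma/q_\sigma)$.

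\textbf{Main obstacle.} The only step needing care is the interchange of $\nabla_x$ and the integral against $\mu$ in the second step. We justify it by dominated convergence. For $x$ in a bounded neighbourhood $B$, the quantity $\|\nabla_x\varphi_\sigma(x-y)\|=\|x-y\|\varphi_\sigma(x-y)/\sigma^2$ is bounded uniformly in $y$, because $r\,e^{-r^2/2\sigma^2}$ is bounded. The constant function is integrable against the probability measure $\mu$, so no moment assumptions on $p$ or $q$ are needed.

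We also need $\mu_\sigma(x)>0$ everywhere, so that the logarithm and the division are well defined. This holds because $\varphi_\sigma>0$ and $\mu$ is a probability measure. Finally, $\mu_\sigma$ is $C^\infty$ by the same domination argument applied to higher derivatives, so the right-hand side of \eqref{eq:drift_score} is a well-defined smooth vector field for arbitrary $p$ and $q$.
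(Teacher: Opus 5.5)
Your proof is correct and follows the paper's argument essentially verbatim. It uses the identity $(y-x)\varphi_\sigma(x-y)=\sigma^2\nabla_x\varphi_\sigma(x-y)$, differentiates under the integral with dominated convergence as justification, recognizes the denominator as the smoothed density, and applies this to $p$ and $q$ separately before subtracting. Your uniform bound via $\sup_r r\,e^{-r^2/2\sigma^2}<\infty$, and your remarks on positivity of $\mu_\sigma$ and cancellation of the kernel's normalization constant, are slightly more careful than the paper's appeal to ``Gaussian envelopes'', but they add no new idea.
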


\begin{proof}
See \ref{app:score}.
\end{proof}
This identity, which we derive by direct kernel substitution within the
drifting architecture, inserts drifting squarely into the score-matching
family under the Gaussian kernel.  It was independently established by
\citet{weber2023scoredifference} from the opposite direction, starting from a
KL-minimizing probability flow; the two derivations share the identity and
little else.  Ours is the starting point for the Fourier stability analysis,
the annealing schedule, and the gradient-flow formalism that follow.

\subsection{Connection to Denoising Score Matching}
Theorem~\ref{thm:score_matching} places drifting in direct correspondence with the score-matching framework \citep{vincent2011connection, song2021scorebased}. Classical Denoising Score Matching trains a network $\psi_\theta$ to \emph{approximate} the score $\nabla \log p_\sigma$ of a smoothed data distribution by regressing against the perturbation-kernel score $-(x-y)/\sigma^2$ \citep{vincent2011connection}; at inference, $\psi_\theta$ is then evaluated repeatedly to integrate an SDE or ODE. Drifting \citep{deng2026drifting} instead parameterizes a \emph{sampler} $f_\theta$ whose pushforward defines $q = (f_\theta)_\#\mathcal{N}(0,I)$, and reads off the score difference $\sigma^2\bigl(\nabla \log p_\sigma - \nabla \log q_\sigma\bigr)$ exactly and non-parametrically from Parzen estimates of $p_\sigma$ and $q_\sigma$ on the current minibatch. The neural network thus never represents a vector field: the score is computed from particles on the fly, used only at training time to push $f_\theta$ toward $p$, and \emph{vanishes from the pipeline at inference}, where sampling reduces to a single forward pass of $f_\theta$.

\subsection{Continuous-Time Limit and McKean--Vlasov Structure}
\label{sec:continuous}

The discrete training update $x_{n+1}=x_n+V_{p,q_n}(x_n)$ naturally admits a
continuous-time limit in which each sample evolves as
\begin{equation}
\label{eq:time-continuous}
  \frac{dx_t}{dt} = V_{p,\,q(t)}(x_t),
\end{equation}
with the generator distribution satisfying the continuity equation
\begin{equation}
\label{eq:continuity}
  \partial_t q(t,x) + \nabla_x\cdot\bigl(q(t,x)\,V_{p,\,q(t)}(x)\bigr) = 0.
\end{equation}
Derivation details are in \ref{app:continuous_time}.  Under the Gaussian
kernel, Eq. \eqref{eq:continuity} is the \emph{McKean--Vlasov equation},
well-studied in kinetic theory \citep{villani2002review}.  Contrary to Diffusion and Flow models, because the velocity field is prescribed (not learned), the dynamics are amenable to
the stability analysis in \S\ref{sec:analysis}.

\section{Theoretical Consequences of the Identity}
\label{sec:analysis}

Theorem~\ref{thm:score_matching} allow us to resolve all three foundational
questions.  We treat each in turn.

\subsection{Identifiability}

\begin{proposition}[Identifiability]
\label{prop:identifiability}
If $V_{p,q}^{(\sigma)}(x) = 0$ for all $x \in \R^d$ and $\sigma > 0$,
then $p = q$.
\end{proposition}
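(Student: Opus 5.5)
The plan is to work at a single fixed bandwidth $\sigma>0$, so the hypothesis is used only for one $\sigma$. By Theorem~\ref{thm:score_matching}, $V_{p,q}^{(\sigma)}\equiv 0$ means
\begin{equation*}
\sigma^2\,\nabla_x \log \frac{p_\sigma(x)}{q_\sigma(x)} = 0 \quad \text{for all } x\in\R^d .
\end{equation*}
The argument then reduces to $p_\sigma=q_\sigma$, and finally to $p=q$ by inverting the Gaussian convolution.

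\textbf{Step 1: the ratio is constant.} Since $p$ and $q$ are probability measures and $\varphi_\sigma>0$ everywhere, both $p_\sigma=p*\varphi_\sigma$ and $q_\sigma=q*\varphi_\sigma$ are strictly positive and $C^\infty$ on $\R^d$. Differentiation under the integral is justified because $\varphi_\sigma$ and all its derivatives are bounded. So $\log(p_\sigma/q_\sigma)$ is a smooth function on the connected set $\R^d$ whose gradient vanishes identically. It is therefore constant, giving $p_\sigma = c\, q_\sigma$ for some $c>0$.

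\textbf{Step 2: the constant is $1$.} Integrate over $\R^d$. By Fubini, $\int p_\sigma = \int p \cdot \int\varphi_\sigma = 1$, and likewise $\int q_\sigma = 1$. Hence $c=1$ and $p*\varphi_\sigma = q*\varphi_\sigma$.

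\textbf{Step 3: injectivity of Gaussian convolution.} Take Fourier transforms. For finite measures the convolution theorem gives $\hat p(\xi)\,\hat\varphi_\sigma(\xi) = \hat q(\xi)\,\hat\varphi_\sigma(\xi)$. Here $\hat\varphi_\sigma(\xi)=e^{-\sigma^2|\xi|^2/2}$ never vanishes, so $\hat p=\hat q$ on all of $\R^d$. Characteristic functions determine Borel probability measures uniquely (Lévy uniqueness), so $p=q$.

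I expect the main point needing care to be Step 3, specifically the setting in which it is stated. If $p$ and $q$ are general probability measures rather than densities, one must use the characteristic-function form of the convolution theorem rather than an $L^1$ one. The key fact is that the Gaussian's Fourier transform has no zeros. This is exactly what would fail for a kernel whose transform vanishes on a set, and it is why the Gaussian choice gives identifiability. Steps 1--2 are routine, provided positivity of $p_\sigma$ and $q_\sigma$ is noted so that the logarithm is defined everywhere. The proof also shows that the hypothesis "for all $\sigma$" is stronger than needed: a single $\sigma$ suffices.
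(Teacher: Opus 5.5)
Your proposal is correct and follows the paper's proof essentially step for step. Like the paper, you fix a single $\sigma$, use Theorem~\ref{thm:score_matching} to show the log-ratio is constant, normalize to obtain $p_\sigma = q_\sigma$, and conclude from the nonvanishing Gaussian Fourier transform together with uniqueness of characteristic functions (the paper also uses only one bandwidth).
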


\begin{proof}
See \ref{app:identifiability}.
\end{proof}

\subsection{Why the Laplacian Kernel? Landau Damping in Generative Models}
\label{sec:fourier_stability}

\begin{figure*}[t]
  \centering
  \includegraphics[width=\textwidth]{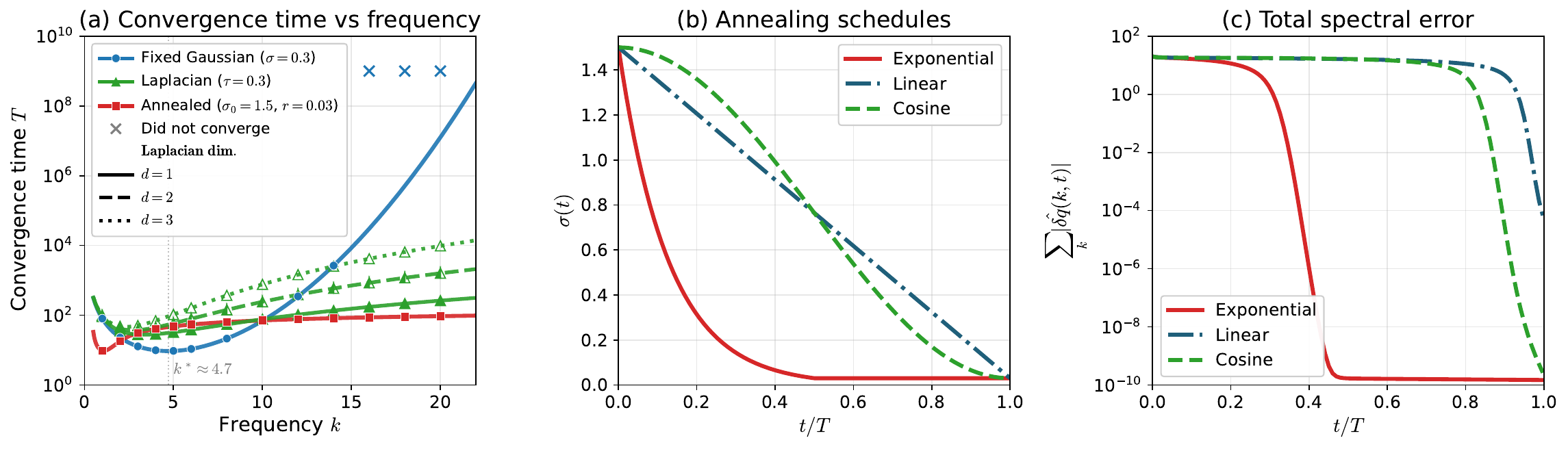}
  \caption{
    Spectral validation.
    \textbf{(a)}~Convergence time $T(k)$ vs.\ frequency: lines are analytical
    predictions from Theorem~\ref{thm:general_kernel_timescales}, markers are
    measured decay times.  The fixed-bandwidth Gaussian exhibits exponential
    slowdown (Landau damping); the Laplacian kernel yields polynomial scaling;
    the annealed Gaussian eliminates the bottleneck entirely.
    \textbf{(b)}~Annealing schedules $\sigma(t)$.
    \textbf{(c)}~Total spectral error under different schedules.
    Details in \ref{app:spectral_check}.
  }
  \label{fig:spectral_validation}
\end{figure*}

We can exploit the analytical expression of the drift operator in Equation \ref{eq:drift_score} and linearize  the PDE in Eq. \eqref{eq:continuity} around equilibrium, writing
$q(t,x)=p(x)+\delta(t,x)$ with $\delta$ small.  Under the local-homogeneity
approximation (\ref{app:linearization}), the linearized dynamics for the
smoothed perturbation $\delta_\sigma=\varphi_\sigma*\delta$ reduce to
\begin{equation}
\label{eq:diffusion}
  \partial_t\delta(x,t) = \sigma^2\Delta\bigl(\delta_\sigma(x,t)\bigr).
\end{equation}

\begin{theorem}[Mode-resolved convergence timescales]
\label{thm:general_kernel_timescales}
Let $\kappa\in L^1(\R^d)$ be an even kernel.  Under the linearized dynamics
$\partial_t\delta = c_\kappa\Delta(\kappa*\delta)$, each Fourier mode evolves as
\[
  \partial_t\hat\delta(\xi,t) = -\lambda_\kappa(\xi)\,\hat\delta(\xi,t),
  \qquad
  \lambda_\kappa(\xi) = c_\kappa|\xi|^2\hat\kappa(\xi).
\]
The convergence timescale of mode $\xi$ is
$\tau_\kappa(\xi)=1/\lambda_\kappa(\xi)=(c_\kappa|\xi|^2\hat\kappa(\xi))^{-1}$.
\end{theorem}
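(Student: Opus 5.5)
The plan is to apply the spatial Fourier transform $\hat f(\xi)=\int_{\R^d} f(x)e^{-i\xi\cdot x}\,dx$ to both sides of $\partial_t\delta = c_\kappa\Delta(\kappa*\delta)$. This turns the linear PDE into a decoupled family of scalar ODEs, one for each frequency $\xi$, and each ODE can then be solved explicitly. The argument needs two ingredients. First, the Laplacian acts as the multiplier $-|\xi|^2$. Second, the convolution theorem turns $\kappa*\delta$ into the product $\hat\kappa\,\hat\delta$.

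\textbf{Step 1: function space.} I will take $\delta(\cdot,t)\in L^1(\R^d)\cap L^2(\R^d)$ for each $t$. If more regularity is wanted, I will work with tempered distributions, in which case $\Delta$ is defined weakly. Since $\kappa\in L^1$, Young's inequality gives $\kappa*\delta\in L^1\cap L^2$. Its transform is $\hat\kappa\hat\delta$, and $\hat\kappa$ is bounded and continuous by Riemann--Lebesgue. Because $\kappa$ is even, $\hat\kappa(\xi)=\int\kappa(x)\cos(\xi\cdot x)\,dx$ is real-valued and even. This is what makes $\lambda_\kappa$ a real rate rather than a complex one, so the modes decay or grow without oscillating.

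\textbf{Step 2: differentiate in time.} I will exchange $\partial_t$ with the Fourier transform. This is justified by dominated convergence when $\partial_t\delta$ is locally uniformly integrable in $t$, or it holds directly in the distributional sense. The identity $\widehat{\Delta g}(\xi)=-|\xi|^2\hat g(\xi)$ follows by integrating by parts twice, and it also holds distributionally. Combining the two ingredients gives
\[
\partial_t\hat\delta(\xi,t)=-c_\kappa|\xi|^2\hat\kappa(\xi)\,\hat\delta(\xi,t)=-\lambda_\kappa(\xi)\hat\delta(\xi,t).
\]

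\textbf{Step 3: solve the mode equations.} For fixed $\xi$ this is a linear scalar ODE, so
\[
\hat\delta(\xi,t)=e^{-\lambda_\kappa(\xi)t}\,\hat\delta(\xi,0).
\]
The mode therefore decays by a factor of $e$ after time $\tau_\kappa(\xi)=1/\lambda_\kappa(\xi)$, which is the stated timescale. It is meaningful as a decay time exactly when $\lambda_\kappa(\xi)>0$, that is, when $c_\kappa>0$, $\xi\neq0$ and $\hat\kappa(\xi)>0$. For the kernels of interest $\hat\kappa$ is strictly positive: the Gaussian has $\hat\varphi_\sigma(\xi)=e^{-\sigma^2|\xi|^2/2}$, and the Laplacian is positive definite, with a Fourier transform that is a positive rational-type function. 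At $\xi=0$ the rate vanishes, which is consistent with conservation of mass, $\hat\delta(0,t)=\int\delta=0$.

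\textbf{Main obstacle.} The only real difficulty is rigor at the level of function spaces: justifying the interchange of $\partial_t$ with the transform and making sense of $\Delta$ applied to $\kappa*\delta$ when $\kappa$ is merely $L^1$. I will handle both by interpreting the equation in $\mathcal S'$ and testing against Schwartz functions. Under that interpretation the computation in Step 2 is exact, because multiplication by the bounded continuous function $|\xi|^2\hat\kappa(\xi)$ is well defined for Fourier transforms of $L^2$ data on compact frequency sets, and the ODE in Step 3 can then be solved pointwise for almost every $\xi$.
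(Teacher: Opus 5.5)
Your proposal is correct and follows the paper's proof: Fourier transform in space, convolution becoming multiplication together with $\widehat{\Delta f}(\xi)=-|\xi|^2\hat f(\xi)$, then solving the scalar ODE to get $\hat\delta(\xi,t)=e^{-\lambda_\kappa(\xi)t}\hat\delta(\xi,0)$ and reading off $\tau_\kappa=1/\lambda_\kappa$. Your additional remarks go beyond what the paper writes but do not change the argument: the $\mathcal{S}'$ justification, the role of evenness in making $\lambda_\kappa$ real, positivity of $\hat\kappa$, and the vanishing rate at $\xi=0$ being consistent with mass conservation.
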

\begin{proof}
See \ref{app:linearization}
\end{proof}

For the Gaussian kernel, the effective linearized convolution kernel is the
Gaussian kernel itself. For the exponential kernel, the same linearization
yields an equation of the form in Theorem~\ref{thm:general_kernel_timescales}, but with
the companion kernel $h_\tau(r) \propto\tau(r+\tau)e^{-r/\tau}$.

\begin{remark}[Landau damping analogy]
In plasma physics, perturbations in a collisionless plasma decay via
\emph{Landau damping}: frequency modes are damped at rates controlled by the
medium's spectral properties \citep{villani2002review, mouhot2011landau}.  In
generative drifting, the \emph{kernel} plays the role of the medium.  The
dispersion relation $\lambda_\kappa(\xi)\propto|\xi|^2\hat\kappa(\xi)$ is the
precise analogue.  To our knowledge, this identification has not appeared in
prior work on generative models.
\end{remark}

Applying Theorem~\ref{thm:general_kernel_timescales} to the two kernels of
interest makes the bottleneck explicit.

\begin{corollary}[Gaussian vs.\ Laplacian convergence times]
\label{cor:convergence_time}
To reduce all Fourier modes up to $|k|=K_{\max}$ by a factor $1/\epsilon$:
\begin{align}
  T_{\mathrm{Gauss}}
  &= \frac{\log(1/\epsilon)}{\sigma^2 K_{\max}^2}
     \exp\!\Bigl(\frac{\sigma^2 K_{\max}^2}{2}\Bigr),
  &\text{(exponential in $K_{\max}^2$),}
  \label{eq:T_gauss} \\[4pt]
  T_{\mathrm{exp}}
  &\propto \log(1/\epsilon)\;\tau^{d+1}\,K_{\max}^{d+1},
  &\text{(polynomial in $K_{\max}$).}
  \label{eq:T_exp}
\end{align}
\end{corollary}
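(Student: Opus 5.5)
We apply Theorem~\ref{thm:general_kernel_timescales} directly. Under the linearized dynamics each Fourier mode decays as $\hat\delta(\xi,t)=e^{-\lambda_\kappa(\xi)t}\hat\delta(\xi,0)$. Reducing mode $\xi$ by a factor $1/\epsilon$ therefore takes time $\log(1/\epsilon)/\lambda_\kappa(\xi)$. Reducing all modes with $|\xi|\le K_{\max}$ takes
\[
T=\log(1/\epsilon)\,\sup_{|\xi|\le K_{\max}}\lambda_\kappa(\xi)^{-1}.
\]
The proof thus reduces to identifying $c_\kappa$ and $\hat\kappa$ for each kernel and locating the slowest mode in the band. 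The low-frequency end is not the binding constraint, since the $|\xi|^{-2}$ factor is common to both kernels. In the regimes of interest (large $\sigma K_{\max}$, resp.\ large $\tau K_{\max}$), the supremum is attained at the band edge $|\xi|=K_{\max}$, where the kernel's spectral decay dominates. Short of that, one simply reports $1/\lambda_\kappa(K_{\max})$ as the timescale of the highest retained mode.

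\textbf{Gaussian case.} By \eqref{eq:diffusion} we have $c_\kappa=\sigma^2$ and $\kappa=\varphi_\sigma$, with $\hat\varphi_\sigma(\xi)=e^{-\sigma^2|\xi|^2/2}$ (normalized Fourier convention). Hence $\lambda(\xi)=\sigma^2|\xi|^2e^{-\sigma^2|\xi|^2/2}$. The function $u\mapsto ue^{-u/2}$, with $u=\sigma^2|\xi|^2$, increases for $u<2$ and decreases for $u>2$. So for $\sigma K_{\max}\ge\sqrt2$ the minimum of $\lambda$ over the upper band is at $|\xi|=K_{\max}$. Substituting gives exactly \eqref{eq:T_gauss}.

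\textbf{Exponential/Laplacian case.} We use the companion kernel $h_\tau(r)\propto\tau(r+\tau)e^{-r/\tau}$ stated after Theorem~\ref{thm:general_kernel_timescales}. The main task, and the main obstacle, is computing $\hat h_\tau(\xi)$ in $d$ dimensions. We proceed as follows:
\begin{enumerate}[leftmargin=*, itemsep=1pt, topsep=3pt]
  \item Use the radial Fourier transforms $\widehat{e^{-r/\tau}}(\xi)\propto\tau^d(1+\tau^2|\xi|^2)^{-(d+1)/2}$ and $\widehat{re^{-r/\tau}}=\tau^2\partial_\tau\bigl(\tau^{-1}\cdot\widehat{e^{-r/\tau}}\bigr)$-type identities, obtained by differentiating in the scale parameter.
  \item Note that both terms decay like $|\xi|^{-(d+1)}$ at high frequency.
  \item Conclude that $\hat h_\tau(\xi)\asymp C_\tau(\tau|\xi|)^{-(d+1)}$ for $\tau|\xi|\gg1$.
\end{enumerate}
The proportionality constants and the normalization $c_\kappa$ are absorbed into $\propto$.

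It then follows that $\lambda(\xi)\propto|\xi|^2(\tau|\xi|)^{-(d+1)}$. At the band edge, $1/\lambda\propto\tau^{d+1}K_{\max}^{d+1}$ up to the factor $K_{\max}^{-2}$ and $\tau$-dependent constants. Making this exponent bookkeeping consistent with the stated $\tau^{d+1}K_{\max}^{d+1}$ requires care about which normalization of $h_\tau$ and $c_\kappa$ is used. I would fix these by matching the form of $h_\tau$ derived in the linearization appendix. Even if the exact exponent shifts by two, the qualitative conclusion is unchanged: the timescale is polynomial in $K_{\max}$, against $\exp(\sigma^2K_{\max}^2/2)$ for the Gaussian. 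Monotonicity of $|\xi|^2\hat h_\tau$ beyond a threshold again places the worst mode at $K_{\max}$.
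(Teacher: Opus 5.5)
Your Gaussian case is correct and is the paper's argument. The gap is in the exponential case, at your step~3. Each of $\tau\,\widehat{re^{-r/\tau}}$ and $\tau^2\,\widehat{e^{-r/\tau}}$ does decay like $|\xi|^{-(d+1)}$. However, their leading coefficients are equal and opposite, so they cancel identically, and $\hat h_\tau$ is \emph{not} $\asymp(\tau|\xi|)^{-(d+1)}$.

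Carrying out your own step~1 with $\widehat{e^{-r/\tau}}=C_d\tau^d(1+\tau^2|\xi|^2)^{-(d+1)/2}$ and $\widehat{re^{-r/\tau}}=\tau^2\partial_\tau\widehat{e^{-r/\tau}}$ gives
\[
\widehat{re^{-r/\tau}}(\xi)=C_d\,\tau^{d+1}\,(d-\tau^2|\xi|^2)\,(1+\tau^2|\xi|^2)^{-(d+3)/2},
\qquad
\hat h_\tau(\xi)=C_d\,(d+1)\,\tau^{d+2}\,(1+\tau^2|\xi|^2)^{-(d+3)/2},
\]
so the true decay is $|\xi|^{-(d+3)}$. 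The quickest route is the property that defines $h_\tau$ in the linearization, $\nabla h_\tau(x)=-x\,e^{-|x|/\tau}$. In Fourier space it reads $\xi\,\hat h_\tau(\xi)=-\nabla_\xi\widehat{e^{-r/\tau}}(\xi)$, which yields the closed form immediately. Equivalently, $h_\tau(r)=\tau^2-r^2/2+r^3/(3\tau)+O(r^4)$ has no $|x|$-cusp at the origin, so its transform decays two orders faster than that of $e^{-r/\tau}$.

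This is exactly the paper's estimate $\hat h_\tau\asymp\tau^{d+2}(1+\tau^2|k|^2)^{-(d+3)/2}$. Dividing by $Z_\tau\asymp\tau^d$, with $c_\kappa=1$, gives $\lambda_{\mathrm{exp}}(k)\asymp|k|^2\tau^2(1+\tau^2|k|^2)^{-(d+3)/2}\asymp\tau^{-(d+1)}|k|^{-(d+1)}$. This rate is decreasing for $\tau^2|k|^2>2/(d+1)$, hence $T_{\mathrm{exp}}\asymp\log(1/\epsilon)\,\tau^{d+1}K_{\max}^{d+1}$.

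Your proposed repair cannot close the gap. $Z_\tau$ and $c_\kappa$ do not depend on $\xi$, so no choice of normalization can move the power of $K_{\max}$; it can only move powers of $\tau$. With your asymptotics you would at best prove $T\propto K_{\max}^{d-1}$, which is a different statement from the corollary. For $d=1$ it would even predict a $K_{\max}$-independent time. The stated exponent $d+1$ is reached only by noticing the cancellation above. The rest of your outline coincides with the paper's proof: the slowest-mode reduction via Theorem~\ref{thm:general_kernel_timescales}, followed by identifying $c_\kappa$ and $\hat\kappa$ for each kernel.
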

\begin{proof}
See \ref{app:kernel_time}
\end{proof}
The Gaussian kernel at bandwidth $\sigma$ acts as a low-pass filter with
cutoff $|k|\sim 1/\sigma$: modes above the cutoff are exponentially
suppressed.  This bottleneck is intrinsic to the \emph{kernelized particle
dynamics}, not the neural network; it persists even with a perfect function
approximator, and is mechanistically distinct from the spectral bias of
neural networks \citep{rahaman2019spectral, tancik2020fourier}.
Corollary~\ref{cor:convergence_time} gives a first principled explanation
of why \citet{deng2026drifting} may have favor the Laplacian kernel. It may also suggest their use of a multi-scale drift operator of the form $V_{p,q}^{\text{multi}}=\sum_{\sigma} V_{p,q}^{(\sigma)}$.  In Figure
\ref{fig:spectral_validation}(a) we validate the analytical predictions
numerically, details are presented in \ref{app:spectral_check}.

\subsection{Stop-Gradient as a Structural Necessity}
\label{sec:gradient_flow}
A distinctive feature of drifting is that its dynamics unfold during \emph{training} rather than inference: the sequence of generators $\{f_{\theta_i}\}$ induces distributions $q_i=(f_{\theta_i})_\#\nu$ that define a training-time probability flow. We now show this flow is the Jordan--Kinderlehrer--Otto (JKO) discretization of a Wasserstein gradient flow \citep{noauthororeditor, Ambrosio2005} of a smoothed KL functional, and that the stop-gradient operator is not a stabilization trick but the frozen-field discretization this variational scheme requires, earning the iterates $\{q_i\}$ monotone-descent guarantees toward $q=p$.

\paragraph{The smoothed KL energy.}
Fix $p\in\Pc(\Omega)$ and $\sigma>0$.  Define
\begin{equation}
\label{eq:F_sigma}
  F_\sigma[q] := \sigma^2\,\KL(q_\sigma\|p_\sigma)
  = \sigma^2\int_{\R^d} q_\sigma(x)\log\frac{q_\sigma(x)}{p_\sigma(x)}\,dx.
\end{equation}

\begin{proposition}[First variation and drift recovery]
\label{prop:first_variation_drift}
\leavevmode
\begin{enumerate}[label=\textup{(\roman*)}, leftmargin=*]
\item $F_\sigma$ is lower-semi-continuous.
\item $\delta F_\sigma/\delta q(x) =
  \sigma^2(\varphi_\sigma*\log(q_\sigma/p_\sigma))(x)$
  (up to an additive constant), and is $C^\infty$ on $\R^d$.
\item The Wasserstein gradient field
  $v_\sigma[q]:=-\nabla_x(\delta F_\sigma/\delta q)$ satisfies
  $v_\sigma[q](x)\approx -V_{p,q}^{(\sigma)}(x)$ when $\log(q_\sigma/p_\sigma)$
  varies slowly at scale $\sigma$; the error is
  $O(\sigma^2\|\nabla^2\log(q_\sigma/p_\sigma)\|_\infty)$
  (\ref{app:convolution_approximation}).
\end{enumerate}
\end{proposition}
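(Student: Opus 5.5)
For part (i), I will write $F_\sigma[q]=\sigma^2\KL(\varphi_\sigma*q\,\|\,\varphi_\sigma*p)$ and treat it as a composition. The map $q\mapsto q_\sigma$ is continuous from narrow convergence on $\Pc(\Omega)$ to narrow convergence of measures, since $\varphi_\sigma$ is bounded and continuous. In fact, for $q_n\to q$ narrowly, $q_{n,\sigma}\to q_\sigma$ pointwise, and by Scheffé also in $L^1$. The relative entropy $\KL(\cdot\|p_\sigma)$ is jointly lower semicontinuous under narrow convergence, by the Donsker--Varadhan dual representation as a supremum of continuous linear functionals. Composing these two facts gives lower semicontinuity of $F_\sigma$. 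If $\Omega$ is unbounded I will still need $p_\sigma$ to be a fixed probability measure, and I will note that the dual formula requires no moment assumption.

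For part (ii), I will take a perturbation $q^\epsilon=q+\epsilon\chi$ with $\int\chi=0$. Then $q^\epsilon_\sigma=q_\sigma+\epsilon\,\varphi_\sigma*\chi$, and differentiating under the integral gives
\[
\frac{d}{d\epsilon}F_\sigma[q^\epsilon]\Big|_{\epsilon=0}=\sigma^2\int(\varphi_\sigma*\chi)\Bigl(\log\frac{q_\sigma}{p_\sigma}+1\Bigr)dx .
\]
The $+1$ term integrates to zero. Because $\varphi_\sigma$ is even, Fubini moves the convolution onto the logarithm, giving $\int\chi\,\varphi_\sigma*\log(q_\sigma/p_\sigma)$. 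To justify dominated convergence and Fubini, I will use the fact that $q_\sigma$ and $p_\sigma$ are strictly positive and smooth. When $q$ and $p$ have finite second moments, $|\log q_\sigma|$ and $|\log p_\sigma|$ grow at most like $|x|^2/(2\sigma^2)+C$, since a Gaussian mixture is bounded above by $(2\pi\sigma^2)^{-d/2}$ and below via Jensen. Such quadratic growth is integrable against $\varphi_\sigma*\chi$. Smoothness then follows because $\varphi_\sigma*g$ is $C^\infty$ whenever $g$ has at most quadratic growth: one differentiates the Gaussian, whose derivatives of all orders decay fast enough to dominate $g$.

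For part (iii), I will compute $v_\sigma=-\sigma^2\,\varphi_\sigma*\nabla g$ with $g=\log(q_\sigma/p_\sigma)$, moving the derivative through the convolution. Theorem~\ref{thm:score_matching} gives $V^{(\sigma)}_{p,q}=-\sigma^2\nabla g$. The difference is therefore
\[
v_\sigma+V^{(\sigma)}_{p,q}=-\sigma^2\bigl(\varphi_\sigma*\nabla g-\nabla g\bigr).
\]
I will Taylor expand $\nabla g(x-y)$ around $x$. The first-order term vanishes because $\varphi_\sigma$ is symmetric. The remainder is bounded by $\|\nabla^2 g\|_\infty\int|y|\varphi_\sigma(y)\,dy=O(\sigma\sqrt d\,\|\nabla^2g\|_\infty)$, or by $O(\sigma^2\|\nabla^3 g\|_\infty)$ if I go one order further.

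I expect this bookkeeping to be the main obstacle. The stated bound $O(\sigma^2\|\nabla^2 g\|_\infty)$ matches what one gets by measuring the error of the potential $\varphi_\sigma*g-g$, which is $\tfrac{\sigma^2}{2}\Delta g+\dots$, and then interpreting the slow-variation hypothesis as meaning that $\nabla^2 g$ is small at scale $\sigma$. I will therefore phrase the estimate at the level of the field, as $\sigma^2\cdot\sigma\|\nabla^2 g\|_\infty$ in the first-order version, which is dominated by the stated $O(\sigma^2\|\nabla^2 g\|_\infty)$ for bounded $\sigma$. I will state explicitly which order of Taylor expansion is used.
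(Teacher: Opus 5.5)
There is one concrete error, in (iii), and it conceals a typo in the statement. Your formulas $v_\sigma=-\sigma^2\,\varphi_\sigma*\nabla g$ and $V^{(\sigma)}_{p,q}=-\sigma^2\nabla g$, with $g=\log(q_\sigma/p_\sigma)$, are both correct. From them,
\[
v_\sigma+V^{(\sigma)}_{p,q}=-\sigma^2\bigl(\varphi_\sigma*\nabla g+\nabla g\bigr)\approx-2\sigma^2\nabla g,
\]
which is not small. The expression you display, $-\sigma^2(\varphi_\sigma*\nabla g-\nabla g)$, equals $v_\sigma-V^{(\sigma)}_{p,q}$. Your Taylor argument therefore proves $v_\sigma[q]\approx+V^{(\sigma)}_{p,q}$, and that is the correct relation. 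The Wasserstein gradient flow of $\KL(q_\sigma\|p_\sigma)$ pushes mass toward $p$ exactly as the drift does. The paper relies on this sign when it identifies the drifting continuity equation with the gradient flow of $F_\sigma$, and the frozen-field target $x+\tau v_\sigma[q_n](x)$ with the drifting target $x+V(x)$. The paper's appendix in fact bounds $\|v_\sigma[q]+\sigma^2\nabla\log(q_\sigma/p_\sigma)\|_\infty=\|v_\sigma-V^{(\sigma)}_{p,q}\|_\infty$. So the minus sign in (iii) is a typo; you should flag it rather than reproduce it through an algebra slip.

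On the size of the error, you are more careful than the paper. The paper shows $|\varphi_\sigma*g-g|\le\frac{\sigma^2 d}{2}\|\nabla^2 g\|_\infty$ and then ``takes a gradient'' to claim $\|v_\sigma-V^{(\sigma)}_{p,q}\|_\infty\le\frac{\sigma^4 d}{2}\|\nabla^2 g\|_\infty$. A sup bound on a function does not control its gradient, and this gradient inequality fails for general $g\in C^2$. In $d=1$, $g=\epsilon\sin(kx)$ gives an error of $\sigma^2\epsilon k(1-e^{-\sigma^2k^2/2})\approx\frac{\sigma^4}{2}\epsilon k^3$, which exceeds $\frac{\sigma^4}{2}\epsilon k^2$ once $k>1$ and $\sigma k$ is small. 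Your field-level bounds are correct: $\sigma^3\sqrt d\,\|\nabla^2 g\|_\infty$ from the mean-value theorem on $\nabla g$ (symmetry plays no role at that order), or $\frac{\sigma^4 d}{2}\|\nabla^3 g\|_\infty$ from a second-order expansion. The first is $O(\sigma^2\|\nabla^2 g\|_\infty)$ for bounded $\sigma$, which is all the main text claims.

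For (i) and (ii), your routes differ from the paper's and are sound.
\begin{itemize}
\item In (i), the paper proves full narrow continuity of $F_\sigma$ by pointwise convergence of $q^{(j)}_\sigma$ plus dominated convergence. This uses compactness of $\Omega$ and the standing bounds $m_\sigma\le q_\sigma\le M_\sigma$, and the continuity is reused in the JKO convergence proof. Your composition of the narrowly continuous map $q\mapsto q_\sigma$ with Donsker--Varadhan lower semicontinuity of $\KL(\cdot\|p_\sigma)$ gives only lower semicontinuity, but needs neither assumption.
\item In (ii), the paper differentiates along transport perturbations $(\mathrm{Id}+s\xi)_\#q$, which determines only $\nabla(\delta F_\sigma/\delta q)$ on the support of $q$. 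Your flat perturbations $q+\epsilon\chi$ (take $\chi=\mu-q$ to stay in $\Pc$) determine $\delta F_\sigma/\delta q$ itself on all of $\R^d$ up to a constant. Your Jensen lower bound on $q_\sigma$ also supplies the growth control that the paper merely asserts. In exchange, the paper's route directly yields the Wasserstein-gradient form used in the Euler--Lagrange step.
\end{itemize}
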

\begin{proof}
See \ref{app:first_variation}.
\end{proof}

The drifting continuity equation \ref{eq:continuity} is thus (up to this quantified approximation)
the Wasserstein gradient flow of $F_\sigma$.

\paragraph{The JKO scheme and the frozen-field step.}
\begin{figure}[t]
  \centering
  \includegraphics[width=\linewidth]{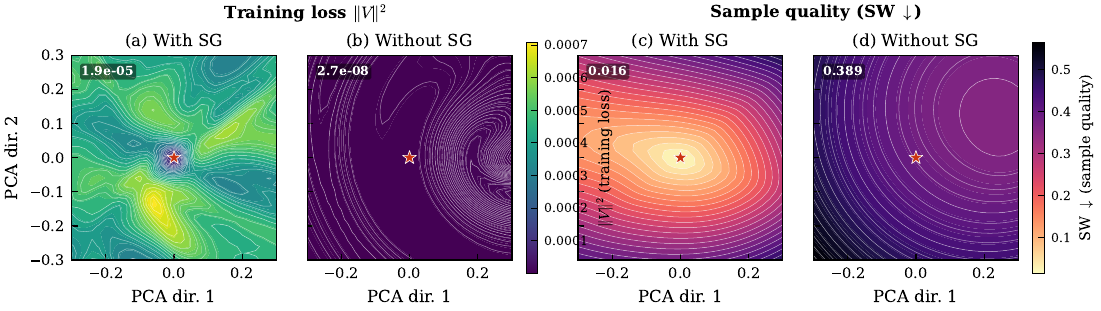}
  \caption{
    Loss landscapes with and without stop-gradient, projected onto the top
    two principal gradient-variation directions.
    \textbf{(a,b)}~Training loss $\|V\|^2$: without SG the minimum is
    ${\sim}100\times$ deeper.
    \textbf{(c,d)}~Sliced Wasserstein distance: with SG the loss minimum
    coincides with low distributional error; without SG the deep minimum
    corresponds to poor sample quality.  Red star: trained solution.
    Details in \ref{app:stop_grad_check}.
  }
  \label{fig:loss_landscape}
\end{figure}
The Jordan--Kinderlehrer--Otto scheme
\citep{jordan1998variational} discretizes the gradient flow as
\begin{equation}
\label{eq:JKO}
  q_{n+1}^\tau \in \argmin_{q\in\Pc(\Omega)}
  \Bigl\{F_\sigma[q]+\frac{1}{2\tau}W_2^2(q,q_n^\tau)\Bigr\}.
\end{equation}
We show that, given the functional \ref{eq:F_sigma}, this scheme is well-posed (unique minimizer, monotone energy descent;
\ref{app:JKO_wellposed}) and converges as $\tau\to0$
(\ref{app:convergence}).  The Euler--Lagrange condition reveals an implicit
velocity:
\begin{equation}
\label{eq:velocity_identification}
  \frac{T_n(x)-x}{\tau} = v_\sigma[\hat q](x)
  = -\sigma^2\nabla(\varphi_\sigma*\log(\hat q_\sigma/p_\sigma))(x),
  \quad \hat q\text{-a.e.},
\end{equation}
where $T_n$ is the optimal map from $\hat q:=q_{n+1}^\tau$ to $q_n^\tau$.
The velocity has the same form as the drift, \emph{but is evaluated at the
unknown minimizer $\hat q$}, making the JKO step implicit and intractable.

The practical drifting algorithm resolves this by replacing $q_{n+1}^\tau$
with $q_n^\tau$ in the velocity field, the \emph{frozen-field} (explicit
Euler) approximation:
\begin{equation}
\label{eq:frozen_map}
  S_n(x):=x+\tau v_\sigma[q_n^\tau](x), \qquad
  \tilde q_{n+1}^\tau:=(S_n)_\#q_n^\tau.
\end{equation}
Where $(S_n)_\#q_n^\tau$ represents the pushforward of the distribution $q_n^{\tau}$ by map $S_n$. We show in detail in \ref{app:consistency}, that given our assumptions, this approximation introduces an error of only $W_2(\tilde q_{n+1}^\tau,q_{n+1}^\tau)=O(\tau^{3/2})$ \label{eq:consistency}
.  The chain of approximations is:
\[
\underbrace{q_{n+1}^\tau=\argmin\{F_\sigma+\tfrac{1}{2\tau}W_2^2(\cdot,q_n^\tau)\}}_{%
\text{JKO (implicit, intractable)}}
\;\xrightarrow{\text{freeze velocity}}\;
\underbrace{\tilde q_{n+1}^\tau=(S_n)_\#q_n^\tau}_{%
\text{Explicit Euler}}
\;\xrightarrow{\text{parametric fit}}\;
\underbrace{\min_\theta\;\mathcal{L}(\theta)}_{%
\text{Stop-gradient loss}}
\]

\paragraph{Parametric implementation and necessity proof.}
Fitting the frozen-field target with a generator $G_\theta$ yields the loss
\begin{equation}
\label{eq:stopgrad_loss}
\mathcal{L}(\theta)=\E_{\varepsilon\sim\nu}\Bigl[
\bigl\|G_\theta(\varepsilon)-\sg\bigl[
G_{\theta_n}(\varepsilon)+\tau v_\sigma[q_{\theta_n}](G_{\theta_n}(\varepsilon))
\bigr]\bigr\|^2\Bigr],
\end{equation}
which is precisely the drifting loss of \citet{deng2026drifting} with
$\eta=\tau$.

\begin{theorem}[Stop-gradient preserves Wasserstein discretization]
\label{thm:stopgrad}
\leavevmode
\begin{enumerate}[label=\textup{(\roman*)}, leftmargin=*]
\item \label{item:structural}
\textup{(Structural correspondence.)}
A global minimizer of~\eqref{eq:stopgrad_loss} satisfies
$q_{\theta_{n+1}}=(S_n)_\#q_{\theta_n}$, the frozen-field explicit Euler
step of the Wasserstein gradient flow of $F_\sigma$.

\item \label{item:necessity}
\textup{(Stop-gradient necessity.)}
Removing $\sg[\cdot]$ yields
$\mathcal{L}_{\mathrm{coupled}}(\theta)=\tau^2\|v_\sigma[q_\theta]\|_{L^2(q_\theta)}^2$,
whose gradient includes distribution-feedback terms
$(D_q v_\sigma)\nabla_\theta q_\theta$.  Stationarity can now be achieved by
\emph{drift collapse}---reducing the velocity norm without transporting mass
toward $p$---rather than by distributional convergence.
\end{enumerate}
\end{theorem}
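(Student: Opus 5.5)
For part \ref{item:structural}, the argument is a direct pointwise minimization. Under $\sg[\cdot]$ the target
\[
Y_n(\varepsilon):=G_{\theta_n}(\varepsilon)+\tau v_\sigma[q_{\theta_n}](G_{\theta_n}(\varepsilon))=S_n(G_{\theta_n}(\varepsilon))
\]
is a fixed function of $\varepsilon$ that does not depend on $\theta$. So $\mathcal{L}(\theta)=\E_{\varepsilon\sim\nu}\|G_\theta(\varepsilon)-Y_n(\varepsilon)\|^2\ge 0$ is an ordinary $L^2(\nu)$ regression.

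I will assume the generator class is expressive enough to contain $Y_n$ up to $\nu$-null sets. This is the same idealization already used in the paper when it writes ``perfect function approximator''. Under that assumption, a global minimizer attains $\mathcal{L}=0$, so $G_{\theta_{n+1}}=S_n\circ G_{\theta_n}$ holds $\nu$-a.e.

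Pushing forward gives
\[
q_{\theta_{n+1}}=(G_{\theta_{n+1}})_\#\nu=(S_n)_\#(G_{\theta_n})_\#\nu=(S_n)_\#q_{\theta_n}.
\]
This is exactly \eqref{eq:frozen_map}. By Proposition~\ref{prop:first_variation_drift}(ii), $v_\sigma$ is $C^\infty$, so $S_n$ is measurable and the pushforward is well defined. Proposition~\ref{prop:first_variation_drift}(iii) identifies this step with the drifting update, and the $O(\tau^{3/2})$ consistency bound ties it to the JKO step \eqref{eq:JKO}. If the class is not expressive enough, I would add a remark that the minimizer is the $L^2(\nu)$ projection of $Y_n$ and that $W_2(q_{\theta_{n+1}},(S_n)_\#q_{\theta_n})\le\sqrt{\mathcal{L}(\theta_{n+1})}$, using the coupling induced by $\varepsilon$.

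For part \ref{item:necessity}, removing $\sg$ means the current iterate $\theta$ appears both in the prediction and in the target. The loss becomes
\[
\mathcal{L}_{\mathrm{coupled}}(\theta)=\E_\varepsilon\|\tau v_\sigma[q_\theta](G_\theta(\varepsilon))\|^2=\tau^2\|v_\sigma[q_\theta]\|^2_{L^2(q_\theta)}.
\]
I will differentiate this by the chain rule, which produces three groups of terms:
\begin{itemize}
\item a pointwise term $2\tau^2\E\big[(\nabla_x v_\sigma)^\top v_\sigma\cdot\partial_\theta G_\theta\big]$;
\item the feedback term $2\tau^2\E\big[v_\sigma\cdot (D_q v_\sigma)[\partial_\theta q_\theta]\big]$, where $D_q v_\sigma$ is obtained by differentiating $-\sigma^2\nabla(\varphi_\sigma*\log(q_\sigma/p_\sigma))$ in $q$, i.e.\ $-\sigma^2\nabla\varphi_\sigma*(\varphi_\sigma*\dot q)/q_\sigma$;
\item the measure-variation term arising from $q_\theta$ in the $L^2(q_\theta)$ weight.
\end{itemize}

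I will then compare with the stop-gradient gradient at $\theta=\theta_n$, which is $-2\tau\E[v_\sigma\cdot\partial_\theta G_\theta]$. This is exactly the Wasserstein-descent direction, because its pairing with $\partial_\theta G$ equals $-\frac{d}{d\theta}F_\sigma[q_\theta]$ through the first variation in Proposition~\ref{prop:first_variation_drift}(ii). The coupled gradient lacks this structure. Its stationary points satisfy a different equation, and nothing forces them to be stationary for $F_\sigma$.

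The hardest step is turning ``drift collapse'' into a precise statement rather than a heuristic. I would exhibit an explicit mechanism. The first option is to show that $\|v_\sigma[q]\|_{L^2(q)}$ can be made small by concentrating $q$ in a region where $\nabla\log p_\sigma$ is small, for instance near a mode or critical point of $p_\sigma$, while $q\neq p$ and $F_\sigma[q]$ stays bounded away from $0$. The simplest test case is a Dirac mass $q=\delta_{x^*}$ with $\nabla\log p_\sigma(x^*)=0$. Then $q_\sigma=\varphi_\sigma(\cdot-x^*)$ and $v_\sigma\approx-\sigma^2\nabla\log(p_\sigma/q_\sigma)$ vanishes at $x^*$ by symmetry, which gives $\mathcal{L}_{\mathrm{coupled}}\approx 0$ with $q\ne p$.

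The second option addresses whether such collapsed states are actually reachable, i.e.\ not merely points where the coupled loss is small but points the coupled gradient flow can converge to. Here I would argue that the coupled gradient vanishes at such a $q$ while the stop-gradient direction $-\E[v_\sigma\partial_\theta G]$ does not vanish, once the $O(\sigma^2\|\nabla^2\log\|)$ approximation error from Proposition~\ref{prop:first_variation_drift}(iii) is controlled. I expect this last control to be the delicate part of the argument.
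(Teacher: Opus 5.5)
Your part (i) is the paper's argument: fixed target, realizability, $G_{\theta_{n+1}}=S_n\circ G_{\theta_n}$ $\nu$-a.e., then push forward. The non-realizable bound $W_2(q_{\theta_{n+1}},(S_n)_\#q_{\theta_n})\le\mathcal{L}(\theta_{n+1})^{1/2}$, via the coupling induced by $\varepsilon$, is a correct addition.

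For (ii), the paper does exactly your first two steps and stops. It computes $\mathcal{L}_{\mathrm{coupled}}=\tau^2\|v_\sigma[q_\theta]\|^2_{L^2(q_\theta)}$ and splits the gradient into the pathwise term $(D_xv)\nabla_\theta G_\theta$ and the feedback term $(D_qv)\nabla_\theta q_\theta$. It then asserts that drift-collapse minima \emph{can} occur, without constructing one.

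Two small corrections:
\begin{itemize}
\item Your third ``measure-variation'' term is not a separate contribution. In the form $\E_\varepsilon\|v_\sigma[q_\theta](G_\theta(\varepsilon))\|^2$, the $\theta$-dependence of the weight $q_\theta$ \emph{is} the pathwise term, so listing both double counts.
\item The sign should be $v_\sigma\approx+\sigma^2\nabla\log(p_\sigma/q_\sigma)$.
\end{itemize}
Your observation that the stop-gradient gradient at $\theta_n$ is $-2\tau\E[v_\sigma\cdot\partial_\theta G_\theta]=2\tau\,\nabla_\theta F_\sigma[q_\theta]$ is correct, and sharper than anything the paper states.

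Your first option, a state with $\mathcal{L}_{\mathrm{coupled}}\approx 0$ and $q\neq p$, already supports what the paper literally asserts. The second option, which you flag as delicate, fails for a structural reason, not because of the error term in Proposition~\ref{prop:first_variation_drift}(iii).

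If $\mathcal{L}_{\mathrm{coupled}}(\theta^*)=0$, then $v_\sigma[q_{\theta^*}]=0$ $q_{\theta^*}$-a.e. Hence the stop-gradient gradient $-2\tau\E[v_\sigma\cdot\partial_\theta G_\theta]$ vanishes too, and $S_n=\Id$ on the support. So every zero of the coupled loss is also a fixed point of the frozen-field step.

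Your Dirac example is such a point: $S_n(x^*)=x^*$, and $\delta_{x^*}$ is a Wasserstein critical point of $F_\sigma$. It therefore cannot exhibit ``coupled gradient zero, stop-gradient direction nonzero''. To make the example exact rather than approximate, take $x^*$ critical for $g:=\varphi_\sigma*\log p_\sigma$, which exists since $g\to-\infty$ at infinity. Then $v_\sigma[\delta_{x^*}](x^*)=\sigma^2\nabla g(x^*)=0$.

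The separation between the two schemes is second order. $\mathcal{L}_{\mathrm{coupled}}$ is $\tau^2$ times the squared norm of the Wasserstein gradient of $F_\sigma$. So every critical point of $F_\sigma$ is a global minimizer (value $0$) of the coupled objective, whereas descent on $F_\sigma$ generically escapes critical points that are not local minima.

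The collapsed Dirac is such a non-minimizing critical point. Let $X_0\sim p$ and $X_\sigma=X_0+\sigma\,\mathcal{N}(0,I)$. Then $\nabla^2\log p_\sigma+\sigma^{-2}I=\sigma^{-4}\mathrm{Cov}(X_0\mid X_\sigma=\cdot)\succeq0$, and this is nonzero unless $p$ is a Dirac. Consequently, spreading $\delta_{x^*}$ to the law of $x^*+sZ$, for a suitable mean-zero $Z$, lowers $F_\sigma$ at order $s^2$.

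The other route is to exhibit stationary points of $\mathcal{L}_{\mathrm{coupled}}$ with positive loss, where $\nabla_\theta F_\sigma\neq0$. Either route would give the word ``now'' in the statement real content, which neither your plan nor the paper's proof currently does.
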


\begin{proof}
See \ref{app:stopgrad}.
\end{proof}

\begin{figure*}[t]
  \centering
  \includegraphics[width=\textwidth]{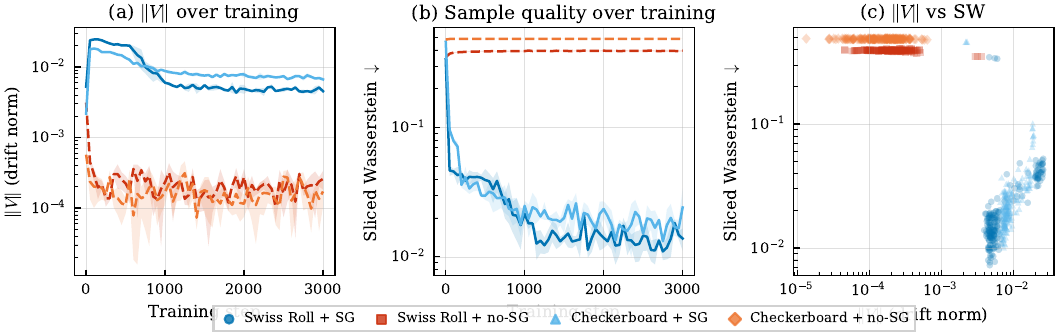}
  \caption{
    Drift norm vs.\ distributional distance during training on synthetic 2D
    targets.
    \textbf{(a)}~Mean drift norm $\|V\|$.
    \textbf{(b)}~Sliced Wasserstein distance.
    \textbf{(c)}~Log--log scatter across training steps and seeds.
    With stop-gradient (solid), the two quantities are strongly correlated
    ($r>0.95$) and jointly decay.  Without stop-gradient (dashed), the drift
    norm collapses to ${\sim}10^{-8}$ while the Wasserstein distance remains
    at $0.389$---a direct demonstration of drift collapse
    (Theorem~\ref{thm:stopgrad}\ref{item:necessity}).
    Details in \ref{app:stop_grad_check}.
  }
  \label{fig:drift_vs_sw}
\end{figure*}

Figures~\ref{fig:loss_landscape} and~\ref{fig:drift_vs_sw} provide direct
empirical confirmation: with stop-gradient the loss minimum aligns with low
distributional error; without it, drift collapse produces a spuriously deep
minimum with no distributional improvement.
\section{From Theory to Practice}
\label{sec:practical}

This extensive analysis is not only descriptive but also prescriptive, indeed the score-matching identity, spectral analysis, and gradient-flow formalism each yield a concrete practical contribution.  We treat them in turn.

\subsection{Kernel Comparison: Why the Laplacian Works}
\label{sec:kernel_comparison}

Applying Theorem~\ref{thm:general_kernel_timescales} to the Gaussian kernel
gives the mode-resolved timescale
\begin{equation}
\label{eq:timescale}
  \tau(k,\sigma) = \frac{1}{\sigma^2|k|^2}\exp\!\Bigl(\frac{\sigma^2|k|^2}{2}\Bigr).
\end{equation}
For $\sigma|k|\gg1$, this grows exponentially: modes above the cutoff
$|k|\sim1/\sigma$ are frozen out.  For the Laplacian kernel,
$\hat k_\tau(\xi)\propto(1+\tau^2|\xi|^2)^{-(d+1)/2}$, giving the
high-frequency rate $\lambda_{\mathrm{exp}}(k)\asymp\tau^{-(d+1)}|k|^{-(d-1)}$, only
polynomial slowdown.  Corollary~\ref{cor:convergence_time} quantifies the
difference, providing the first principled justification for the empirical
kernel preference in \citet{deng2026drifting}.


\subsection{Exponential Bandwidth Annealing}
\label{sec:annealing}

The spectral analysis reveals a potential issue: the Gaussian kernel provides
identifiability and a clean score-matching form, but suffers an exponential
high-frequency bottleneck; the Laplacian kernel avoids this bottleneck but
lacks these analytical properties.  We resolve this problem with the following bandwidth annealing schedule:
\begin{equation}
\label{eq:annealing_schedule}
  \sigma(t) = \sigma_0\,e^{-rt},
\end{equation}
held constant at $\sigma_{\min}$ once reached.  The exponential form ensures
that the optimal-rate window ($\sigma^2|k|^2=2$) sweeps continuously across
increasing frequencies, activating each mode at its maximal convergence rate.

\begin{theorem}[Convergence time under exponential annealing]
\label{thm:annealing_convergence}
Let $\sigma(t)$ follow~Eq.\eqref{eq:annealing_schedule} until $\sigma_{\min}$,
then remain constant.  To reduce all modes $|k|\le K_{\max}$ by factor
$1/\epsilon$,
\begin{equation}
\label{eq:T_annealing}
  T_{\epsilon,\mathrm{anneal}} \lesssim
  \frac{1}{r}\log\!\Bigl(\frac{\sigma_0}{\sigma_{\min}}\Bigr)
  + \frac{1}{\lambda_{\min}(K_{\max})}\log(1/\epsilon),
  \quad
  \lambda_{\min}=\sigma_{\min}^2 K_{\max}^2\exp\!\Bigl(-\tfrac12\sigma_{\min}^2 K_{\max}^2\Bigr).
\end{equation}
\end{theorem}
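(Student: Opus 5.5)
We propose to work entirely within the linearized Fourier picture of Theorem~\ref{thm:general_kernel_timescales}, now with a time-dependent bandwidth. For the Gaussian kernel with $c_\kappa=\sigma^2$ and $\hat\kappa(\xi)=e^{-\sigma^2|\xi|^2/2}$, each mode obeys
\[
\partial_t\hat\delta(k,t)=-\lambda(k,\sigma(t))\,\hat\delta(k,t),\qquad
\lambda(k,\sigma)=\sigma^2|k|^2e^{-\sigma^2|k|^2/2}.
\]
Because the modes decouple even when $\sigma$ varies with $t$, this integrates exactly:
\[
|\hat\delta(k,T)|=|\hat\delta(k,0)|\exp\Bigl(-\int_0^T\lambda(k,\sigma(t))\,dt\Bigr).
\]
The goal therefore reduces to a uniform lower bound. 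We need $\int_0^T\lambda(k,\sigma(t))\,dt\ge\log(1/\epsilon)$ for every $|k|\le K_{\max}$.

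First we split the time axis at $T_1:=\frac1r\log(\sigma_0/\sigma_{\min})$, the moment the schedule reaches $\sigma_{\min}$. On $[T_1,\infty)$ the bandwidth is frozen at $\sigma_{\min}$. Write $u=\sigma_{\min}^2|k|^2$, so that $\lambda=ue^{-u/2}$. This function increases on $u\le2$ and decreases afterwards.

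We then treat two regimes.

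\textbf{High modes.} Take $K_{\max}$ in the natural regime $\sigma_{\min}^2K_{\max}^2\ge2$ and $|k|$ near $K_{\max}$, so that $\lambda(k,\sigma_{\min})\ge\lambda_{\min}(K_{\max})$. Since $\lambda\ge0$, the first phase does no harm. Waiting an additional time $\log(1/\epsilon)/\lambda_{\min}$ after $T_1$ then suffices, which gives exactly the two terms of Eq.~\eqref{eq:T_annealing}.

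\textbf{Lower modes.} This is where the annealing does the work. For a mode $|k|\ge 1/\sigma_0$ (up to $\sqrt2$ factors), the window $\sigma(t)|k|\in[1,2]$ is visited at some $t_k\le T_1$. Under the exponential schedule, $u(t)=\sigma_0^2|k|^2e^{-2rt}$ satisfies $du/dt=-2ru$. Hence
\[
\int\lambda\,dt=\int_{u_{\rm end}}^{u_0}\frac{e^{-u/2}}{2r}\,du,
\]
so each mode crossing the optimal window $u\approx2$ accumulates at least a constant $c/r$ of damping, independent of $|k|$. This scale invariance is the key point of the exponential schedule. When $\lambda$ at $\sigma_{\min}$ is smaller than $\lambda_{\min}(K_{\max})$ (modes with $u$ very small), we combine this crossing contribution with the residual phase. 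Modes with $|k|<1/\sigma_0$ never cross the window. For them we use that $\lambda$ is monotone in $u$ on $u\le2$ together with the "$\lesssim$", absorbing constants and assuming the relevant band $|k|\in[k_{\min},K_{\max}]$ is the one of interest.

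The main obstacle is the uniformity over $|k|\le K_{\max}$. The minimum of $\lambda(\cdot,\sigma_{\min})$ over the band is not attained at $K_{\max}$ alone: small $|k|$ also gives small $\lambda$. The honest statement therefore needs either the crossing bound $c/r\ge\log(1/\epsilon)$ (or a repeated/slower schedule) for low modes, or an interpretation of $\lambda_{\min}$ as the minimum over the band. We would first try to show that $\min_{|k|\in[1/\sigma_0,K_{\max}]}\lambda(k,\sigma_{\min})$ is attained at an endpoint, using unimodality of $ue^{-u/2}$. We would then handle the low endpoint via the annealing-phase accumulation and state the needed condition on $r$ explicitly inside the $\lesssim$.
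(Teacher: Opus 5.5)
Your route is the one the paper takes. You integrate each mode with the time-dependent rate $\lambda(k,\sigma(t))$ and split at $T_{\mathrm{ann}}=\frac1r\log(\sigma_0/\sigma_{\min})$. You use $du=-2ru\,dt$ to get $\Lambda(k,T_{\mathrm{ann}})=\frac1r\bigl[e^{-\sigma_{\min}^2|k|^2/2}-e^{-\sigma_0^2|k|^2/2}\bigr]\ge 0$. In the frozen phase you bound the rate below by $\lambda_{\min}$, using that $ue^{-u/2}$ decreases on $u\ge 2$ when $\sigma_{\min}^2K_{\max}^2\ge 2$.

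The only divergence is at the uniformity step, and there your caution is justified. The paper asserts $\lambda(k,\sigma_{\min})\ge\lambda_{\min}$ for \emph{all} $|k|\le K_{\max}$ from that monotonicity. But monotonicity only covers modes with $\sigma_{\min}^2|k|^2\ge 2$; unimodality extends this somewhat below $2$, never down to small $|k|$. At the paper's own headline choice $\sigma_{\min}=\sqrt2/K_{\max}$, $\lambda_{\min}=2/e$ is the global maximum of $ue^{-u/2}$, so every $|k|<K_{\max}$ decays strictly slower than $\lambda_{\min}$ after annealing. The paper's remark that low modes ``accumulate decay of order $1/r$'' is your crossing bound, but it has three problems. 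It is not used in the final inequality; only $\Lambda(k,T_{\mathrm{ann}})\ge 0$ is. It suffices only when $1/r\gtrsim\log(1/\epsilon)$. And it ignores modes with $\sigma_0^2|k|^2\le 2$, which never reach the peak. So the statement as written needs exactly the extra hypotheses you anticipate.

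Of your two repairs, keep the first. Replacing $\lambda_{\min}$ by the minimum over the band would, at $\sigma_{\min}=\sqrt2/K_{\max}$, give a rate of order $k_{\min}^2/K_{\max}^2$ and lose the $O(\log K_{\max})$ conclusion.

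The condition on $r$ closes the argument without mixing the two phases. Assume $k_{\min}\le|k|\le K_{\max}$ with, say, $\sigma_0 k_{\min}\ge 2$, and $r\le (e^{-1}-e^{-2})/\log(1/\epsilon)$. If $\sigma_{\min}^2|k|^2\le 2$, the annealing phase alone gives $\Lambda(k,T_{\mathrm{ann}})\ge \frac1r(e^{-1}-e^{-2})\ge\log(1/\epsilon)$. If $\sigma_{\min}^2|k|^2\ge 2$, then $\sigma_{\min}^2|k|^2\in[2,\sigma_{\min}^2K_{\max}^2]$, where $ue^{-u/2}$ is decreasing. The frozen rate is then at least $\lambda_{\min}$, and an extra $\log(1/\epsilon)/\lambda_{\min}$ suffices.

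This gives the displayed bound with constant $1$. The first term is still $\frac1r\log(\sigma_0K_{\max}/\sqrt2)$, logarithmic in $K_{\max}$. Some cutoff $k_{\min}>0$ is unavoidable, since $\lambda(k,\sigma)\to 0$ as $|k|\to 0$ for any bandwidth.
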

\begin{proof}
Full proof in \ref{app:annealing}.
\end{proof}

The total time depends only \emph{logarithmically} on $K_{\max}$, compared
to exponentially for the fixed-bandwidth Gaussian.
Figure~\ref{fig:spectral_validation}(b,c) confirms that the exponential
schedule achieves the fastest spectral error reduction across all modes.

\begin{figure}[t]
  \centering
  \includegraphics[width=\linewidth]{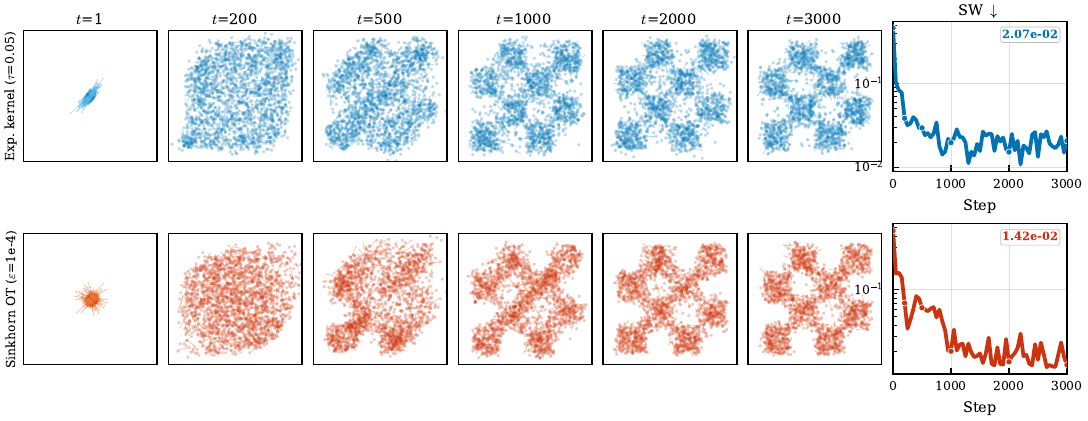}
  \caption{
    Sinkhorn-derived drift (bottom row) vs.\ Laplacian-kernel drift (top row)
    on the checkerboard distribution.  Training snapshots with drift vectors
    overlaid; rightmost panels show sliced Wasserstein distance over training.
    Both converge successfully (final SW $1.42\times10^{-2}$ and
    $2.07\times10^{-2}$ respectively), demonstrating that the gradient-flow
    template of \S\ref{sec:principled_drift} yields practical operators
    beyond the original kernel family.  Details are provided in \ref{app:sinkhorn_check}.
  }
  \label{fig:sinkhorn_vs_kernel}
\end{figure}

\subsection{Principled Drift Operator Construction}
\label{sec:principled_drift}

The original operator introduced in \cite{deng2026drifting} was motivated through anti-symmetry properties of the kernel mean-shift, and was presented as one specific instantiation of a drifting model. Our gradient-flow formalism reveals the kernel drift as a particular instance of a universal variational template: given any sufficiently regular discrepancy functional $F$ on $\Pc(\Omega)$, define
\begin{equation}
\label{eq:general_drift}
  V(x) = -\nabla_x\frac{\delta F}{\delta q}(x).
\end{equation}
The entire JKO--frozen-field--stop-gradient chain of \S\ref{sec:gradient_flow} applies verbatim to any $F$ satisfying: \textbf{(a)} lower semi-continuity in $W_2$; \textbf{(b)} existence and smoothness of $\delta F/\delta q$; and \textbf{(c)} $F[q]\ge 0$ with $F[q]=0\Leftrightarrow q=p$. This supersedes the anti-symmetry requirement of \citet{deng2026drifting}: condition~(c) alone is what guarantees that the gradient flow drives $q$ to $p$, and (a)--(b) ensure that the JKO scheme is well-posed and that the velocity field is well-defined.

\paragraph{The Sinkhorn divergence as an energy functional.}
A natural candidate that satisfies (a)--(c) is the \emph{Sinkhorn divergence} of \citet{feydy2018interpolatingoptimaltransportmmd}, built from entropy-regularized optimal transport. For two probability measures $\mu,\nu\in\Pc(\Omega)$ and a cost $c(x,y)=\|x-y\|^r$ ($r\in\{1,2\}$), the entropic OT problem with regularization $\varepsilon>0$ is
\begin{equation}
\label{eq:entropic_ot}
  \mathrm{OT}_\varepsilon(\mu,\nu)
  \;=\;
  \min_{\pi\in\Pi(\mu,\nu)}
  \int c(x,y)\,d\pi(x,y)
  \;+\;\varepsilon\,\KL\!\bigl(\pi\,\big\|\,\mu\otimes\nu\bigr),
\end{equation}
where $\Pi(\mu,\nu)$ denotes couplings with marginals $\mu,\nu$. The unique minimizer has the Gibbs form $\pi^\star(x,y) \propto \exp\!\bigl((f(x)+g(y)-c(x,y))/\varepsilon\bigr)$, with dual potentials $(f,g)$ obtained by Sinkhorn iterations \citep{peyre2020computationaloptimaltransport}; we use the standard log-domain implementation, fully detailed in \ref{sec:sinkhorn-implementation}. The Sinkhorn divergence is the \emph{debiased} version
\begin{equation}
\label{eq:sinkhorn_energy}
  S_\varepsilon(q,p) \;=\; \mathrm{OT}_\varepsilon(q,p)
  -\tfrac12\,\mathrm{OT}_\varepsilon(q,q)-\tfrac12\,\mathrm{OT}_\varepsilon(p,p),
\end{equation}
which interpolates between the MMD ($\varepsilon\to\infty$) and unregularized OT ($\varepsilon\to 0$), and removes the entropic bias of $\mathrm{OT}_\varepsilon$ alone. By Theorem~1 of \citet{feydy2018interpolatingoptimaltransportmmd}, $S_\varepsilon$ is symmetric, convex in each argument, positive-definite, and metrizes weak convergence; in particular $S_\varepsilon(q,p)=0\Leftrightarrow q=p$, so condition~(c) holds. Lower semi-continuity and smoothness of the first variation follow from Proposition~2 there, securing (a)--(b). Plugging $F[q]:=S_\varepsilon(q,p)$ into the template~\eqref{eq:general_drift} therefore yields a drift with full JKO--frozen-field--stop-gradient guarantees.

\paragraph{The Sinkhorn drift operator.}
We can compute the variational gradient explicitly at the particle level. For an empirical generator distribution $q=\frac{1}{N}\sum_i\delta_{x_i}$, let $\pi_{q\to p}^\star$ and $\pi_{q\to q}^\star$ be the entropic couplings between $(q,p)$ and $(q,q)$, and define the \emph{barycentric projections}
\begin{equation}
\label{eq:bary_projections}
  T_{q\to p}(x_i)
  \;=\; \frac{\sum_j [\pi_{q\to p}^\star]_{ij}\,y_j}{\sum_j [\pi_{q\to p}^\star]_{ij}},
  \qquad
  T_{q\to q}(x_i)
  \;=\; \frac{\sum_j [\pi_{q\to q}^\star]_{ij}\,x_j}{\sum_j [\pi_{q\to q}^\star]_{ij}}.
\end{equation}
The map $T_{q\to p}$ sends each generated particle to its entropic OT image in the data distribution, and $T_{q\to q}$ does the same against the generator's own particles.

\begin{proposition}[Sinkhorn drift]
\label{prop:sinkhorn_drift}
Let $F[q]=S_\varepsilon(q,p)$ with $S_\varepsilon$ as in~\eqref{eq:sinkhorn_energy}. Then the variational drift operator~\eqref{eq:general_drift} admits the closed form
\begin{equation}
\label{eq:sinkhorn_drift}
  V_{\mathrm{SK}}(x_i)
  \;\propto\;
  T_{q\to p}(x_i)\;-\;T_{q\to q}(x_i),
\end{equation}
where $T_{q\to p},T_{q\to q}$ are the barycentric projections of~\eqref{eq:bary_projections}. The operator $V_{\mathrm{SK}}$ inherits all JKO--frozen-field--stop-gradient guarantees of \S\ref{sec:gradient_flow}.
\end{proposition}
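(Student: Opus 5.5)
The plan is to compute the first variation of $F[q]=S_\varepsilon(q,p)$ term by term, using the standard dual characterization of entropic OT, and then take the spatial gradient at each particle. We write $\mathrm{OT}_\varepsilon(\mu,\nu)$ in its dual form
\[
\mathrm{OT}_\varepsilon(\mu,\nu)=\max_{f,g}\int f\,d\mu+\int g\,d\nu-\varepsilon\int\bigl(e^{(f(x)+g(y)-c(x,y))/\varepsilon}-1\bigr)\,d\mu\,d\nu .
\]
By an envelope (Danskin) argument, as in Proposition~2 of \citet{feydy2018interpolatingoptimaltransportmmd}, the first variation in $\mu$ is the optimal potential $f_{\mu,\nu}$ up to an additive constant.

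For the symmetric term $\mathrm{OT}_\varepsilon(q,q)$, the two arguments vary together. Symmetry gives $f_{q,q}=g_{q,q}$, so the first variation is $2f_{q,q}$, and the prefactor $\tfrac12$ cancels the $2$. The $p$-term is constant in $q$. Hence
\[
\frac{\delta F}{\delta q}=f_{q,p}-f_{q,q}.
\]

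Next we differentiate the potentials in $x$. The optimality condition writes each potential as a soft-min:
\[
f_{q,p}(x)=-\varepsilon\log\int e^{(g_{q,p}(y)-c(x,y))/\varepsilon}\,dp(y).
\]
Differentiating under the integral gives
\[
\nabla f_{q,p}(x)=\int\nabla_x c(x,y)\,\pi^\star(y\mid x)\,dy,
\]
where the conditional coupling is $\pi^\star(y\mid x)\propto e^{(f(x)+g(y)-c(x,y))/\varepsilon}p(y)$. At a particle $x_i$ this is the row-normalized coupling $[\pi^\star_{q\to p}]_{ij}/\sum_j[\pi^\star_{q\to p}]_{ij}$.

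With the quadratic cost $c=\tfrac12\|x-y\|^2$ (or $\|x-y\|^2$, absorbing the factor $2$ into the proportionality constant), we have $\nabla_x c=x-y$. Therefore
\[
\nabla f_{q,p}(x_i)=x_i-T_{q\to p}(x_i),
\]
and by the identical computation $\nabla f_{q,q}(x_i)=x_i-T_{q\to q}(x_i)$. The $x_i$ terms cancel in the difference, so
\[
V_{\mathrm{SK}}(x_i)=-\nabla(f_{q,p}-f_{q,q})(x_i)\propto T_{q\to p}(x_i)-T_{q\to q}(x_i),
\]
which is the claimed closed form. For $r=1$ the same derivation produces unit-vector weights instead of plain barycenters. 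We therefore state the closed form for the quadratic cost and remark that the "$\propto$" hides the cost-dependent constant.

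The inheritance of the JKO--frozen-field--stop-gradient guarantees follows from the template of \S\ref{sec:principled_drift}. We check (a)--(c) for $S_\varepsilon$, as already argued via Theorem~1 and Proposition~2 of \citet{feydy2018interpolatingoptimaltransportmmd}. We then rerun the arguments of \S\ref{sec:gradient_flow} with $F_\sigma$ replaced by $S_\varepsilon$; this covers JKO well-posedness, the Euler--Lagrange velocity identification, the $O(\tau^{3/2})$ frozen-field consistency, and Theorem~\ref{thm:stopgrad}.

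The main obstacle is the rigorous justification of the first-variation step, especially for the $\mathrm{OT}_\varepsilon(q,q)$ term. There the potential depends on $q$ through both marginals, and the envelope argument needs uniqueness of potentials (after normalization) and their differentiability in $\mu$. I would first try to cite Feydy et al.'s Proposition~2 directly, which gives exactly $\nabla_\mu S_\varepsilon=f_{q,p}-f_{q,q}$.

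A secondary gap is that well-posedness in \S\ref{sec:gradient_flow} may rely on properties specific to $F_\sigma$, such as convexity or smoothness bounds. For $S_\varepsilon$ these must be replaced by its convexity in $q$ and by the Lipschitz/smoothness bounds on the Sinkhorn potentials. Those bounds hold on compact $\Omega$, so I would assume compactness of $\Omega$ throughout.
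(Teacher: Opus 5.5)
Your proposal follows the same route as the paper's proof in the Sinkhorn appendix. You invoke Proposition~2 of Feydy et al.\ to get $\delta F/\delta q=f_{q,p}-f_{q,q}$, identify the spatial gradients of the Sinkhorn potentials with barycentric projections under the entropic couplings, and obtain the inherited guarantees from conditions (a)--(c) of the variational template. Your version is more careful than the paper's in two places: the explicit soft-min differentiation and the $\tfrac12\cdot 2$ bookkeeping for $\mathrm{OT}_\varepsilon(q,q)$ are what make the $x_i$ terms cancel, which the paper's particle-gradient formula leaves implicit; and your restriction to the quadratic cost is warranted, because for $r=1$ (which the paper nominally allows) $\nabla_x c$ is a unit vector and the barycentric closed form fails.
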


\begin{proof}
Full derivation in \ref{app:sinkhorn_check}.
\end{proof}

Equation~\eqref{eq:sinkhorn_drift} is a striking parallel to the kernel drift $V_{p,q}=V_p^+-V_q^-$: an \emph{attractive} term pulling particles toward $p$, minus a \emph{repulsive} term pushing them away from one another. But here the structure does not stem from a hand-chosen kernel and anti-symmetry argument; it arises mechanically from differentiating a divergence satisfying (a)--(c), with the attractive and repulsive halves coming from the two $\mathrm{OT}_\varepsilon$ terms in the debiasing. The geometry has changed from kernel mean-shift to entropic optimal transport, but the gradient-flow scaffolding is unchanged. We plug $V_{\mathrm{SK}}$ into the standard stop-gradient loss $\mathcal{L}(\theta)=\E_\epsilon[\|f_\theta(\epsilon)-\sg[\tilde x]\|^2]$ with $\tilde x=x+V_{\mathrm{SK}}(x)$; full algorithmic details, including log-domain Sinkhorn iterations and a diagonal-masking heuristic that prevents $T_{q\to q}$ from collapsing to the identity as $q\to p$, are deferred to \ref{sec:sinkhorn-implementation}.

Figure~\ref{fig:sinkhorn_vs_kernel} shows that $V_{\mathrm{SK}}$ converges comparably to the Laplacian kernel drift on the checkerboard target, demonstrating that the gradient-flow template yields practical operators well beyond the original kernel family.

\subsection{Image Generation Experiments}
\label{sec:image_exp}

We now put our theoretical insights to the test on the realistic ImageNet generation setting, placing ourselves in the same ablation experimental setup as \citet{deng2026drifting} and testing three variants suggested by the theory: the Gaussian-kernel drift (which has the clean score-matching identity of Theorem~\ref{thm:score_matching}), the exponentially annealed Gaussian (which targets the spectral bottleneck of \S\ref{sec:fourier_stability}), and the Sinkhorn-divergence drift of Proposition~\ref{prop:sinkhorn_drift} (which exercises the gradient-flow template). 

\paragraph{Experimental setup.}
We train class-conditional generators on ImageNet-$256$ ($1000$ classes) in the latent space of the Stable Diffusion VAE; FID is computed after VAE-decoding back to pixels. The generator is \texttt{DitGen-B}, a one-step DiT backbone. The drift loss is computed on multi-scale activations of a frozen MAE-pretrained ResNet-18, summed across streams. All variants share the same architecture, SSL representation, and AdamW optimizer and are trained for $30{,}000$ total steps on $16$ H100s, \emph{only the drift operator differs} across rows of Table~\ref{tab:imagenet-ablation}. We report one-step (NFE\,$=$\,1) FID and IS, selecting the best CFG scale per checkpoint. Full hyperparameters, kernel and Sinkhorn algorithms, qualitative samples, and per-step wall-clock comparisons are provided in \ref{app:image_exp}.

\begin{table}[t]
  \caption{Ablation on ImageNet $256{\times}256$ class-conditional generation. All variants share the same B/2 architecture, latent-MAE SSL representation, and training schedule of \citet{deng2026drifting}; only the drift operator differs. We report FID ($\downarrow$) and IS ($\uparrow$) under one-step (NFE\,$=$\,1) generation. Best in \textbf{bold}, second-best \underline{underlined}.}
  \label{tab:imagenet-ablation}
  \centering
  \small
  \setlength{\tabcolsep}{4pt}
  \begin{tabularx}{\linewidth}{@{}lXccc@{}}
    \toprule
    \textbf{Drift operator} & \textbf{Bandwidth} & \textbf{NFE} & \textbf{FID\,$\downarrow$} & \textbf{IS\,$\uparrow$} \\
    \midrule
    Laplacian \citep{deng2026drifting} & $\tau$ fixed                            & 1 & \underline{8.55}          & 148.0 \\
    Gaussian (ours)                    & $\sigma$ fixed                          & 1 & 8.69          & \textbf{157.0} \\
    Gaussian (ours)                    & $\sigma(t)=\sigma_0 e^{-rt}$            & 1 & \textbf{8.36} & \underline{154.2} \\
    Sinkhorn (ours)                    & $\varepsilon$ fixed                     & 1 & 8.81          & 135.48 \\
    \bottomrule
  \end{tabularx}
\end{table}

\paragraph{Discussion.}
Three observations stand out. \emph{First}, the fixed-bandwidth Gaussian kernel is competitive with the Laplacian baseline (FID 8.69 vs.\ 8.55, IS 157.0 vs.\ 148.0), validating that the kernel admitting a clean score-matching identity (Theorem~\ref{thm:score_matching}) and full identifiability guarantees (Proposition~\ref{prop:identifiability}) is also a strong empirical performer; the theoretical convenience of the Gaussian does not come at an empirical cost. \emph{Second}, the annealed schedule $\sigma(t)=\sigma_0 e^{-rt}$ improves FID over both the fixed Gaussian and the Laplacian baseline (8.36 vs.\ 8.69 and 8.55), confirming that the spectral analysis of \S\ref{sec:fourier_stability} ports from the linearized toy regime to a high-dimensional generative setting. We note that in this regime the high-frequency bottleneck is not the only relevant axis, since the polynomial rate of the Laplacian kernel itself depends on dimension (Corollary~\ref{cor:convergence_time}); that the annealing prescription nonetheless yields a measurable improvement suggests the spectral diagnosis remains informative beyond its strict linearization assumptions. \emph{Third}, the Sinkhorn-divergence drift converges to a regime comparable with the kernel-based operators (FID 8.81) despite arising from optimal-transport geometry rather than kernel mean-shift; this is direct evidence that the gradient-flow template $V=-\nabla(\delta\mathcal{F}/\delta q)$ of \S\ref{sec:gradient_flow} is constructive, and that the anti-symmetry property invoked in \citet{deng2026drifting} is not fundamental: descent only requires conditions (a)--(c) of \S\ref{sec:principled_drift}. Taken together, these results show that each of the three theoretical contributions (the score-matching identity, the spectral analysis, and the variational template) translates into a concrete experimental signal at ImageNet scale, and not merely on the linearized toy problems of \S\ref{app:experiments}.

\section{Related Work}
\label{sec:related}

\paragraph{Generative modeling and the three-objective tradeoff.}
Modern generative modeling balances sample quality, mode coverage, and sampling speed. GANs \citep{goodfellow2014generative, brock2019large} optimize quality and speed but are prone to mode collapse; diffusion and flow models \citep{pmlr-v37-sohl-dickstein15, ho2020denoising, song2021scorebased, lipman2023flow} attain state-of-the-art quality and coverage at the cost of tens to hundreds of network evaluations per sample, with distillation \citep{zhou2024sid, song2023consistency} compressing them via expensive two-stage pipelines. Drifting \citep{deng2026drifting} obtains one-step generation directly; the closest contemporary entries are Inductive Moment Matching \citep{zhou2025imm}, using MMD with Gaussian kernels, and \citet{li2026longshort}, who reinterpret drifting through long-horizon flow maps.

\paragraph{Unification under score matching.}
A parallel line of work has progressively unified continuous generative modeling under the score-matching umbrella: \citet{vincent2011connection} identified denoiser training with score estimation, \citet{song2021scorebased} reframed diffusion's noise prediction via Tweedie's formula, and \citet{gao2025diffusionmeetsflow} placed flow matching in the same family. This unification has been productive, percolating denoising-score-matching theory and convergence analyses across paradigms. Theorem~\ref{thm:score_matching} extends the trajectory to drifting and reduces identifiability to Fourier injectivity. The score-difference identity was independently obtained by \citet{weber2023scoredifference} and \citet{lai2026unifiedviewdriftingscorebased}; neither develops the Fourier stability analysis, Landau-damping diagnosis, annealing schedule, or gradient-flow formalism it enables.

\paragraph{Spectral methods.}
Frequency analyses have been productive across kernel learning and optimization, with a recurring finding that learning is biased toward low frequencies \citep{rahaman2019spectral, tancik2020fourier}. Our analysis is mechanistically distinct from this network-level bias: the timescales we identify are intrinsic to the \emph{kernelized particle dynamics} and persist with a perfect approximator, originating in the dispersion relation $\lambda_\kappa(\xi)=c_\kappa|\xi|^2\hat\kappa(\xi)$. The Landau-damping interpretation \citep{villani2002review, mouhot2011landau} makes this sharp and, to our knowledge, is new in generative modeling. \citet{carrillo2024stein} prove exponential KL convergence for continuous SVGD \citep{liu2016svgd, liu2017svgd_gradient_flow} via a Stein--log-Sobolev inequality but yield only a single global rate; we instead recover mode-resolved timescales.

\paragraph{Optimal transport.}
OT plays increasingly central roles in modern generative modeling: stabilizing GAN losses \citep{salimans2018improvinggansusingoptimal}, straightening flow-matching couplings \citep{tong2024improvinggeneralizingflowbasedgenerative}, and underpinning Schr\"odinger-bridge methods \citep{debortoli2023diffusionschrodingerbridgeapplications}. We use it in a different register as a tool of \emph{analysis}. The Jordan--Kinderlehrer--Otto framework \citep{jordan1998variational, noauthororeditor, Ambrosio2005} reveals drifting as the explicit-Euler discretization of a Wasserstein gradient flow of the smoothed KL energy, with stop-gradient as the required frozen-field structure. This grounds drifting in a well-understood variational theory and, as \S\ref{sec:gradient_flow} shows, opens a constructive route to new operators; our Sinkhorn-divergence drift, drawing on entropic OT \citep{peyre2020computationaloptimaltransport, feydy2018interpolatingoptimaltransportmmd}, is one such instance.
\section{Conclusion}
\label{sec:conclusion}

We began with a simple question, what does the drifting kernel actually
compute? We demonstrate that under a Gaussian kernel, it is a score
difference on smoothed distributions.  This single identity serves as a foundation that helps resolve all three open theoretical questions simultaneously, each through a distinct
analytical lens.

Identifiability follows from Fourier injectivity.  Landau damping offers an explanation for kernel selection: the Gaussian kernel exponentially suppresses
high-frequency modes, justifying the empirical preference for the Laplacian
and motivating the exponential annealing schedule $\sigma(t)=\sigma_0e^{-rt}$
that reduces convergence time from $\exp(O(K_{\max}^2))$ to $O(\log K_{\max})$.
The stop-gradient operator, far from being a heuristic, is the frozen-field
discretization mandated by the JKO scheme; removing it leads to drift collapse,
a spurious minimization that reduces the loss without transporting mass toward
the data distribution.

Beyond resolving these open problems, the gradient-flow formalism yields
a modular template $V=-\nabla(\delta F/\delta q)$ that generalizes drift
construction beyond the original kernel family, demonstrated here with a
Sinkhorn divergence drift.

\section{Limitations and Future Work}
\label{sec:limitations}

\paragraph{Scope of the spectral analysis.} The Landau-damping diagnosis of \S\ref{sec:fourier_stability} is local: it linearizes around equilibrium and treats the background density as homogeneous, so its predicted timescales hold strictly only in the small-perturbation regime. This linearization around fixed points yields necessary conditions and points toward concrete fixes, whose worth is then settled by experiment. The annealing schedule of \S\ref{sec:annealing} is such a fix, and its gains on ImageNet (Table~\ref{tab:imagenet-ablation}) suggest the diagnosis remains informative well beyond its strict assumptions. A fully nonlinear treatment, perhaps via the machinery of \citet{mouhot2011landau}, would extend the guarantees into the regime where most training occurs.

\paragraph{Toward optimal kernels.} Our analysis exposes a tension between analytical structure (the Gaussian kernel yields the score-matching identity and full identifiability) and high-frequency resolution (the Laplacian kernel's polynomial decay resolves fine modes faster). Annealing reconciles the two empirically, but the broader point is that kernel choice jointly controls the available theory and the convergence dynamics. A natural open direction is to design kernels with prescribed spectral profiles, tailored to the target distribution's frequency content, while retaining the Gaussian's analytical scaffolding.

\paragraph{Beyond the JKO discretization.} JKO is one discretization of the Wasserstein gradient flow, and the stop-gradient loss inherits its first-order explicit-Euler character. Other minimizing-movement schemes, higher-order discretizations, splitting methods, or Nesterov-style acceleration on Wasserstein space may yield faster or more stable training. Our score-matching identity and gradient-flow formalism make this principled rather than speculative. A systematic study of these discretizations is a natural next step.

\bibliography{references}
\bibliographystyle{tmlr}

\appendix

\newpage
\section*{Appendix Contents}

To ease navigation, here is the table of contents of the appendix.

\vspace{0.5em}
\noindent
\begin{tabular}{@{}p{0.5em}p{0.5em}l@{\hfill}r@{}}
\multicolumn{3}{@{}l}{\textbf{\hyperref[sec:score_connection]{A\quad Detailed Proofs for Section~\ref{sec:score_connection}}}} & \pageref{app:score} \\[2pt]
& \multicolumn{2}{@{}l}{\hyperref[app:score]{A.1\quad Proof of Proposition~\ref{eq:drift_score} (Gaussian Drift as Score Difference)}} & \pageref{app:score} \\
& \multicolumn{2}{@{}l}{\hyperref[app:identifiability]{A.2\quad Proof of Proposition~\ref{prop:identifiability} (Identifiability)}} & \pageref{app:identifiability} \\[6pt]

\multicolumn{3}{@{}l}{\textbf{\hyperref[sec:continuous]{B\quad Detailed Proofs for Section~\ref{sec:continuous}}}} & \pageref{app:continuous_time} \\[2pt]
& \multicolumn{2}{@{}l}{\hyperref[app:continuous_time]{B.1\quad Continuous-Time Limit}} & \pageref{app:continuous_time} \\
& \multicolumn{2}{@{}l}{\hyperref[app:linearization]{B.2\quad Linearization and Dispersion Relation}} & \pageref{app:linearization} \\
& \multicolumn{2}{@{}l}{\hyperref[thm:general_kernel_timescales]{B.3\quad Proof of Theorem~\ref{thm:general_kernel_timescales}}} & \\
& \multicolumn{2}{@{}l}{\hyperref[app:kernel_time]{B.4\quad Proof of Corollary~\ref{cor:convergence_time}}} & \pageref{app:kernel_time} \\
& \multicolumn{2}{@{}l}{\hyperref[app:annealing]{B.5\quad Proof of Theorem~\ref{thm:annealing_convergence} (Annealing Convergence Time)}} & \pageref{app:annealing} \\[6pt]

\multicolumn{3}{@{}l}{\textbf{\hyperref[app:hitch]{C\quad A Hitchhiker's Guide to Optimal Transport and Wasserstein Gradient Flows}}} & \pageref{app:hitch} \\[6pt]

\multicolumn{3}{@{}l}{\textbf{\hyperref[app:gradient_flow]{D\quad Complete Proofs for Section~\ref{sec:gradient_flow}}}} & \pageref{app:gradient_flow} \\[2pt]
& \multicolumn{2}{@{}l}{\hyperref[app:energy_properties]{D.1\quad Properties of the Smoothed KL Energy}} & \pageref{app:energy_properties} \\
& & \hyperref[app:lsc]{D.1.1\quad Lower Semicontinuity of $F_\sigma$} \pageref{app:lsc} \\
& & \hyperref[app:first_variation]{D.1.2\quad First Variation of $F_\sigma$} \pageref{app:first_variation} \\
& & \hyperref[app:convolution_approximation]{D.1.3\quad The Convolution Approximation Error} \pageref{app:convolution_approximation} \\
& & \hyperref[app:lipschitz_velocity]{D.1.4\quad Lipschitz Continuity of the Velocity Field} \pageref{app:lipschitz_velocity} \\
& \multicolumn{2}{@{}l}{\hyperref[app:JKO]{D.2\quad The JKO Scheme for $F_\sigma$}} & \pageref{app:JKO} \\
& & \hyperref[app:JKO_wellposed]{D.2.1\quad Well-Posedness} \pageref{app:JKO_wellposed} \\
& & \hyperref[app:EL]{D.2.2\quad Euler--Lagrange Condition} \pageref{app:EL} \\
& & \hyperref[app:apriori]{D.2.3\quad A Priori Estimates} \pageref{app:apriori} \\
& & \hyperref[app:convergence]{D.2.4\quad Convergence to Gradient Flow} \pageref{app:convergence} \\
& \multicolumn{2}{@{}l}{\hyperref[app:consistency]{D.3\quad Proof of Consistency (Implicit--Explicit)}} & \pageref{app:consistency} \\
& \multicolumn{2}{@{}l}{\hyperref[app:stopgrad]{D.4\quad Proof of Theorem~\ref{thm:stopgrad} (Stop-Gradient)}} & \pageref{app:stopgrad} \\[6pt]

\multicolumn{3}{@{}l}{\textbf{\hyperref[app:schedule_ablation]{E\quad Schedule Ablation}}} & \pageref{app:schedule_ablation} \\[6pt]

\multicolumn{3}{@{}l}{\textbf{\hyperref[app:experiments]{F\quad Toy Experiments}}} & \pageref{app:experiments} \\[2pt]
& \multicolumn{2}{@{}l}{\hyperref[app:score_check]{F.1\quad Score-matching verification}} & \pageref{app:score_check} \\
& \multicolumn{2}{@{}l}{\hyperref[app:spectral_check]{F.2\quad Spectral convergence times}} & \pageref{app:spectral_check} \\
& \multicolumn{2}{@{}l}{\hyperref[app:stop_grad_check]{F.3\quad Stop-gradient necessity}} & \pageref{app:stop_grad_check} \\
& \multicolumn{2}{@{}l}{\hyperref[app:sinkhorn_check]{F.4\quad Sinkhorn drift feasibility}} & \pageref{app:sinkhorn_check} \\[6pt]

\multicolumn{3}{@{}l}{\textbf{\hyperref[app:image_exp]{G\quad Image Generation Experimental Details}}} & \pageref{app:image_exp} \\[2pt]
& \multicolumn{2}{@{}l}{\hyperref[app:image_exp:setup]{G.1\quad Dataset, representation space, and architecture}} & \pageref{app:image_exp:setup} \\
& \multicolumn{2}{@{}l}{G.2\quad Training protocol} & \\
& \multicolumn{2}{@{}l}{\hyperref[sec:gaussian-drift-impl]{G.3\quad Gaussian kernel drift}} & \pageref{sec:gaussian-drift-impl} \\
& \multicolumn{2}{@{}l}{\hyperref[sec:sinkhorn-implementation]{G.4\quad Sinkhorn-Drift operator implementations}} & \pageref{sec:sinkhorn-implementation} \\
& \multicolumn{2}{@{}l}{\hyperref[app:image_exp:samples]{G.5\quad Qualitative samples}} & \pageref{app:image_exp:samples} \\
& \multicolumn{2}{@{}l}{\hyperref[app:image_exp:compute]{G.6\quad Compute, memory, and implementation notes}} & \pageref{app:image_exp:compute} \\
\end{tabular}

\clearpage

\section{Detailed Proofs for Section~\ref{sec:score_connection}}

\subsection{Proof of Proposition~\ref{eq:drift_score} (Gaussian Drift as Score Difference)}
\label{app:score}

\begin{proof}
Recall the Gaussian kernel
\[
\varphi_\sigma(z) := (2\pi\sigma^2)^{-d/2}\exp\!\Big(-\frac{\|z\|^2}{2\sigma^2}\Big),
\qquad
p_\sigma := p * \varphi_\sigma.
\]
We start from the attractive drift definition
\[
V_p^{+}(x)=
\frac{\int \varphi_\sigma(x-y)(y-x)p(y)\,dy}
{\int \varphi_\sigma(x-y)p(y)\,dy}.
\]
A direct computation gives the gradient identity (differentiate w.r.t.\ $x$)
\[
\nabla_x \varphi_\sigma(x-y)
= -\frac{x-y}{\sigma^2}\,\varphi_\sigma(x-y),
\]
hence
\[
(y-x)\,\varphi_\sigma(x-y) = \sigma^2 \nabla_x \varphi_\sigma(x-y).
\]
Substituting into the numerator,
\begin{align*}
\int \varphi_\sigma(x-y)(y-x)p(y)\,dy
&= \sigma^2 \int \nabla_x \varphi_\sigma(x-y)\,p(y)\,dy \\
&= \sigma^2 \nabla_x \int \varphi_\sigma(x-y)\,p(y)\,dy \\
&= \sigma^2 \nabla_x p_\sigma(x).
\end{align*}
The interchange of $\nabla_x$ and $\int dy$ is justified by dominated convergence:
both $\varphi_\sigma(\cdot)$ and $\nabla\varphi_\sigma(\cdot)$ are bounded by
Gaussian envelopes integrable against $p(y)\,dy$.

The denominator is exactly $p_\sigma(x)$, so
\[
V_p^{+}(x) = \sigma^2 \frac{\nabla p_\sigma(x)}{p_\sigma(x)}
= \sigma^2 \nabla \log p_\sigma(x).
\]
The same calculation applied to the repulsive drift $V_q^{-}$ (with $q$ in place of $p$)
yields
\[
V_q^{-}(x)=\sigma^2 \nabla \log q_\sigma(x),
\qquad q_\sigma := q * \varphi_\sigma.
\]
Therefore,
\[
V^{(\sigma)}_{p,q}(x)=V_p^{+}(x)-V_q^{-}(x)
=\sigma^2 \nabla \log\frac{p_\sigma(x)}{q_\sigma(x)}.
\]
\end{proof}

\subsection{Proof of Proposition~\ref{prop:identifiability} (Identifiability)}
\label{app:identifiability}

\begin{proof}
Assume that for some fixed $\sigma>0$,
\[
V^{(\sigma)}_{p,q}(x) = 0 \quad \text{for all }x.
\]
By \cref{eq:drift_score},
\[
\sigma^2 \nabla \log\frac{p_\sigma(x)}{q_\sigma(x)} = 0
\quad\Rightarrow\quad
\nabla \log\frac{p_\sigma(x)}{q_\sigma(x)} = 0,
\]
so $\log(p_\sigma/q_\sigma)$ is constant on $\R^d$. Hence there exists $C>0$ such that
\[
p_\sigma(x) = C\,q_\sigma(x) \qquad \text{for all }x.
\]
Integrating both sides and using that $p_\sigma,q_\sigma$ are probability densities
(indeed $\int p_\sigma = \int p = 1$ and likewise for $q$),
we obtain $C=1$, hence $p_\sigma=q_\sigma$.

Now take Fourier transforms. Convolution with a Gaussian multiplies Fourier transforms:
\[
\widehat{p_\sigma}(\xi) = \hat p(\xi)\,\widehat{\varphi_\sigma}(\xi),
\qquad
\widehat{\varphi_\sigma}(\xi)=e^{-\sigma^2\|\xi\|^2/2},
\]
and similarly for $q$. Since $p_\sigma=q_\sigma$,
\[
\hat p(\xi)\,e^{-\sigma^2\|\xi\|^2/2}
=
\hat q(\xi)\,e^{-\sigma^2\|\xi\|^2/2}.
\]
Because $e^{-\sigma^2\|\xi\|^2/2}>0$ for all $\xi$, we conclude $\hat p(\xi)=\hat q(\xi)$
for all $\xi$. The Fourier transform uniquely determines a probability measure
(equivalently, characteristic functions are injective), hence $p=q$.
\end{proof}

\section{Detailed Proofs for Section~\ref{sec:continuous}}

\subsection{Continuous-Time Limit}
\label{app:continuous_time}

We sketch the (standard) passage from the discrete particle update to the McKean--Vlasov
continuity equation, emphasizing what is formal and what can be made rigorous.

Consider the particle-level update
\[
x_{n+1} = x_n + \varepsilon\, V_{p,q_n}(x_n),
\qquad \varepsilon>0,
\]
and define the piecewise-constant interpolation
\[
x^\varepsilon(t) := x_n, \qquad t \in [n\varepsilon,(n+1)\varepsilon).
\]
Formally,
\[
\frac{x^\varepsilon(t+\varepsilon)-x^\varepsilon(t)}{\varepsilon}
= V_{p,q_n}(x_n).
\]
If, as $\varepsilon\to 0$:
(i) $q_n \to q(t)$ in $W_2$ at $t=n\varepsilon$,
(ii) for each fixed $q$ the map $x\mapsto V_{p,q}(x)$ is locally Lipschitz with at most
linear growth, and
(iii) the dependence $q\mapsto V_{p,q}$ is continuous in $W_2$ in a suitable sense,
then standard stability results for ODE schemes imply that $x^\varepsilon(\cdot)$
converges (along subsequences) to a limit $x(\cdot)$ satisfying the McKean--Vlasov ODE
\[
\dot x(t) = V_{p,q(t)}(x(t)).
\]

Let $q(t)$ denote the law of $x(t)$. Assume $q(t)$ admits a density and
$v(t,x):=V_{p,q(t)}(x)$ is sufficiently regular so that the chain rule holds. For any
test function $\phi\in C_c^\infty(\R^d)$,
\[
\frac{d}{dt}\phi(x(t)) = \nabla \phi(x(t))\cdot v(t,x(t)).
\]
Taking expectation gives
\[
\frac{d}{dt}\int \phi(x)\,dq(t)(x)
= \int \nabla\phi(x)\cdot v(t,x)\,dq(t)(x).
\]
Integrating by parts (justified since $\phi$ is compactly supported) yields the weak form
\[
\frac{d}{dt}\int \phi\,dq(t)
= -\int \phi(x)\,\nabla\cdot(q(t,x)v(t,x))\,dx,
\]
which is exactly the continuity equation \eqref{eq:continuity} in distributional form:
\[
\partial_t q + \nabla\cdot(q\,V_{p,q}) = 0.
\]

\subsection{Linearization and Dispersion Relation}
\label{app:linearization}

We derive the dispersion relation in a way that separates (a) exact identities from
(b) simplifying assumptions used to obtain a closed-form Fourier rate.

\begin{proof}
Write \(q(t,x)=p(x)+\delta(t,x)\) with \(|\delta|\ll p\). Denote Gaussian smoothing by
\(f_\sigma=\varphi_\sigma * f\), so \(q_\sigma=p_\sigma+\delta_\sigma\).
For the Gaussian kernel the drift admits the score form
\[
V_{p,q}^{(\sigma)}(x)=\sigma^2\nabla\log\frac{p_\sigma(x)}{q_\sigma(x)} .
\]
Substituting \(q=p+\delta\) gives
\[
V_{p,p+\delta}^{(\sigma)}
=\sigma^2\nabla\log\frac{p_\sigma}{p_\sigma+\delta_\sigma}
=-\sigma^2\nabla\log\!\left(1+\frac{\delta_\sigma}{p_\sigma}\right).
\]
Linearizing for small \(\delta\) yields
\[
V_{p,p+\delta}^{(\sigma)}
= -\sigma^2\nabla\!\left(\frac{\delta_\sigma}{p_\sigma}\right)
+O(\delta^2).
\]

The continuity equation
\[
\partial_t q+\nabla\!\cdot(qV_{p,q})=0
\]
then implies, keeping only first-order terms,
\[
\partial_t\delta
=-\nabla\!\cdot(pV_{p,p+\delta}^{(\sigma)})
=\sigma^2\nabla\!\cdot\!\left(p\,\nabla\!\left(\frac{\delta_\sigma}{p_\sigma}\right)\right)
+O(\delta^2).
\]
Under the local-homogeneity approximation \(p_\sigma\approx \mathrm{const}\),
this reduces to
\[
\partial_t\delta(x,t)=\sigma^2\Delta(\delta_\sigma(x,t)).
\]
\end{proof}

\subsection{Proof of Theorem \ref{thm:general_kernel_timescales}}
\emph{Proof} Take the Fourier transform in space of the linearized PDE
\[
\partial_t \delta = c_\kappa \Delta(\kappa * \delta).
\]
Using that convolution becomes multiplication and that
\[
\widehat{\Delta f}(\xi)=-|\xi|^2\hat f(\xi),
\]
we obtain
\[
\partial_t \hat\delta(\xi,t)
= c_\kappa \widehat{\Delta(\kappa * \delta)}(\xi,t)
= -c_\kappa |\xi|^2 \widehat{\kappa * \delta}(\xi,t)
= -c_\kappa |\xi|^2 \hat\kappa(\xi)\hat\delta(\xi,t).
\]
Hence each Fourier mode evolves independently according to
\[
\partial_t\hat\delta(\xi,t)=-\lambda_\kappa(\xi)\hat\delta(\xi,t),
\qquad
\lambda_\kappa(\xi)=c_\kappa|\xi|^2\hat\kappa(\xi).
\]
This is a scalar ODE, with solution
\[
\hat\delta(\xi,t)=e^{-\lambda_\kappa(\xi)t}\hat\delta(\xi,0).
\]
Therefore the mode \(\xi\) decays exponentially at rate \(\lambda_\kappa(\xi)\), so its characteristic convergence timescale is
\[
\tau_\kappa(\xi)=\lambda_\kappa(\xi)^{-1}
=\frac{1}{c_\kappa|\xi|^2\hat\kappa(\xi)}.
\]
This proves the claim. \(\square\)

\paragraph{Exponential kernel.}
We now show that the exponential-kernel drift also linearizes to the form
required by Theorem~\ref{thm:general_kernel_timescales}, but with an effective
convolution kernel different from the original exponential kernel.

Let
\[
k_\tau(r)=e^{-r/\tau}.
\]
Define the companion kernel
\[
h_\tau(r):=\tau(r+\tau)e^{-r/\tau}.
\]
Since
\[
h_\tau'(r)=-r e^{-r/\tau},
\]
we have, for \(r=\|x-y\|\),
\[
\nabla_x h_\tau(\|x-y\|)
=
h_\tau'(r)\frac{x-y}{r}
=
(y-x)e^{-\|x-y\|/\tau}.
\]
Therefore
\[
V_p^+(x)
=
\frac{\int k_\tau(\|x-y\|)(y-x)p(y)\,dy}
{\int k_\tau(\|x-y\|)p(y)\,dy}
=
\frac{\nabla(h_\tau*p)(x)}{(k_\tau*p)(x)}.
\]
Similarly,
\[
V_q^-(x)
=
\frac{\nabla(h_\tau*q)(x)}{(k_\tau*q)(x)}.
\]
Hence
\[
V^{\exp}_{p,q}(x)
=
\frac{\nabla(h_\tau*p)(x)}{(k_\tau*p)(x)}
-
\frac{\nabla(h_\tau*q)(x)}{(k_\tau*q)(x)}.
\]

Now write \(q=p+\delta\), with \(|\delta|\ll p\). Under the same
local-homogeneity approximation used above,
\[
p(x)\approx \rho_0,
\qquad
(k_\tau*p)(x)\approx \rho_0 Z_\tau,
\qquad
\nabla(h_\tau*p)(x)\approx 0,
\]
where
\[
Z_\tau:=\int_{\mathbb R^d}k_\tau(z)\,dz.
\]
Keeping only first-order terms gives
\[
V^{\exp}_{p,p+\delta}(x)
\approx
-\frac{1}{\rho_0 Z_\tau}\nabla(h_\tau*\delta)(x).
\]
Substituting into the continuity equation,
\[
\partial_t q+\nabla\cdot(qV_{p,q})=0,
\]
and retaining only first-order terms yields
\[
\partial_t\delta
=
-\nabla\cdot(pV^{\exp}_{p,p+\delta})
\approx
\frac{1}{Z_\tau}\Delta(h_\tau*\delta).
\]
Equivalently,
\[
\partial_t\delta
=
\Delta(\kappa_\tau*\delta),
\qquad
\kappa_\tau:=\frac{h_\tau}{Z_\tau}.
\]
Thus the exponential-kernel drift falls under
Theorem~\ref{thm:general_kernel_timescales} with \(c_\kappa=1\) and
\(\kappa=\kappa_\tau\).

\subsection{Proof of Corollary \ref{cor:convergence_time}}
\label{app:kernel_time}
\begin{proof}
From Theorem~\ref{thm:general_kernel_timescales}, each Fourier mode evolves as
\[
\hat\delta(k,t)=e^{-\lambda_\kappa(k)t}\hat\delta(k,0),
\qquad
\lambda_\kappa(k)=c_\kappa |k|^2\hat\kappa(k).
\]
To reduce the amplitude of mode \(k\) by a factor \(1/\epsilon\), we require
\[
e^{-\lambda_\kappa(k)T}\le \epsilon
\quad\Longrightarrow\quad
T \ge \frac{\log(1/\epsilon)}{\lambda_\kappa(k)} .
\]
To damp all modes with \(|k|\le K_{\max}\), the required time is governed by the slowest-decaying mode in that range:
\[
T = \max_{|k|\le K_{\max}}\frac{\log(1/\epsilon)}{\lambda_\kappa(k)} .
\]

\paragraph{Gaussian kernel.}
For the Gaussian kernel \(\kappa=\varphi_\sigma\),
\[
\hat\kappa(k)=e^{-\sigma^2|k|^2/2}, 
\qquad c_\kappa=\sigma^2,
\]
so
\[
\lambda_{\mathrm{Gauss}}(k)
=\sigma^2 |k|^2 e^{-\sigma^2|k|^2/2}.
\]
For large \(K_{\max}\), the smallest rate on \([0,K_{\max}]\) occurs at \(k=K_{\max}\), giving
\[
T_{\mathrm{Gauss}}
= \frac{\log(1/\epsilon)}{\sigma^2 K_{\max}^2}
\exp\!\Bigl(\frac{\sigma^2 K_{\max}^2}{2}\Bigr).
\]

\paragraph{Exponential (Laplacian) kernel.}
For the exponential kernel, the effective kernel in
Theorem~\ref{thm:general_kernel_timescales} is
\[
\kappa_\tau=\frac{h_\tau}{Z_\tau},
\qquad
h_\tau(r)=\tau(r+\tau)e^{-r/\tau},
\qquad
Z_\tau=\int_{\mathbb R^d}e^{-\|z\|/\tau}\,dz.
\]
Using \(Z_\tau\asymp \tau^d\) and
\(\widehat h_\tau(k)\asymp \tau^{d+2}(1+\tau^2|k|^2)^{-(d+3)/2}\),
\[
\widehat{\kappa_\tau}(k)
\asymp
\tau^2(1+\tau^2|k|^2)^{-(d+3)/2}.
\]
Hence
\[
\lambda_{\mathrm{exp}}(k)
\asymp
|k|^2\tau^2(1+\tau^2|k|^2)^{-(d+3)/2}.
\]
For large \(|k|\),
\[
\lambda_{\mathrm{exp}}(k)
\asymp
\tau^{-(d+1)}|k|^{-(d+1)}.
\]
Evaluating at the slowest mode \(k=K_{\max}\) gives
\[
T_{\mathrm{exp}}
\asymp
\log(1/\epsilon)\,\tau^{d+1}K_{\max}^{d+1}.
\]
\end{proof}
\subsection{Proof of Theorem~\ref{thm:annealing_convergence} (Annealing Convergence Time)}
\label{app:annealing}

\begin{proof}
Fix any mode $|k| \le K_{\max}$ and define the cumulative decay
\[
\Lambda(k,t) := \int_0^t \sigma(s)^2 |k|^2 e^{-\sigma(s)^2 |k|^2/2} \,ds,
\]
so that $\hat{\delta}(k,t) = \hat{\delta}(k,0) e^{-\Lambda(k,t)}$. Reducing
$|\hat{\delta}(k,t)|$ by a factor $1/\epsilon$ is equivalent to
$\Lambda(k,t) \ge \log(1/\epsilon)$. We analyse the two phases separately.

\paragraph{Annealing phase ($t \in [0, T_{\mathrm{ann}}]$).}
Substituting $u = \sigma(s)^2|k|^2$, so $du = -2ru\,ds$, gives
\[
\Lambda(k, T_{\mathrm{ann}})
= \frac{1}{2r}\int_{\sigma_{\min}^2|k|^2}^{\sigma_0^2|k|^2} e^{-u/2}\,du
= \frac{1}{r}\Bigl[e^{-\sigma_{\min}^2|k|^2/2} - e^{-\sigma_0^2|k|^2/2}\Bigr]
\ge 0,
\]
since the integrand is strictly positive. Modes with $\sigma_{\min}^2|k|^2 < 2$
pass through their peak rate $f(2) = 2/e$ during this phase and accumulate
decay of order $1/r$; they are not the bottleneck.

\paragraph{Constant phase ($t > T_{\mathrm{ann}}$).}
Once $\sigma$ freezes at $\sigma_{\min}$, mode $k$ decays at the fixed rate
$\lambda(k,\sigma_{\min}) = \sigma_{\min}^2|k|^2 e^{-\sigma_{\min}^2|k|^2/2}$.
Under the assumption $\sigma_{\min}^2 K_{\max}^2 \ge 2$, the function
$f(u) = ue^{-u/2}$ is strictly decreasing for $u \ge 2$, so the rate is
minimised at $k = K_{\max}$:
\[
\lambda(k,\sigma_{\min}) \ge \lambda_{\min}
:= \sigma_{\min}^2 K_{\max}^2\exp\!\Bigl(-\tfrac{1}{2}\sigma_{\min}^2 K_{\max}^2\Bigr) > 0,
\qquad \forall\,|k|\le K_{\max}.
\]

\paragraph{Conclusion.}
Combining both phases and using $\Lambda(k, T_{\mathrm{ann}}) \ge 0$,
\[
\Lambda(k,\, T_{\mathrm{ann}} + \Delta t) \ge \lambda_{\min}\,\Delta t.
\]
Setting $\Delta t = \frac{1}{\lambda_{\min}}\log(1/\epsilon)$ ensures
$\Lambda \ge \log(1/\epsilon)$ for all $|k| \le K_{\max}$, yielding
\[
T_{\epsilon,\mathrm{anneal}}
\le \frac{1}{r}\log\!\Bigl(\frac{\sigma_0}{\sigma_{\min}}\Bigr)
+ \frac{1}{\lambda_{\min}(K_{\max})}\log(1/\epsilon).
\]
Choosing $\sigma_{\min} = \sqrt{2}/K_{\max}$ gives $\lambda_{\min} = 2e^{-1}$
independently of $K_{\max}$, so the second term is $\frac{e}{2}\log(1/\epsilon)$
and the first is $\frac{1}{r}\log(\sigma_0 K_{\max}/\sqrt{2})$, establishing
the $O\!\bigl(\frac{1}{r}\log K_{\max}\bigr)$ headline bound.
\end{proof}

\section{A Hitchhiker’s Guide to Optimal Transport and Wasserstein Gradient Flows}
\label{app:hitch}
This appendix summarizes the minimal elements of optimal transport and Wasserstein gradient flows needed in Sections~4--5. We state definitions and structural results without full proofs; detailed treatments can be found in \citet{noauthororeditor}, \citet{villani2009optimal} and \citet{Ambrosio2005}.

Throughout, $\Omega \subset \mathbb{R}^d$ is compact and convex, and $\mathcal{P}_2(\Omega)$ denotes probability measures with finite second moment.

\subsection{Optimal Transport and the $2$-Wasserstein Distance}

\paragraph{Monge formulation.}
Given $\mu,\nu \in \mathcal{P}_2(\Omega)$, the quadratic Monge problem seeks
\begin{equation}
\inf_{T_\# \mu = \nu}
\int_\Omega \|x - T(x)\|^2 \, d\mu(x),
\end{equation}
where $T$ pushes $\mu$ onto $\nu$. This formulation may fail to admit a solution because maps are restrictive.

\paragraph{Kantorovich relaxation.}
The relaxed problem considers transport plans:
\begin{equation}
W_2^2(\mu,\nu)
=
\inf_{\pi \in \Pi(\mu,\nu)}
\int_{\Omega\times\Omega} \|x-y\|^2 \, d\pi(x,y),
\end{equation}
where $\Pi(\mu,\nu)$ is the set of couplings with marginals $\mu$ and $\nu$. This problem always admits a minimizer.

The induced distance $W_2$ satisfies non-negativity, symmetry, and the triangle inequality; thus $(\mathcal{P}_2(\Omega), W_2)$ is a metric space.

\paragraph{Existence of transport maps.}
For quadratic cost, if $\mu$ is absolutely continuous, then the optimal plan is induced by a map
\[
T = \nabla \varphi
\]
for a convex potential $\varphi$ (Brenier’s theorem). In the smooth density setting of Section~5, Wasserstein updates are therefore realized by transport maps.

\subsection{Functionals and First Variations}

A \emph{functional} is a map
\[
F : \mathcal{P}_2(\Omega) \to \mathbb{R}.
\]
Examples used in this paper include:
\begin{itemize}
    \item the smoothed KL divergence $F_\sigma[q] = \sigma^2 \mathrm{KL}(q_\sigma \| p_\sigma)$,
    \item entropic optimal transport divergences,
    \item entropy $\int q \log q$.
\end{itemize}

\paragraph{First variation.}
Let $q \in \mathcal{P}_2(\Omega)$ with density and consider perturbations
\[
q_s = (Id + s\xi)_\# q,
\]
for smooth compactly supported vector fields $\xi$. The first variation of $F$ at $q$ is defined via
\begin{equation}
\frac{d}{ds} F[q_s]\Big|_{s=0}
=
\int_\Omega
\nabla \frac{\delta F}{\delta q}(x)
\cdot \xi(x)\,
q(x)\,dx.
\end{equation}
The function $\frac{\delta F}{\delta q}$ is the \emph{functional derivative} of $F$.

For example,
\[
\frac{\delta}{\delta q} \mathrm{KL}(q\|p)
=
\log\frac{q}{p} + 1.
\]

\subsection{Continuity Equation and Velocity Fields}

If particles evolve according to
\[
\dot x(t) = v(t,x),
\]
then their density satisfies the continuity equation
\begin{equation}
\partial_t q + \nabla\cdot(q v) = 0.
\end{equation}
Thus probability evolution is fully determined by a velocity field.

\subsection{Wasserstein Gradient Flows}

The space $\mathcal{P}_2(\Omega)$ admits a formal Riemannian structure in which:
\begin{itemize}
    \item tangent vectors at $q$ are velocity fields $v$,
    \item the metric is
    \[
    \|v\|_{T_q}^2
    =
    \int_\Omega \|v(x)\|^2 q(x)\,dx.
    \]
\end{itemize}

Under this geometry, the steepest descent flow of a functional $F$ satisfies
\begin{equation}
\partial_t q
=
\nabla \cdot
\left(
q \nabla \frac{\delta F}{\delta q}
\right).
\end{equation}
Equivalently,
\[
\partial_t q + \nabla\cdot(q v)=0,
\qquad
v = -\nabla \frac{\delta F}{\delta q}.
\]
Thus the velocity field is the negative spatial gradient of the functional derivative. This principle underlies the drift structure derived in Section~5.

\subsection{The JKO Scheme (Minimizing Movements)}

The Wasserstein gradient flow of $F$ can be discretized via the Jordan--Kinderlehrer--Otto (JKO) scheme:
\begin{equation}
q_{n+1}
=
\arg\min_{q\in\mathcal{P}_2(\Omega)}
\left\{
F[q]
+
\frac{1}{2\tau}
W_2^2(q,q_n)
\right\}.
\end{equation}
This is the implicit Euler scheme in Wasserstein space.

\paragraph{First-order optimality.}
Let $T_{n+1}$ be the optimal map from $q_{n+1}$ to $q_n$. Then the Euler--Lagrange condition yields
\begin{equation}
\frac{Id - T_{n+1}}{\tau}
=
- \nabla
\frac{\delta F}{\delta q}(q_{n+1}).
\end{equation}
Thus
\[
T_{n+1}(x)
=
x
+
\tau v_{n+1}(x),
\qquad
v_{n+1}
=
-\nabla \frac{\delta F}{\delta q}(q_{n+1}).
\]
As $\tau \to 0$, the JKO interpolants converge to the solution of the gradient-flow PDE.

\subsection{Frozen-Field Approximation}

The JKO update is implicit because the velocity depends on $q_{n+1}$. A tractable explicit approximation freezes the velocity at $q_n$:
\[
q_{n+1}
\approx
(Id + \tau v[q_n])_\# q_n.
\]
This is the explicit Euler discretization of the Wasserstein gradient flow. Section~5 shows that the stop-gradient training objective implements precisely this frozen-field step.

\paragraph{Connection to this work.}
In Section~5:
\begin{itemize}
    \item The smoothed KL energy defines a functional $F_\sigma$.
    \item Its functional derivative yields the drift velocity.
    \item The drifting PDE is its Wasserstein gradient flow.
    \item The stop-gradient loss corresponds to the frozen-field explicit Euler approximation of the JKO scheme.
\end{itemize}
This geometric viewpoint applies to any sufficiently regular divergence functional.

\section{Complete Proofs for Section~\ref{sec:gradient_flow}}
\label{app:gradient_flow}

\paragraph{Standing assumptions (densities).}
Throughout this appendix we work on a compact convex domain
$\Omega \subset \R^d$ with nonempty interior. We assume that all measures
$q \in \Pc(\Omega)$ considered in the JKO scheme admit densities (still denoted $q$)
with respect to Lebesgue measure on $\Omega$, and we extend them by $0$ outside $\Omega$.
Moreover, for each fixed $\sigma>0$ we assume the iterates satisfy
\begin{equation}
\label{eq:standing_bounds}
0 < m_\sigma \;\le\; q_\sigma(x) \;\le\; M_\sigma < \infty
\qquad \text{for all } x \in \Omega,
\end{equation}
for constants $m_\sigma,M_\sigma$ that may depend on $\sigma$ and $\Omega$ but not on
the iterate index. (On a compact $\Omega$, a uniform lower bound holds automatically for
$q_\sigma$ when $q$ is a probability measure, since $\varphi_\sigma$ is strictly positive and
$\Omega$ is bounded; a uniform upper bound is mild and holds, e.g., if $q\in L^\infty(\Omega)$.)

We fix $\sigma>0$ and a data distribution $p\in\Pc(\Omega)$ with density $p$.
We define
\[
q_\sigma := q * \varphi_\sigma,\qquad
p_\sigma := p * \varphi_\sigma,
\]
where convolution is taken in $\R^d$ after extending $q,p$ by $0$ outside $\Omega$.
The energy functional is
\[
F_\sigma[q] := \sigma^2\,\KL(q_\sigma \| p_\sigma)
= \sigma^2 \int_{\R^d} q_\sigma(x)\log\frac{q_\sigma(x)}{p_\sigma(x)}\,dx.
\]
Because $\varphi_\sigma$ is smooth and strictly positive and $\Omega$ is bounded,
$q_\sigma,p_\sigma \in C^\infty(\R^d)$ and $q_\sigma,p_\sigma>0$ everywhere.

\subsection{Properties of the Smoothed KL Energy (Proposition~\ref{prop:first_variation_drift})}
\label{app:energy_properties}

We collect the analytic properties of $F_\sigma$ that underpin the variational theory:
lower semicontinuity (needed for existence in the JKO scheme),
the first variation (which recovers the drift velocity), and the
convolution approximation error (which quantifies the gap between
$v_\sigma[q]$ and $-\sigma^2\nabla\log(q_\sigma/p_\sigma)$).

\subsubsection{Lower Semicontinuity of $F_\sigma$
(Proposition~\ref{prop:first_variation_drift}\textup{(i)})}
\label{app:lsc}

\begin{proof}
Let $q^{(j)}\to q$ narrowly in $\Pc(\Omega)$.
Since $\varphi_\sigma$ is bounded and continuous,
\[
q^{(j)}_\sigma(x) = \int_\Omega \varphi_\sigma(x-y)\,q^{(j)}(dy)
\;\longrightarrow\;
\int_\Omega \varphi_\sigma(x-y)\,q(dy) = q_\sigma(x)
\]
pointwise for every $x\in\R^d$.
Moreover, on a compact $\Omega$ the smoothed densities are uniformly bounded:
$q^{(j)}_\sigma, q_\sigma \in [m_\sigma, M_\sigma]$ for all $j$.

The integrand $u\mapsto u\log(u/p_\sigma(x))$ is continuous and convex on $(0,\infty)$.
On the bounded interval $[m_\sigma,M_\sigma]$ it is uniformly bounded, so
dominated convergence gives
\[
\int_{\R^d} q^{(j)}_\sigma(x)\log\frac{q^{(j)}_\sigma(x)}{p_\sigma(x)}\,dx
\;\longrightarrow\;
\int_{\R^d} q_\sigma(x)\log\frac{q_\sigma(x)}{p_\sigma(x)}\,dx.
\]
In particular, $F_\sigma[q^{(j)}]\to F_\sigma[q]$, which is stronger than lower semicontinuity.
(In fact, under the standing bounds~\eqref{eq:standing_bounds},
$F_\sigma$ is continuous with respect to narrow convergence on $\Pc(\Omega)$.)
\end{proof}

\subsubsection{First Variation of $F_\sigma$}
\label{app:first_variation}

\begin{proof}[Proof of Proposition~\ref{prop:first_variation_drift}\textup{(ii,\,iii)}]
We compute the directional derivative of $F_\sigma$ along transport perturbations.

Let $q \in \Pc(\Omega)$ (with density) and let $\xi \in C_c^\infty(\Omega;\R^d)$.
For $|s|$ small, define $T_s := \mathrm{Id} + s\xi$ and $q_s := (T_s)_\# q$.
Then $q_s$ satisfies the continuity equation in the sense of distributions:
\[
\partial_s q_s\big|_{s=0} = -\nabla\cdot(q\,\xi).
\]
Convolving with $\varphi_\sigma$ and using commutation of convolution with derivatives,
\[
\partial_s q_{s,\sigma}\big|_{s=0}
= ( \partial_s q_s|_{s=0}) * \varphi_\sigma
= -(\nabla\cdot(q\xi))*\varphi_\sigma
= -\nabla\cdot\bigl((q\xi)*\varphi_\sigma\bigr).
\]

Differentiate $F_\sigma[q_s]$ at $s=0$:
\begin{align*}
\frac{d}{ds}F_\sigma[q_s]\Big|_{s=0}
&= \sigma^2 \int_{\R^d} \Bigl(1+\log\frac{q_\sigma}{p_\sigma}\Bigr)(x)\,
\partial_s q_{s,\sigma}\big|_{s=0}(x)\,dx \\
&= -\sigma^2 \int_{\R^d} \Bigl(1+\log\frac{q_\sigma}{p_\sigma}\Bigr)(x)\,
\nabla\cdot\bigl((q\xi)*\varphi_\sigma\bigr)(x)\,dx.
\end{align*}
We now integrate by parts on $\R^d$. This is justified because $(q\xi)*\varphi_\sigma$
decays at least Gaussianly as $\|x\|\to\infty$ (since $q\xi$ is compactly supported in $\Omega$),
while $\nabla\log(q_\sigma/p_\sigma)$ has at most polynomial growth; hence boundary terms vanish.
Thus,
\begin{align*}
\frac{d}{ds}F_\sigma[q_s]\Big|_{s=0}
&= \sigma^2 \int_{\R^d} \nabla\log\frac{q_\sigma}{p_\sigma}(x)\cdot
\bigl((q\xi)*\varphi_\sigma\bigr)(x)\,dx.
\end{align*}

Write $(q\xi)*\varphi_\sigma(x)=\int_\Omega \varphi_\sigma(x-y)\,\xi(y)\,q(y)\,dy$ and apply Fubini:
\begin{align*}
\frac{d}{ds}F_\sigma[q_s]\Big|_{s=0}
&= \sigma^2 \int_\Omega \left(\int_{\R^d}
\nabla\log\frac{q_\sigma}{p_\sigma}(x)\,\varphi_\sigma(x-y)\,dx\right)\cdot \xi(y)\,q(y)\,dy \\
&= \int_\Omega \sigma^2\bigl(\varphi_\sigma * \nabla\log(q_\sigma/p_\sigma)\bigr)(y)\cdot\xi(y)\,q(y)\,dy.
\end{align*}

By the definition of first variation in Wasserstein calculus, this identifies
\[
\nabla \frac{\delta F_\sigma}{\delta q}(y)
= \sigma^2\bigl(\varphi_\sigma * \nabla\log(q_\sigma/p_\sigma)\bigr)(y),
\]
and hence (up to an additive constant)
\[
\frac{\delta F_\sigma}{\delta q}(y)
= \sigma^2\bigl(\varphi_\sigma * \log(q_\sigma/p_\sigma)\bigr)(y) + C.
\]
Regularity follows since $\log(q_\sigma/p_\sigma)\in C^\infty(\R^d)$ and convolution with
$\varphi_\sigma$ preserves smoothness.
\end{proof}

\subsubsection{The Convolution Approximation Error}
\label{app:convolution_approximation}

The velocity field $v_\sigma[q] = -\nabla(\delta F_\sigma/\delta q)$
differs from $-\sigma^2\nabla\log(q_\sigma/p_\sigma)$ by a convolution smoothing.
The following quantifies this gap.

\begin{proposition}
\label{prop:convolution_error}
Let $g \in C^2(\R^d)$. Then for all $x \in \R^d$:
\[
\bigl|(\varphi_\sigma * g)(x) - g(x)\bigr|
\;\le\;
\frac{\sigma^2 d}{2}\,\|\nabla^2 g\|_\infty.
\]
In particular, the velocity field satisfies
\[
\bigl\|
v_\sigma[q]
+ \sigma^2 \nabla\log(q_\sigma/p_\sigma)
\bigr\|_\infty
\;\le\;
\frac{\sigma^4 d}{2}\,
\bigl\|\nabla^2 \log(q_\sigma/p_\sigma)\bigr\|_\infty.
\]
\end{proposition}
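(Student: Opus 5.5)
The plan has two parts: prove the scalar inequality by a second-order Taylor expansion against the symmetric Gaussian, then transfer it to the velocity field. For the transfer I use Proposition~\ref{prop:first_variation_drift}(ii), which writes $v_\sigma[q]$ as a Gaussian-smoothed gradient, and apply the scalar bound componentwise. That transfer is where the stated norm has to be read carefully.

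\emph{Scalar bound.} Write $(\varphi_\sigma * g)(x)=\E[g(x-\sigma Z)]$ with $Z\sim\mathcal N(0,I_d)$. Taylor's theorem with integral remainder gives
\[
g(x-\sigma Z)=g(x)-\sigma\nabla g(x)\cdot Z+\sigma^2\int_0^1(1-t)\,Z^\top\nabla^2 g(x-t\sigma Z)\,Z\,dt .
\]
Taking expectations, the linear term vanishes because $\E Z=0$. The remainder is bounded by $\sigma^2\|\nabla^2 g\|_\infty\,\E\|Z\|^2\int_0^1(1-t)\,dt=\tfrac{\sigma^2 d}{2}\|\nabla^2 g\|_\infty$, using the operator norm of the Hessian and $\E\|Z\|^2=d$. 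Exchanging expectation and remainder is legitimate because $\|\nabla^2 g\|_\infty<\infty$ makes $g$ of at most quadratic growth, which the Gaussian integrates. This gives the first inequality.

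\emph{Velocity bound.} Set $g:=\log(q_\sigma/p_\sigma)$. By the standing bounds \eqref{eq:standing_bounds}, $g\in C^\infty$ with the required derivatives bounded on the region of interest. Proposition~\ref{prop:first_variation_drift}(ii) gives $v_\sigma[q]=-\sigma^2\,\varphi_\sigma*\nabla g=-\sigma^2\nabla(\varphi_\sigma*g)$, so
\[
v_\sigma[q]+\sigma^2\nabla g=-\sigma^2\bigl(\varphi_\sigma*\nabla g-\nabla g\bigr).
\]
Apply the scalar bound to each component $g_i=\partial_i g$ and multiply by $\sigma^2$. This yields $\tfrac{\sigma^4 d}{2}$ times the sup of the second derivatives of the components of $\nabla g$. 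In the statement's notation I read this as $\|\nabla^2\log(q_\sigma/p_\sigma)\|_\infty$, understood as the Hessian acting on each component of the score difference $\nabla g$. Taking the sup over $x$, and over components (or the Euclidean norm, absorbing the constant), gives the claim.

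\emph{Main obstacle.} With the norm read literally as $\sup_x\|\nabla^2 g(x)\|$, the componentwise step actually requires third derivatives of $g$. Using only a Hessian bound, the best direct estimate is the first-order one: $|\E[\nabla g(x-\sigma Z)-\nabla g(x)]|\le\sigma\sqrt d\,\|\nabla^2 g\|_\infty$, which gives $\sigma^3\sqrt d$ instead of $\sigma^4 d/2$. The identity $v_\sigma+\sigma^2\nabla g=-\sigma^2\nabla(\varphi_\sigma*g-g)$ does not rescue this either: a sup bound on $\varphi_\sigma*g-g$ does not control its gradient. So the argument proves the stated $\tfrac{\sigma^4 d}{2}$ form only under the componentwise reading of the norm. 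Under the literal Hessian reading it gives the weaker $\sigma^3\sqrt d\,\|\nabla^2 g\|_\infty$ bound.
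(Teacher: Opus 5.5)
Your scalar bound is the paper's argument: a second-order Taylor expansion, the linear term killed by the symmetry of $\varphi_\sigma$, and $\E\|Z\|^2=d$. Your integral remainder and your growth argument for exchanging expectation and expansion tidy up the paper's Lagrange-point version.

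For the velocity bound, the paper's entire justification is ``apply this with $g=\log(q_\sigma/p_\sigma)$ and take a gradient, producing the extra $\sigma^2$ factor''. That is exactly the step you decline to take, and your objection is correct: a sup bound on $\varphi_\sigma*g-g$ gives no control of $\nabla(\varphi_\sigma*g-g)$. The only rigorous reading of ``take a gradient'' is to apply the scalar bound to each $\partial_i g$. That produces third derivatives of $g$, which is your componentwise version, not $\|\nabla^2 g\|_\infty$.

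You can go further than ``not proved'': read literally, the second inequality is false. Replace $p,q$ by their pushforwards $p^\lambda,q^\lambda$ under $x\mapsto\lambda x$, and $\sigma$ by $\lambda\sigma$. This preserves the standing assumptions, with $\Omega$ replaced by $\lambda\Omega$. Then $\log(q_\sigma/p_\sigma)$ becomes $g(\cdot/\lambda)$. Since $\varphi_{\lambda\sigma}*\bigl(g(\cdot/\lambda)\bigr)=(\varphi_\sigma*g)(\cdot/\lambda)$, one gets $v_{\lambda\sigma}[q^\lambda](x)=\lambda\,v_\sigma[q](x/\lambda)$. So the left-hand side is multiplied by $\lambda$, while the right-hand side is multiplied by $\lambda^4\cdot\lambda^{-2}=\lambda^2$. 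Any configuration with a nonzero left-hand side (e.g.\ $\nabla g$ not affine) therefore violates the bound once $\lambda$ is small enough. Equivalently, the left side has units of length and the right side units of length squared.

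The dimensionally consistent statements are exactly your two alternatives: $\frac{\sigma^4 d}{2}\max_i\|\nabla^2\partial_i g\|_\infty$ and $\sigma^3\sqrt d\,\|\nabla^2 g\|_\infty$.

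One detail to tighten in the third-derivative version. The convolution integrates over all of $\R^d$, so you need $\nabla^3 g$ bounded globally, and the standing bounds, stated only on $\Omega$, do not give this directly. It does hold for $p,q$ supported in the compact $\Omega$. The quadratic parts of $\log p_\sigma$ and $\log q_\sigma$ cancel, leaving a difference of log-Laplace transforms $x\mapsto\log\int e^{x\cdot y/\sigma^2-\|y\|^2/(2\sigma^2)}\,d\mu(y)$. Their $k$-th derivatives are $\sigma^{-2k}$ times cumulant tensors of exponentially tilted measures supported in $\Omega$, hence bounded uniformly in $x$.
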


\begin{proof}
Taylor-expand $g$ around $x$:
\[
g(x + z) = g(x) + \nabla g(x) \cdot z
+ \frac{1}{2} z^\top \nabla^2 g(\xi) z
\]
for some $\xi$ on the segment between $x$ and $x+z$.
Integrate against $\varphi_\sigma(z)\,dz$. The linear term vanishes by symmetry.
The remainder is bounded by
$\frac12\|\nabla^2 g\|_\infty\,\E[\|Z\|^2] = \frac{\sigma^2 d}{2}\|\nabla^2 g\|_\infty$
for $Z\sim\mathcal N(0,\sigma^2 I)$.
Apply this with $g=\log(q_\sigma/p_\sigma)$ and take a gradient, producing the extra $\sigma^2$ factor.
\end{proof}

\subsubsection{Lipschitz Continuity of the Velocity Field}
\label{app:lipschitz_velocity}

The following Lipschitz estimate is used in the convergence proof
(Theorem~\ref{thm:convergence}) and the consistency bound $W_2(\tilde q_{n+1}^\tau,q_{n+1}^\tau)=O(\tau^{3/2})$.

\begin{lemma}
\label{lem:lipschitz_velocity}
For fixed $\sigma > 0$, under the standing bounds \eqref{eq:standing_bounds} the map
$\mu \mapsto v_\sigma[\mu]$ is Lipschitz from $(\Pc(\Omega), W_2)$ to
$(C(\Omega;\R^d), \|\cdot\|_\infty)$.
\end{lemma}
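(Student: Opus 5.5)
We will prove the Lipschitz bound by splitting it into two estimates: first that $\mu\mapsto\mu_\sigma$ is Lipschitz from $(\Pc(\Omega),W_2)$ into $C^2$ on a neighbourhood of $\Omega$, and then that the nonlinear map from $\mu_\sigma$ to $v_\sigma[\mu]$ is Lipschitz on the set singled out by the standing bounds \eqref{eq:standing_bounds}. By Proposition~\ref{prop:first_variation_drift}(ii) we have
\[
v_\sigma[\mu](x) = -\sigma^2\bigl(\varphi_\sigma*\nabla g_\mu\bigr)(x),\qquad g_\mu:=\log\mu_\sigma-\log p_\sigma .
\]
Since $p_\sigma$ does not depend on $\mu$, this gives
\[
v_\sigma[\mu]-v_\sigma[\nu] = -\sigma^2\,\varphi_\sigma*\nabla(\log\mu_\sigma-\log\nu_\sigma).
\]
Moving the gradient onto the kernel, we get $\|v_\sigma[\mu]-v_\sigma[\nu]\|_\infty\le\sigma^2\|\nabla\varphi_\sigma\|_{L^1}\,\|\log\mu_\sigma-\log\nu_\sigma\|_{L^\infty(\R^d)}$, or alternatively the same with $\|\nabla(\log\mu_\sigma-\log\nu_\sigma)\|_\infty$ and $\|\varphi_\sigma\|_{L^1}=1$.

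\emph{Step 1 (smoothing is $W_2$-Lipschitz).} Let $\pi$ be an optimal coupling of $\mu$ and $\nu$. For each multi-index $|\alpha|\le 1$,
\[
\partial^\alpha\mu_\sigma(x)-\partial^\alpha\nu_\sigma(x)=\int\bigl[\partial^\alpha\varphi_\sigma(x-y)-\partial^\alpha\varphi_\sigma(x-y')\bigr]\,d\pi(y,y').
\]
This is bounded by $\|\nabla\partial^\alpha\varphi_\sigma\|_\infty\int|y-y'|\,d\pi\le C_\sigma W_1(\mu,\nu)\le C_\sigma W_2(\mu,\nu)$. The same argument with Gaussian-weighted bounds gives pointwise estimates of the form $|\partial^\alpha(\mu_\sigma-\nu_\sigma)(x)|\le C_\sigma\,\Phi(x)\,W_2$, where $\Phi$ is a Gaussian envelope centred on $\Omega$.

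\emph{Step 2 (the logarithm).} On $\Omega$ the standing bounds $m_\sigma\le\mu_\sigma,\nu_\sigma\le M_\sigma$ give, by the mean value theorem, $|\log\mu_\sigma-\log\nu_\sigma|\le m_\sigma^{-1}|\mu_\sigma-\nu_\sigma|$. For the gradients we expand
\[
\nabla\log\mu_\sigma-\nabla\log\nu_\sigma=\frac{\nabla\mu_\sigma-\nabla\nu_\sigma}{\mu_\sigma}+\nabla\nu_\sigma\,\frac{\nu_\sigma-\mu_\sigma}{\mu_\sigma\nu_\sigma},
\]
which is bounded by $C(m_\sigma,M_\sigma,\sigma)\,W_2$ by Step 1, using that $\|\nabla\nu_\sigma\|_\infty\le\|\nabla\varphi_\sigma\|_\infty$.

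\emph{Main obstacle.} The convolution $\varphi_\sigma*\nabla g$ integrates over all of $\R^d$, but the standing lower bound holds only on $\Omega$, and far from $\Omega$ the densities $\mu_\sigma,\nu_\sigma$ decay Gaussianly. We will control the ratio directly. Writing $\mu_\sigma(x)=\int\varphi_\sigma(x-y)\,d\mu(y)$ with $y\in\Omega$ compact, the log-gradient is
\[
\nabla\log\mu_\sigma(x)=\frac{\E_{\mu}^{x}[y]-x}{\sigma^2},
\]
the centred posterior mean under weights $\propto\varphi_\sigma(x-y)$. This is bounded by $\mathrm{diam}(\Omega)/\sigma^2$ plus a linear-in-$x$ term that cancels in the difference. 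The difference of two such posterior means is Lipschitz in $W_2$ with a constant growing at most polynomially in $|x|$, because the weights are smooth in $y$ with derivative $\lesssim(|x|+\mathrm{diam}\,\Omega)/\sigma^2$ and the normaliser is positive. Integrating this polynomial growth against the Gaussian $\varphi_\sigma(\cdot-x)$ for $x\in\Omega$ gives a finite constant $L_\sigma$. If this tail bookkeeping proves delicate, we would fall back on the convention of the standing assumptions that the bounds are used only on $\Omega$, restricting the outer convolution there.

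Combining Steps 1 and 2 with this tail estimate yields $\|v_\sigma[\mu]-v_\sigma[\nu]\|_{C(\Omega)}\le L_\sigma W_2(\mu,\nu)$, which proves the lemma.
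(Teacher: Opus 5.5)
Your Steps 1--2 follow the paper's argument closely. Both use the difference identity with $p_\sigma$ cancelling, move the derivative onto the kernel, apply the mean-value bound $|\log\mu_\sigma-\log\nu_\sigma|\le m_\sigma^{-1}|\mu_\sigma-\nu_\sigma|$, and use the coupling estimate $\|\mu_\sigma-\nu_\sigma\|_\infty\le\|\nabla\varphi_\sigma\|_\infty W_1\le\|\nabla\varphi_\sigma\|_\infty W_2$.

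You are right that this only controls the integrand on $\Omega$, while the convolution defining $v_\sigma$ runs over $\R^d$. The paper's proof passes over this by bounding a full-space convolution with $\|\log(\mu_\sigma/\nu_\sigma)\|_{L^\infty(\Omega)}$. Off $\Omega$ the log-ratio is in general unbounded: for $\mu\approx\delta_0$, $\nu\approx\delta_1$ on $[0,1]$ it grows like $-z/\sigma^2$. So your fallback of restricting the outer convolution to $\Omega$ is not available, because it changes $v_\sigma$. The tail estimate has to carry the proof.

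As written, that tail estimate contains a false step. Set $w_z(y):=\varphi_\sigma(z-y)$ and $m_z(\mu):=\mu_\sigma(z)^{-1}\int y\,w_z(y)\,d\mu(y)$. The $W_1$- (or $W_2$-) Lipschitz constant of $\mu\mapsto m_z(\mu)$ is not polynomial in $|z|$. Positivity of the normaliser gives no quantitative bound; what actually enters is $\sup_\Omega w_z/\inf_\Omega w_z$, which is exponential in $|z|$. This is sharp: take $\nu=\delta_a$ and $\mu=(1-\eta)\delta_a+\eta\delta_b$ with $\eta=w_z(a)/w_z(b)$, or smooth approximations of these. Then $|m_z(\mu)-m_z(\nu)|\approx|b-a|/2$ while $W_1(\mu,\nu)=\eta|b-a|$, so the constant is of order $w_z(b)/w_z(a)$.

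The repair is short. Write
\[
m_z(\mu)-m_z(\nu)=\frac{1}{\mu_\sigma(z)}\int_\Omega\bigl(y-m_z(\nu)\bigr)\,w_z(y)\,d(\mu-\nu)(y).
\]
Test this against an optimal coupling, using convexity of $\Omega$ so that $m_z(\nu)\in\Omega$. With $D=\mathrm{diam}\,\Omega$ and $R=\sup_{y\in\Omega}|y|$, this gives
\[
|m_z(\mu)-m_z(\nu)|\le\Bigl(1+\frac{D(|z|+R)}{\sigma^2}\Bigr)\,e^{D(|z|+R)/\sigma^2}\,W_1(\mu,\nu).
\]
Since $\nabla\log\mu_\sigma(z)=(m_z(\mu)-z)/\sigma^2$, you have $v_\sigma[\mu]-v_\sigma[\nu]=-\varphi_\sigma*\bigl(m_\cdot(\mu)-m_\cdot(\nu)\bigr)$. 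The integral $\int\varphi_\sigma(x-z)(1+|z|)e^{c|z|}\,dz$ is bounded uniformly for $x\in\Omega$, because the Gaussian dominates $e^{c|z|}$. Hence $L_\sigma<\infty$.

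This estimate on its own proves the lemma at every point of $\R^d$. Step 2 and the standing lower bound $m_\sigma$ are then not needed.
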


\begin{proof}
Let $\mu,\nu\in\Pc(\Omega)$. By definition,
\[
v_\sigma[\mu]-v_\sigma[\nu]
= -\sigma^2 \nabla\Big(\varphi_\sigma * \bigl[\log(\mu_\sigma/p_\sigma)-\log(\nu_\sigma/p_\sigma)\bigr]\Big)
= -\sigma^2 \nabla\Big(\varphi_\sigma * \log(\mu_\sigma/\nu_\sigma)\Big).
\]
Thus,
\[
\|v_\sigma[\mu]-v_\sigma[\nu]\|_\infty
\le \sigma^2 \|\nabla\varphi_\sigma\|_{L^1(\R^d)}\,
\|\log(\mu_\sigma/\nu_\sigma)\|_{L^\infty(\Omega)}.
\]
Using the lower bound $m_\sigma$ in \eqref{eq:standing_bounds} and the mean value theorem for $\log$,
\[
\|\log(\mu_\sigma/\nu_\sigma)\|_\infty
\le \frac{1}{m_\sigma}\|\mu_\sigma-\nu_\sigma\|_\infty.
\]
Finally, for any $\mu,\nu\in\Pc(\Omega)$ and $x\in\Omega$,
the map $y\mapsto\varphi_\sigma(x-y)$ is Lipschitz with constant $\|\nabla\varphi_\sigma\|_\infty$, hence
\begin{equation}
\label{eq:uniform_smoothing_conv}
\|\mu_\sigma-\nu_\sigma\|_{L^\infty(\Omega)}
\le \|\nabla\varphi_\sigma\|_\infty\,W_1(\mu,\nu)
\le \|\nabla\varphi_\sigma\|_\infty\,W_2(\mu,\nu),
\end{equation}
where we used $W_1\le W_2$ on a bounded metric space.
Combining the inequalities yields
\[
\|v_\sigma[\mu]-v_\sigma[\nu]\|_\infty \le L_\sigma\,W_2(\mu,\nu),
\]
for a constant $L_\sigma$ depending on $\sigma$, $\Omega$, and $m_\sigma$.
\end{proof}

\subsection{The JKO Scheme for $F_\sigma$}
\label{app:JKO}

We now establish that the JKO minimizing-movement scheme applied to
$F_\sigma$ is well-posed, enjoys monotone energy descent, and converges
as $\tau\to 0$ to a solution of the gradient-flow PDE
$\partial_t q + \nabla\cdot(q\,v_\sigma[q]) = 0$. Our treatment follows closely the exposition in Chapter 8 in \citep{noauthororeditor}.

\subsubsection{Well-Posedness: Existence, Uniqueness, and Energy Descent}
\label{app:JKO_wellposed}

\begin{proposition}[JKO well-posedness]
\label{prop:JKO_wellposed}
For each $n\ge 0$ and $\tau>0$, the JKO functional
$\mathcal{J}(q) := F_\sigma[q] + \frac{1}{2\tau}W_2^2(q,q_n^\tau)$
admits a unique minimizer $q_{n+1}^\tau \in \Pc(\Omega)$, and the energy decreases monotonically:
$F_\sigma[q_{n+1}^\tau] \le F_\sigma[q_n^\tau]$.
\end{proposition}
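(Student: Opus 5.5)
The plan is the direct method of the calculus of variations for existence, strict convexity along \emph{linear} (mixture) interpolation for uniqueness, and a comparison with the previous iterate for energy descent.

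\textbf{Existence.} Since $\Omega$ is compact, $\Pc(\Omega)$ is compact for narrow convergence (Prokhorov), and narrow convergence coincides with $W_2$-convergence there. Take a minimizing sequence $q^{(j)}$ for $\mathcal{J}$; note $\mathcal{J}(q_n^\tau)=F_\sigma[q_n^\tau]<\infty$, so the infimum is finite. Extract a narrowly convergent subsequence $q^{(j)}\to q^\ast$. By Proposition~\ref{prop:first_variation_drift}(i), proved in \S\ref{app:lsc} as actual narrow continuity under \eqref{eq:standing_bounds}, we have $F_\sigma[q^{(j)}]\to F_\sigma[q^\ast]$. Moreover, $W_2^2(\cdot,q_n^\tau)$ is narrowly continuous on a compact domain, since $W_2$ metrizes narrow convergence there. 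Hence $q^\ast$ attains the infimum.

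One point to handle is that $F_\sigma$ depends on $q$ only through $q_\sigma$. Therefore $F_\sigma$ is well defined, with $q_\sigma\in[m_\sigma,M_\sigma]$, even if $q^\ast$ has no density. The lower bound is automatic on compact $\Omega$. I will appeal to the standing assumption for the upper bound and for the density of the minimizer, as the appendix does.

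\textbf{Uniqueness.} I will show that $\mathcal{J}$ is strictly convex along linear interpolations $q^t=(1-t)q^0+tq^1$; this is not displacement convexity. The proof has three steps.
\begin{enumerate}[label=(\alph*), leftmargin=*]
\item The map $q\mapsto q_\sigma$ is linear. The integrand $u\mapsto u\log(u/p_\sigma(x))$ is strictly convex on $(0,\infty)$. Hence $F_\sigma[q^t]\le(1-t)F_\sigma[q^0]+tF_\sigma[q^1]$, with strict inequality whenever $q^0_\sigma\neq q^1_\sigma$ on a set of positive Lebesgue measure.
\item The term $W_2^2(\cdot,q_n^\tau)$ is convex under mixtures. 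If $\pi_0,\pi_1$ are optimal plans from $q^0,q^1$ to $q_n^\tau$, then $(1-t)\pi_0+t\pi_1$ is an admissible plan from $q^t$ to $q_n^\tau$, and its cost is the convex combination of the two costs.
\item If $q^0\neq q^1$, then $q^0_\sigma\neq q^1_\sigma$. This is the Fourier-injectivity argument of Proposition~\ref{prop:identifiability}: $\widehat{q_\sigma}=\hat q\,e^{-\sigma^2|\xi|^2/2}$, and the Gaussian factor never vanishes. Since $q^0_\sigma,q^1_\sigma$ are continuous, they then differ on an open set, which has positive measure.
\end{enumerate}
Combining (a)--(c), if two distinct minimizers existed, their midpoint would have strictly smaller $\mathcal{J}$, which is a contradiction. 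So the minimizer $q_{n+1}^\tau$ is unique.

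\textbf{Energy descent.} Comparing with the competitor $q=q_n^\tau$ gives
\[
F_\sigma[q_{n+1}^\tau]+\tfrac{1}{2\tau}W_2^2(q_{n+1}^\tau,q_n^\tau)\le F_\sigma[q_n^\tau].
\]
Since the $W_2^2$ term is nonnegative, $F_\sigma[q_{n+1}^\tau]\le F_\sigma[q_n^\tau]$. This inequality also yields the a priori dissipation estimate used later.

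\textbf{Main obstacle.} I expect the delicate step to be the strictness in the uniqueness argument. This requires transferring strict convexity of the KL integrand in $q_\sigma$ to strict convexity in $q$, which is exactly where injectivity of Gaussian convolution is needed. A secondary subtlety is checking that the minimizer stays in the class covered by \eqref{eq:standing_bounds}; I will handle this as a standing hypothesis rather than prove it.
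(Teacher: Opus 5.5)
Your proposal is correct and matches the paper's proof step for step. Existence follows from compactness of $\Pc(\Omega)$ together with narrow continuity of $F_\sigma$ and of $W_2^2(\cdot,q_n^\tau)$. Uniqueness follows from strict convexity along mixtures, using linearity of $q\mapsto q_\sigma$, strict convexity of the KL integrand, injectivity of Gaussian convolution, and mixture-convexity of $W_2^2$. Energy descent follows by testing with $q=q_n^\tau$. Your additional care only makes explicit what the paper states briefly: you obtain strictness on a positive-measure set from continuity of $q_\sigma$, and you note that $F_\sigma$ depends only on $q_\sigma$ (whose upper bound $\|\varphi_\sigma\|_\infty$ is in fact automatic, not an extra assumption).
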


\begin{proof}
Define the JKO functional
\[
\mathcal{J}(q) := F_\sigma[q] + \frac{1}{2\tau}W_2^2(q,q_n^\tau),
\qquad q\in\Pc(\Omega).
\]

\textbf{(i) Existence.}
Let $(q^{(j)})_{j\ge1}$ be a minimizing sequence.
Since $\Omega$ is compact, $\Pc(\Omega)$ is compact for narrow convergence (equivalently for $W_2$).
Hence, up to a subsequence, $q^{(j)}\to \hat q$ narrowly.
By Proposition~\ref{prop:first_variation_drift}(i), $F_\sigma$ is narrowly lower semicontinuous.
On a compact domain, $W_2(\cdot,q_n^\tau)$ is continuous under narrow convergence.
Therefore $\mathcal{J}(\hat q)\le\liminf_j \mathcal{J}(q^{(j)})=\inf\mathcal{J}$ and $\hat q$
is a minimizer.

\textbf{(ii) Uniqueness.}
We show strict convexity of $\mathcal{J}$ along \emph{mixtures}.
The map $q\mapsto q_\sigma$ is linear, and the functional
$\rho\mapsto \KL(\rho\|p_\sigma)=\int \rho\log(\rho/p_\sigma)$ is strictly convex in $\rho$
on strictly positive densities.
Gaussian convolution is injective on probability measures, so $q_1\neq q_2$ implies
$q_{1,\sigma}\neq q_{2,\sigma}$. Hence for $\lambda\in(0,1)$,
\[
F_\sigma[\lambda q_1+(1-\lambda)q_2]
= \sigma^2\KL(\lambda q_{1,\sigma}+(1-\lambda)q_{2,\sigma}\|p_\sigma)
< \lambda F_\sigma[q_1] + (1-\lambda)F_\sigma[q_2].
\]
Moreover, $q\mapsto W_2^2(q,q_n^\tau)$ is convex along mixtures: if $\pi_i$ is optimal (or $\varepsilon$-optimal)
between $q_i$ and $q_n^\tau$, then $\lambda\pi_1+(1-\lambda)\pi_2$ is a coupling between
$\lambda q_1+(1-\lambda)q_2$ and $q_n^\tau$, giving
\[
W_2^2(\lambda q_1+(1-\lambda)q_2,q_n^\tau)\le \lambda W_2^2(q_1,q_n^\tau)+(1-\lambda)W_2^2(q_2,q_n^\tau).
\]
Thus $\mathcal{J}$ is strictly convex along mixtures, hence its minimizer is unique.

\textbf{(iii) Energy descent.}
By optimality of $q_{n+1}^\tau$ and choosing the competitor $q=q_n^\tau$,
\[
F_\sigma[q_{n+1}^\tau] + \frac{1}{2\tau}W_2^2(q_{n+1}^\tau,q_n^\tau)
\le F_\sigma[q_n^\tau],
\]
which implies $F_\sigma[q_{n+1}^\tau]\le F_\sigma[q_n^\tau]$.
\end{proof}

\subsubsection{Euler--Lagrange Condition
(Equation~\ref{eq:velocity_identification})}
\label{app:EL}

\begin{proof}
Let $\hat q:=q_{n+1}^\tau$ denote the unique minimizer of the JKO step
\[
\hat q\in\argmin_{q\in\Pc(\Omega)}\left\{F_\sigma[q]+\frac{1}{2\tau}W_2^2(q,q_n^\tau)\right\}.
\]
Under the standing density assumptions, $\hat q$ is absolutely continuous on $\Omega$.
Hence Brenier's theorem applies: there exists an optimal transport map
$T_n:\Omega\to\Omega$ pushing $\hat q$ to $q_n^\tau$, and a Kantorovich potential
$\bar\varphi$ (unique up to additive constants) such that
\[
T_n(x)=x-\nabla\bar\varphi(x)\qquad \hat q\text{-a.e.}
\]

\textbf{Step 1: First-order optimality of $\hat q$.}
Consider perturbations $q_\varepsilon=(\mathrm{Id}+\varepsilon\xi)_\#\hat q$ with $\xi\in C_c^\infty(\Omega;\R^d)$.
Optimality of $\hat q$ implies
\[
\frac{d}{d\varepsilon}\left(F_\sigma[q_\varepsilon]+\frac{1}{2\tau}W_2^2(q_\varepsilon,q_n^\tau)\right)\Big|_{\varepsilon=0}=0.
\]

For the $F_\sigma$ term, Proposition~\ref{prop:first_variation_drift}(ii) yields
\[
\frac{d}{d\varepsilon}F_\sigma[q_\varepsilon]\Big|_{\varepsilon=0}
= \int_\Omega \nabla\left(\frac{\delta F_\sigma}{\delta q}\right)(x)\cdot \xi(x)\,\hat q(x)\,dx,
\qquad
\frac{\delta F_\sigma}{\delta q}(x)=\sigma^2(\varphi_\sigma * \log(\hat q_\sigma/p_\sigma))(x)+C.
\]

For the Wasserstein term, standard OT first-variation calculus (e.g., \citep{noauthororeditor},
Prop.~7.17) gives
\[
\frac{d}{d\varepsilon}\frac{1}{2\tau}W_2^2(q_\varepsilon,q_n^\tau)\Big|_{\varepsilon=0}
= -\int_\Omega \frac{1}{\tau}\nabla\bar\varphi(x)\cdot\xi(x)\,\hat q(x)\,dx.
\]

Summing and using arbitrariness of $\xi$ gives, $\hat q$-a.e.,
\[
\nabla\left(\sigma^2(\varphi_\sigma * \log(\hat q_\sigma/p_\sigma))\right)(x)
-\frac{1}{\tau}\nabla\bar\varphi(x)=0,
\]
hence (integrating in $x$) there exists a constant $C$ such that
\[
\sigma^2(\varphi_\sigma * \log(\hat q_\sigma/p_\sigma))(x)+\frac{1}{\tau}\bar\varphi(x)=C
\qquad \hat q\text{-a.e.},
\]
which is \eqref{eq:velocity_identification}.

\textbf{Step 2: Velocity identification.}
Differentiating the previous equation:
\[
\frac{1}{\tau}\nabla\bar\varphi(x)
= -\nabla\Big(\sigma^2(\varphi_\sigma * \log(\hat q_\sigma/p_\sigma))\Big)(x).
\]
Using $T_n(x)=x-\nabla\bar\varphi(x)$ yields
\[
\frac{T_n(x)-x}{\tau}
= -\frac{1}{\tau}\nabla\bar\varphi(x)
= -\nabla\Big(\sigma^2(\varphi_\sigma * \log(\hat q_\sigma/p_\sigma))\Big)(x)
= v_\sigma[\hat q](x),
\qquad \hat q\text{-a.e.},
\]
which is \eqref{eq:velocity_identification}.
\end{proof}

\subsubsection{A Priori Estimates}
\label{app:apriori}

\begin{proposition}[A priori estimates]
\label{prop:apriori}
Let $(q_k^\tau)_{k\ge 0}$ be the JKO iterates and let $\tilde q^\tau$ denote the
$W_2$-geodesic interpolation between successive iterates.
\begin{enumerate}[label=\textup{(\roman*)}, leftmargin=*]
\item \textup{(Energy summability.)}
$\displaystyle\sum_{k=0}^{N-1}\frac{W_2^2(q_{k+1}^\tau,q_k^\tau)}{\tau}
\le 2(F_\sigma[q_0]-\inf F_\sigma)=:C_0$.
\item \textup{(H\"older regularity.)}
$W_2(\tilde q^\tau(t),\tilde q^\tau(s)) \le C_0^{1/2}\,|t-s|^{1/2}$
for all $0\le s < t \le T$.
\end{enumerate}
\end{proposition}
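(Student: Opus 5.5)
The plan is the standard minimizing-movement argument, following Chapter 8 of \citet{noauthororeditor}. For (i), I would start from the one-step optimality inequality already used in the proof of Proposition~\ref{prop:JKO_wellposed}(iii). Testing the JKO functional at step $k$ with the competitor $q=q_k^\tau$ gives
\[
F_\sigma[q_{k+1}^\tau]+\frac{1}{2\tau}W_2^2(q_{k+1}^\tau,q_k^\tau)\le F_\sigma[q_k^\tau].
\]
Summing over $k=0,\dots,N-1$, the energy terms telescope to
\[
\sum_{k=0}^{N-1}\frac{W_2^2(q_{k+1}^\tau,q_k^\tau)}{2\tau}\le F_\sigma[q_0]-F_\sigma[q_N^\tau]\le F_\sigma[q_0]-\inf F_\sigma.
\]
Multiplying by $2$ yields (i). We still need $\inf F_\sigma>-\infty$ for $C_0$ to be finite. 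This holds because $F_\sigma\ge 0$ as a KL divergence, so in fact $\inf F_\sigma=0$. Finiteness of $F_\sigma[q_0]$ follows from the standing bounds \eqref{eq:standing_bounds}, as in the continuity argument of Appendix~\ref{app:lsc}.

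For (ii), I would set $t_k=k\tau$ and define $\tilde q^\tau$ on $[t_k,t_{k+1}]$ as the constant-speed $W_2$-geodesic from $q_k^\tau$ to $q_{k+1}^\tau$. Geodesics exist because $\Omega$ is convex: take the displacement interpolation of the optimal plan. On each such interval the metric speed is constant and equal to $|\dot{\tilde q}^\tau|=W_2(q_{k+1}^\tau,q_k^\tau)/\tau$. Part (i) then says exactly that $\int_0^{T}|\dot{\tilde q}^\tau|^2\,dt=\sum_k W_2^2(q_{k+1}^\tau,q_k^\tau)/\tau\le C_0$. For $s<t$, I would bound the distance by the length of the interpolating curve and apply Cauchy--Schwarz:
\[
W_2(\tilde q^\tau(t),\tilde q^\tau(s))\le\int_s^t|\dot{\tilde q}^\tau|(r)\,dr\le |t-s|^{1/2}\Big(\int_s^t|\dot{\tilde q}^\tau|^2\Big)^{1/2}\le C_0^{1/2}|t-s|^{1/2}.
\]

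The main care point is justifying the curve-length bound across partition nodes. On each subinterval the geodesic property gives equality, and across several subintervals the triangle inequality concatenates the pieces, so the bound holds for arbitrary $s,t$ with no partial-interval issue. The only other subtlety is that the constant must not depend on $N$ or $\tau$, which (i) provides. I would also note that using the piecewise-constant interpolation instead of the geodesic one would only give the bound $C_0^{1/2}(|t-s|+\tau)^{1/2}$, which is why the statement uses the geodesic interpolation.
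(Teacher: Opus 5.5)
Your proposal is correct and follows the paper's proof essentially step for step. For (i) you use the one-step JKO optimality inequality with competitor $q_k^\tau$ and telescope; for (ii) you use the constant metric speed $W_2(q_{k+1}^\tau,q_k^\tau)/\tau$ of the geodesic interpolant together with Cauchy--Schwarz, and your extra checks ($\inf F_\sigma=0$, and concatenating the geodesic pieces across partition nodes) make explicit details the paper leaves implicit.
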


\begin{proof}
\textbf{(i) Energy summability.}
By optimality of $q_{k+1}^\tau$ and competitor $q_k^\tau$,
\begin{equation}
\label{eq:energy_step_app}
F_\sigma[q_{k+1}^\tau]+\frac{1}{2\tau}W_2^2(q_{k+1}^\tau,q_k^\tau)\le F_\sigma[q_k^\tau].
\end{equation}
Rearrange and sum over $k=0,\dots,N-1$:
\[
\sum_{k=0}^{N-1}\frac{W_2^2(q_{k+1}^\tau,q_k^\tau)}{\tau}
\le 2\sum_{k=0}^{N-1}(F_\sigma[q_k^\tau]-F_\sigma[q_{k+1}^\tau])
=2(F_\sigma[q_0]-F_\sigma[q_N^\tau])
\le 2(F_\sigma[q_0]-\inf F_\sigma)=:C_0.
\]

\textbf{(ii) H\"older regularity.}
Let $\tilde q^\tau$ be the $W_2$-geodesic interpolation between successive iterates.
Its metric derivative satisfies for $t\in(k\tau,(k+1)\tau)$:
\[
|(\tilde q^\tau)'|(t)=\frac{W_2(q_{k+1}^\tau,q_k^\tau)}{\tau}.
\]
Thus for $0\le s<t\le T$, by Cauchy--Schwarz,
\begin{align*}
W_2(\tilde q^\tau(t),\tilde q^\tau(s))
&\le \int_s^t |(\tilde q^\tau)'|(r)\,dr
\le |t-s|^{1/2}\left(\int_s^t |(\tilde q^\tau)'|^2(r)\,dr\right)^{1/2} \\
&\le |t-s|^{1/2}\left(\int_0^T |(\tilde q^\tau)'|^2(r)\,dr\right)^{1/2}
= |t-s|^{1/2}\left(\sum_k \frac{W_2^2(q_{k+1}^\tau,q_k^\tau)}{\tau}\right)^{1/2}
\le C_0^{1/2}|t-s|^{1/2}.
\end{align*}
\end{proof}

\subsubsection{Convergence to Gradient Flow}
\label{app:convergence}

\begin{theorem}[Convergence to gradient flow]
\label{thm:convergence}
As $\tau\to 0$, the JKO interpolants converge (up to subsequences) uniformly in $W_2$
to a limit $q \in C^{0,1/2}([0,T];\Pc(\Omega))$ satisfying
$\partial_t q + \nabla\cdot(q\,v_\sigma[q]) = 0$
in distributions, with $F_\sigma[q(t)] \le F_\sigma[q(s)]$ for $0\le s<t\le T$
and $q(0)=q_0$.
\end{theorem}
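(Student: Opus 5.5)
The plan follows the standard minimizing-movement compactness argument (Chapter 8 of \citet{noauthororeditor}), in three steps.

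\textbf{Step 1: compactness.} By Proposition~\ref{prop:apriori}(ii), the geodesic interpolants $\tilde q^\tau$ are equi-H\"older with exponent $1/2$ and constant $C_0^{1/2}$, uniformly in $\tau$. Since $\Omega$ is compact, $(\Pc(\Omega),W_2)$ is a compact metric space. The Arzel\`a--Ascoli theorem then gives a subsequence $\tau_j\to0$ with $\tilde q^{\tau_j}\to q$ uniformly on $[0,T]$ in $W_2$. The limit inherits the bound $W_2(q(t),q(s))\le C_0^{1/2}|t-s|^{1/2}$, and $q(0)=q_0$ because every interpolant starts at $q_0$.

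I will also use the piecewise-constant interpolant $\bar q^\tau(t):=q_{k+1}^\tau$ for $t\in(k\tau,(k+1)\tau]$. It satisfies $W_2(\bar q^\tau(t),\tilde q^\tau(t))\le W_2(q^\tau_{k+1},q^\tau_k)\le (C_0\tau)^{1/2}$, so it converges to the same limit $q$.

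\textbf{Step 2: discrete weak equation.} Fix $\phi\in C_c^\infty(\Omega)$ and use the Euler--Lagrange identity \eqref{eq:velocity_identification}, $T_k(x)-x=\tau v_\sigma[q_{k+1}^\tau](x)$, where $T_k$ pushes $q_{k+1}^\tau$ to $q_k^\tau$. Then
\[
\int\phi\,dq_k^\tau-\int\phi\,dq_{k+1}^\tau=\int\bigl(\phi(T_k(x))-\phi(x)\bigr)\,dq_{k+1}^\tau(x).
\]
A Taylor expansion of the integrand gives $\tau\int\nabla\phi\cdot v_\sigma[q_{k+1}^\tau]\,dq_{k+1}^\tau$ plus a remainder bounded by $\tfrac12\|\nabla^2\phi\|_\infty\int|T_k-\mathrm{Id}|^2\,dq_{k+1}^\tau=\tfrac12\|\nabla^2\phi\|_\infty W_2^2(q_{k+1}^\tau,q_k^\tau)$. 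Summing over $k$, the accumulated remainder is at most $\tfrac12\|\nabla^2\phi\|_\infty C_0\tau$ by Proposition~\ref{prop:apriori}(i), which vanishes as $\tau\to0$. The result is the discrete identity
\[
\int\phi\,d\bar q^\tau(t)-\int\phi\,dq_0=\int_0^{t}\!\!\int\nabla\phi\cdot v_\sigma[\bar q^\tau(s)]\,d\bar q^\tau(s)\,ds+O(\tau).
\]

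\textbf{Step 3: passage to the limit (main obstacle).} This is where I expect the real difficulty: the velocity depends nonlinearly on the measure, and both the field and the measure move at once. I will control this by the splitting
\[
\Bigl|\int\nabla\phi\cdot v_\sigma[\bar q^\tau]\,d\bar q^\tau-\int\nabla\phi\cdot v_\sigma[q]\,dq\Bigr|\le\|\nabla\phi\|_\infty\,\|v_\sigma[\bar q^\tau]-v_\sigma[q]\|_\infty+\Bigl|\int\nabla\phi\cdot v_\sigma[q]\,d(\bar q^\tau-q)\Bigr|.
\]
The first term is at most $\|\nabla\phi\|_\infty L_\sigma W_2(\bar q^\tau,q)\to0$ by Lemma~\ref{lem:lipschitz_velocity}. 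The second term vanishes by narrow convergence, since $\nabla\phi\cdot v_\sigma[q(s)]$ is continuous and bounded ($v_\sigma$ is $C^\infty$ by Proposition~\ref{prop:first_variation_drift}(ii)). Both bounds are uniform in $s$, so dominated convergence on $[0,t]$ gives the weak form of $\partial_tq+\nabla\cdot(q\,v_\sigma[q])=0$.

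The subtle point is that Lemma~\ref{lem:lipschitz_velocity} depends on the standing bounds \eqref{eq:standing_bounds} holding uniformly along the iterates and in the limit. These are part of the standing assumptions, and the lower bound also holds automatically on compact $\Omega$.

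\textbf{Energy monotonicity.} Proposition~\ref{prop:JKO_wellposed} gives $F_\sigma[\bar q^\tau(t)]\le F_\sigma[\bar q^\tau(s)]$ for grid-aligned times, with discrepancy at most one step otherwise. By the proof of Proposition~\ref{prop:first_variation_drift}(i), $F_\sigma$ is continuous under narrow convergence under the standing bounds. Hence $F_\sigma[\bar q^{\tau_j}(t)]\to F_\sigma[q(t)]$ and likewise at $s$, and the inequality passes to the limit, giving $F_\sigma[q(t)]\le F_\sigma[q(s)]$.
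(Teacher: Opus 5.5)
Your route differs from the paper's in Steps~2--3 and is sound, except for one sign slip noted below.

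\textbf{Comparison with the paper.} The paper builds the displacement interpolant $\tilde q^\tau$ with velocity $w^\tau$ and writes a continuity equation for the flux measures $E^\tau=\tilde q^\tau w^\tau$. It then uses the Benamou--Brenier action bound $\int_0^T\|w^\tau\|^2_{L^2(\tilde q^\tau)}\,dt\le C_0$ to extract a weak-$*$ limit $E$ with $\partial_tq+\nabla\cdot E=0$. Only afterwards does it identify $E=q\,v_\sigma[q]$, via the Euler--Lagrange relation and Lemma~\ref{lem:lipschitz_velocity}.

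You bypass vector-measure compactness altogether. You test the Euler--Lagrange relation against $\phi$ and bound the Taylor remainder by $\sum_kW_2^2(q^\tau_{k+1},q^\tau_k)\le C_0\tau$, which is the original Jordan--Kinderlehrer--Otto argument. This gives an approximate weak equation for the piecewise-constant interpolant with an explicit $O(\tau)$ error, and your two-term splitting passes the nonlinearity to the limit. This is legitimate here because $v_\sigma[\mu]$ is smooth, bounded on $\Omega$ uniformly in $\mu$, and $W_2$-Lipschitz in $\mu$. It also avoids the composition with $X_{k+1,s}^{-1}$ hidden in $w^\tau$, which needs a separate spatial-Lipschitz estimate.

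What each route buys: the flux-measure route is the one that survives when velocities are only $L^2$-bounded and must be identified weakly (as for Fokker--Planck). It would also give the dissipation bound $\int_0^T\|v_\sigma[q]\|^2_{L^2(q)}\,dt\le C_0$ for the limit by lower semicontinuity. Yours is shorter and quantitative. Compactness, the initial datum, and energy monotonicity are handled the same way in both.

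\textbf{The sign slip in Step~2.} You take $T_k-\mathrm{Id}=\tau v_\sigma[q^\tau_{k+1}]$, with $T_k$ pushing $q^\tau_{k+1}$ to $q^\tau_k$. Your own first two lines then give $\int\phi\,dq^\tau_{k+1}-\int\phi\,dq^\tau_k=-\tau\int\nabla\phi\cdot v_\sigma[q^\tau_{k+1}]\,dq^\tau_{k+1}+O\bigl(W_2^2(q^\tau_{k+1},q^\tau_k)\bigr)$. Telescoping therefore yields a minus sign and the limit $\partial_tq-\nabla\cdot(q\,v_\sigma[q])=0$. That is gradient ascent of $F_\sigma$, at odds with the descent you prove at the end.

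Your displayed $+$ sign is the correct one, but it comes from the correctly oriented relation $x-T_k(x)=\tau v_\sigma[q^\tau_{k+1}](x)$. To see this, write $T_k=\mathrm{Id}-\nabla\bar\varphi$. The first variation of $\frac{1}{2\tau}W_2^2(\cdot,q^\tau_k)$ at $q^\tau_{k+1}$ along $\xi$ is $+\frac1\tau\int\nabla\bar\varphi\cdot\xi\,dq^\tau_{k+1}$, so optimality forces $\nabla\bar\varphi=\tau v_\sigma[q^\tau_{k+1}]$. As a sanity check, take $F[q]=\int V\,dq$ and one particle: minimizing $V(x)+|x-y|^2/(2\tau)$ gives $x-y=-\tau\nabla V(x)$.

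The paper's \eqref{eq:velocity_identification} states this relation with the orientation reversed, so you should not cite it verbatim. With the corrected relation, $\int\phi\,dq^\tau_{k+1}-\int\phi\,dq^\tau_k=\int\bigl(\phi(x)-\phi(T_k(x))\bigr)\,dq^\tau_{k+1}(x)$ expands to exactly your display. The rest of your argument then goes through unchanged.
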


\begin{proof}
\textbf{Step 1: Compactness and extraction.}
By Proposition~\ref{prop:apriori}(ii), $\{\tilde q^\tau\}_{\tau>0}$ is equicontinuous in
$C([0,T];\Pc(\Omega))$ with the $W_2$ metric. Since $\Pc(\Omega)$ is compact (compact $\Omega$),
Arzel\`a--Ascoli gives $\tau_j\to0$ and $q\in C^{0,1/2}([0,T];\Pc(\Omega))$ such that
$\tilde q^{\tau_j}\to q$ uniformly in $W_2$ on $[0,T]$.
The piecewise-constant interpolation $q^{\tau}(t)$ converges to the same limit.

\textbf{Step 2: Continuity equation for the geodesic interpolant and momentum bounds.}
Fix $k\ge 0$ and let $T_{k+1}:\Omega\to\Omega$ be the optimal transport map pushing
$q_{k+1}^\tau$ to $q_k^\tau$. By Euler-Lagrange condition ~\ref{eq:velocity_identification},
\begin{equation}
\label{eq:EL_map_app_convergence}
T_{k+1}(x) = x + \tau\,v_\sigma[q_{k+1}^\tau](x)
\qquad q_{k+1}^\tau\text{-a.e.}
\end{equation}
Define the displacement interpolation on $(k\tau,(k+1)\tau]$ by
\[
s(t):=\frac{t-k\tau}{\tau}\in(0,1],\qquad
X_{k+1,s}:=(1-s)\,\Id + s\,T_{k+1},
\qquad
\tilde q^\tau(t):=(X_{k+1,s(t)})_\# q_{k+1}^\tau.
\]
(Thus $\tilde q^\tau(k\tau^+)=q_{k+1}^\tau$ and $\tilde q^\tau((k+1)\tau)=q_k^\tau$.)

On this interval the curve $\tilde q^\tau$ solves the continuity equation
\[
\partial_t \tilde q^\tau + \nabla\cdot(\tilde q^\tau\,w^\tau)=0
\quad\text{in }\mathcal D'((0,T)\times\Omega),
\]
with the (a.e.-defined) velocity field
\[
w^\tau(t,\cdot):=\frac{T_{k+1}-\Id}{\tau}\circ (X_{k+1,s(t)})^{-1}.
\]
Moreover, the Benamou--Brenier action along each geodesic segment yields
\begin{equation}
\label{eq:action_bound_app}
\int_{k\tau}^{(k+1)\tau}\!\!\int_\Omega \|w^\tau(t,x)\|^2\,d\tilde q^\tau(t)(x)\,dt
= \frac{1}{\tau}\,W_2^2(q_{k+1}^\tau,q_k^\tau).
\end{equation}
Summing over $k$ and using Proposition~\ref{prop:apriori}(i) gives the uniform bound
\[
\int_0^T \|w^\tau(t)\|_{L^2(\tilde q^\tau(t))}^2\,dt
= \sum_k \frac{W_2^2(q_{k+1}^\tau,q_k^\tau)}{\tau}
\le C_0.
\]
Define the flux measures $E^\tau := \tilde q^\tau\,w^\tau$ on $[0,T]\times\Omega$.
The above bound implies $\{E^\tau\}$ is uniformly bounded as vector-valued Radon measures,
so along the subsequence $\tau_j$ we have $E^{\tau_j}\stackrel{*}{\rightharpoonup}E$ and,
passing to the limit in the continuity equation,
\[
\partial_t q + \nabla\cdot E = 0
\quad\text{in }\mathcal D'((0,T)\times\Omega).
\]

\textbf{Step 3: Identification $E=q\,v_\sigma[q]$.}
We first relate $w^\tau$ to the JKO velocity at the right endpoint.
By \eqref{eq:EL_map_app_convergence} we have
\[
\frac{T_{k+1}-\Id}{\tau} = v_\sigma[q_{k+1}^\tau]
\qquad q_{k+1}^\tau\text{-a.e.}
\]
Hence for $t\in(k\tau,(k+1)\tau]$,
\[
w^\tau(t,\cdot) = v_\sigma[q_{k+1}^\tau]\circ (X_{k+1,s(t)})^{-1}
\qquad \tilde q^\tau(t)\text{-a.e.}
\]
Next, observe that for $t\in(k\tau,(k+1)\tau]$,
\[
W_2\bigl(\tilde q^\tau(t),\,q_{k+1}^\tau\bigr)
\le W_2(q_{k+1}^\tau,q_k^\tau),
\]
since displacement interpolants are constant-speed geodesics.
Therefore, using Lemma~\ref{lem:lipschitz_velocity} and Proposition~\ref{prop:apriori}(i),
\[
\int_0^T \bigl\|v_\sigma[\tilde q^\tau(t)]-v_\sigma[q_{k(t)+1}^\tau]\bigr\|_\infty\,dt
\;\le\; L_\sigma \int_0^T W_2(\tilde q^\tau(t),q_{k(t)+1}^\tau)\,dt
\;\le\; L_\sigma \sum_k \tau\,W_2(q_{k+1}^\tau,q_k^\tau)
\;\xrightarrow[\tau\to0]{}\;0,
\]
where $k(t)$ denotes the unique index with $t\in(k\tau,(k+1)\tau]$ and we used
Cauchy--Schwarz together with $\sum_k \frac{W_2^2(q_{k+1}^\tau,q_k^\tau)}{\tau}\le C_0$.

Now test against any $f\in C([0,T]\times\Omega;\R^d)$.
Using the representation of $E^\tau$ and the previous estimate,
\begin{align*}
\int_0^T\!\!\int_\Omega f\cdot dE^\tau
&= \int_0^T\!\!\int_\Omega f(t,x)\cdot w^\tau(t,x)\,d\tilde q^\tau(t)(x)\,dt \\
&= \int_0^T\!\!\int_\Omega f(t,x)\cdot v_\sigma[\tilde q^\tau(t)](x)\,d\tilde q^\tau(t)(x)\,dt
\;+\; o(1).
\end{align*}
Passing to the limit $\tau_j\to0$ using $\tilde q^{\tau_j}\to q$ uniformly in $W_2$ and
the Lipschitz continuity of $v_\sigma[\cdot]$ (Lemma~\ref{lem:lipschitz_velocity}) yields
\[
\int_0^T\!\!\int_\Omega f\cdot dE
= \int_0^T\!\!\int_\Omega f(t,x)\cdot v_\sigma[q(t)](x)\,dq(t)(x)\,dt.
\]
Hence $E=q\,v_\sigma[q]$, and therefore
\[
\partial_t q + \nabla\cdot\bigl(q\,v_\sigma[q]\bigr)=0
\]
in distributions on $(0,T)\times\Omega$. 

\textbf{Step 4: Energy monotonicity.}
From \eqref{eq:energy_step_app}, for each $\tau$ the map $t\mapsto F_\sigma[q^\tau(t)]$ is nonincreasing.
Fix $0\le s<t\le T$. Then $F_\sigma[q^\tau(t)]\le F_\sigma[q^\tau(s)]$.
Along $\tau_j\to0$, we have $q^{\tau_j}(r)\to q(r)$ in $W_2$, hence by \eqref{eq:uniform_smoothing_conv}
$q^{\tau_j}_\sigma(r)\to q_\sigma(r)$ uniformly on $\Omega$. Under the standing bounds
\eqref{eq:standing_bounds}, the integrand $u\mapsto u\log(u/p_\sigma)$ is continuous and dominated on
$[m_\sigma,M_\sigma]$, so dominated convergence yields
$F_\sigma[q^{\tau_j}(r)]\to F_\sigma[q(r)]$ for $r=s,t$.
Passing to the limit gives $F_\sigma[q(t)]\le F_\sigma[q(s)]$.

\textbf{Step 5: Initial datum.}
By construction $q^\tau(0)=q_0$ and the $1/2$-H\"older bound gives $q(0)=q_0$.
\end{proof}

\subsection{Proof of Consistency: ie, Equation ~\ref{eq:consistency}
(Implicit--Explicit Consistency)}
\label{app:consistency}

\begin{proof}
Let $q^*:=q_{n+1}^\tau$ be the JKO minimizer and let
\[
\tilde q:=(S_n)_\#q_n^\tau,
\qquad
S_n(x)=x+\tau v_\sigma[q_n^\tau](x),
\]
denote the frozen-field (explicit Euler) update.

Let $T_n$ be the optimal transport map pushing $q^*$ to $q_n^\tau$. By
Euler-Langrange Optimality condition~\ref{eq:velocity_identification},
\[
T_n(x)=x+\tau v_\sigma[q^*](x),
\qquad
q_n^\tau=(T_n)_\# q^*.
\]

Define
\[
R(x):=x+\tau v_\sigma[q^*](x).
\]
Then $R=T_n$ and $q_n^\tau=R_\#q^*$.

\medskip

\textbf{Step 1: Reduce to pushforwards of the same base measure.}

Define
\[
\bar q := (S_n)_\# q_n^\tau.
\]
Using $q_n^\tau = R_\# q^*$,
\[
\bar q = (S_n)_\#(R_\# q^*) = (S_n\circ R)_\# q^*.
\]
Thus $\tilde q = \bar q$, and the explicit update corresponds to
transporting $q^*$ with the map $S_n\circ R$.

To compare with $q^*$, it is convenient to rewrite both measures
as pushforwards of $q_n^\tau$.

Since $q_n^\tau=R_\#q^*$, we have
\[
q^* = (R^{-1})_\# q_n^\tau.
\]

Hence
\[
\tilde q = (S_n)_\# q_n^\tau,
\qquad
q^* = (R^{-1})_\# q_n^\tau.
\]

\medskip

\textbf{Step 2: Wasserstein stability of pushforwards.}

For any maps $A,B:\Omega\to\Omega$ and probability measure $\mu$,
the coupling $(A,B)_\#\mu$ yields
\[
W_2(A_\#\mu,B_\#\mu)
\le
\|A-B\|_{L^2(\mu)}.
\]

Applying this with $\mu=q_n^\tau$, $A=S_n$, and $B=R^{-1}$ gives
\[
W_2(\tilde q,q^*)
=
W_2((S_n)_\#q_n^\tau,(R^{-1})_\#q_n^\tau)
\le
\|S_n-R^{-1}\|_{L^2(q_n^\tau)}.
\]

\medskip

\textbf{Step 3: Approximation of the inverse map.}

Since
\[
R(x)=x+\tau v_\sigma[q^*](x),
\]
a first-order Taylor expansion shows that
\[
R^{-1}(x)
=
x-\tau v_\sigma[q^*](x)+O(\tau^2),
\]
uniformly on $\Omega$, provided $v_\sigma[q^*]$ is Lipschitz in $x$.

Therefore
\[
S_n(x)-R^{-1}(x)
=
\tau\bigl(v_\sigma[q_n^\tau](x)-v_\sigma[q^*](x)\bigr)
+
O(\tau^2).
\]

Taking the $L^2(q_n^\tau)$ norm yields
\[
\|S_n-R^{-1}\|_{L^2(q_n^\tau)}
\le
\tau \|v_\sigma[q_n^\tau]-v_\sigma[q^*]\|_\infty
+ O(\tau^2).
\]

\medskip

\textbf{Step 4: Controlling the velocity difference.}

By Lemma~\ref{lem:lipschitz_velocity},
\[
\|v_\sigma[q_n^\tau]-v_\sigma[q^*]\|_\infty
\le
L_\sigma W_2(q_n^\tau,q^*).
\]

From the JKO optimality inequality \eqref{eq:energy_step_app},
\[
W_2(q_n^\tau,q^*)^2
\le
2\tau\bigl(F_\sigma[q_n^\tau]-F_\sigma[q^*]\bigr)
\le
2\tau F_\sigma[q_n^\tau].
\]

Since $F_\sigma[q_n^\tau]$ is bounded along the scheme,
\[
W_2(q_n^\tau,q^*) = O(\sqrt{\tau}).
\]

Therefore
\[
\|v_\sigma[q_n^\tau]-v_\sigma[q^*]\|_\infty
=
O(\sqrt{\tau}).
\]

\medskip

\textbf{Step 5: Final estimate.}

Combining the previous bounds,
\[
W_2(\tilde q,q^*)
\le
\tau\,O(\sqrt{\tau}) + O(\tau^2)
=
O(\tau^{3/2}).
\]

This proves the consistency bound \eqref{eq:consistency}.
\end{proof}

\subsection{Proof of Theorem~\ref{thm:stopgrad}
(Stop-Gradient Preserves Wasserstein Discretization)}
\label{app:stopgrad}

\begin{proof}
\textbf{(i) Structural correspondence.}
Write the stop-gradient loss as
\[
\mathcal{L}(\theta)=\E_{\varepsilon\sim\nu}\bigl[\|G_\theta(\varepsilon)-t(\varepsilon)\|^2\bigr],
\quad
t(\varepsilon):=G_{\theta_n}(\varepsilon)+\tau v_\sigma[q_{\theta_n}]\bigl(G_{\theta_n}(\varepsilon)\bigr),
\]
where $t$ is treated as a fixed target (no gradient flows through it).
If the model class is realizable for $t$ (i.e., there exists $\theta_{n+1}$ with
$G_{\theta_{n+1}}(\varepsilon)=t(\varepsilon)$ $\nu$-a.e.), then any global minimizer achieves
$\mathcal{L}(\theta_{n+1})=0$ and satisfies $G_{\theta_{n+1}}=t$ $\nu$-a.e., i.e.
\[
G_{\theta_{n+1}} = S_n\circ G_{\theta_n}\qquad \nu\text{-a.e.},
\quad S_n(x):=x+\tau v_\sigma[q_{\theta_n}](x).
\]
Pushing forward $\nu$ yields
$q_{\theta_{n+1}}=(S_n)_\# q_{\theta_n}$, which is exactly the frozen-field explicit Euler step.

\textbf{(ii) Necessity (effect of removing stop-gradient).}
Without stop-gradient, the coupled loss becomes
\[
\mathcal{L}_{\mathrm{coupled}}(\theta)
=\E_\varepsilon\Bigl[\bigl\|G_\theta(\varepsilon)-\bigl(G_\theta(\varepsilon)+\tau v_\sigma[q_\theta](G_\theta(\varepsilon))\bigr)\bigr\|^2\Bigr]
=\tau^2\,\E_\varepsilon\bigl[\|v_\sigma[q_\theta](G_\theta(\varepsilon))\|^2\bigr]
=\tau^2\|v_\sigma[q_\theta]\|_{L^2(q_\theta)}^2.
\]
Its gradient includes both the pathwise term and the distribution-feedback term:
\[
\nabla_\theta \mathcal{L}_{\mathrm{coupled}}
=2\tau^2\,\E_\varepsilon\Bigl[
v\cdot \bigl((D_x v)\nabla_\theta G_\theta + (D_q v)\nabla_\theta q_\theta\bigr)
\Bigr],
\quad v:=v_\sigma[q_\theta](G_\theta(\varepsilon)).
\]
Unlike the stop-gradient objective, this is \emph{not} the regression of an explicit Euler target for a fixed velocity field:
as $\theta$ changes, the target velocity itself changes via $q_\theta$ (the $(D_q v)\nabla_\theta q_\theta$ term).
Consequently, minimizing $\mathcal{L}_{\mathrm{coupled}}$ does not implement the frozen-field discretization of the
Wasserstein gradient flow; it can admit spurious stationary points/minima where the loss decreases by reducing the
velocity norm on the current support of $q_\theta$ (``drift collapse'') without transporting mass toward $p$.
\end{proof}

\section{Schedule Ablation}
\label{app:schedule_ablation}

We compare exponential ($\sigma(t) = \sigma_0 e^{-rt}$), linear ($\sigma(t) = \sigma_0(1 - t/T)$), and cosine ($\sigma(t) = \sigma_0 \cos(\pi t / 2T)$) schedules, all sweeping from $\sigma_0 = 1.5$ to $\sigma_{\min} = 0.03$. The exponential schedule converges fastest for all $k$.  This is consistent with the activation-time analysis: the exponential schedule reaches $\sigma^2|k|^2 = 2$ earliest for each $k$.

\section{Toy Experiments}

\label{app:experiments}

Our experiments validate the theoretical predictions of
\S\ref{sec:drifting}--\S\ref{sec:gradient_flow} on synthetic benchmarks.
All models use 3-layer MLPs ($d_{\mathrm{hidden}} = 256$, ReLU, Adam, lr $10^{-3}$,
batch size 2048). 

\subsection{Score-matching verification (\ref{fig:score_matching_verification}).}
\label{app:score_check}
Let $p$ be a 4-mode Gaussian mixture and $q = \mathcal{N}(0, \sigma_q^2 I)$.
For each bandwidth $\sigma$, we evaluate the empirical kernel mean-shift drift
$\hat{V}^{(\sigma)}_{p,q}(x)
  = \frac{\sum_i \varphi_\sigma(x - x_i^p)\,(x_i^p - x)}
         {\sum_i \varphi_\sigma(x - x_i^p)}
  - \frac{\sum_j \varphi_\sigma(x - x_j^q)\,(x_j^q - x)}
         {\sum_j \varphi_\sigma(x - x_j^q)}$
at $N = 50$k samples and compare it against the analytical form
$V^{(\sigma)}(x) = \sigma^2 \bigl(\nabla \log p_\sigma(x) - \nabla \log q_\sigma(x)\bigr)$,
where the smoothed densities $p_\sigma = p * \mathcal{N}(0, \sigma^2 I)$ and
$q_\sigma = q * \mathcal{N}(0, \sigma^2 I)$ admit closed-form scores.
Writing $s^2 = \sigma_p^2 + \sigma^2$ for the inflated component variance and
$\phi_k(x) = \mathcal{N}(x;\,\mu_k,\, s^2 I)$:
\begin{equation}
\label{eq:gmm_score}
  \nabla\!\log p_\sigma(x)
  \;=\;
  \frac{\sum_k w_k\,\phi_k(x)\,(\mu_k - x)}
       {s^2 \sum_k w_k\,\phi_k(x)},
  \qquad
  \nabla\!\log q_\sigma(x)
  \;=\;
  -\,\frac{x - \mu_q}{\sigma_q^2 + \sigma^2}\,.
\end{equation}
The pointwise $\ell_2$ error
$e(x) = \|\hat{V}^{(\sigma)}_{p,q}(x) - V^{(\sigma)}(x)\|_2$ has mean
$4.9 \times 10^{-3}$ at $\sigma = 0.3$ confirming Equation
\ref{eq:drift_score}.

\subsection{Spectral convergence times: Figure (\ref{fig:spectral_validation}).}
\label{app:spectral_check}
We initialize a small perturbation
$\delta q(x, 0) = A \sum_{k \in \mathcal{K}} \cos(kx)$
with amplitude $A = 10^{-6}$ and tracked modes
$\mathcal{K} = \{1, 2, \ldots, 20\}$,
then evolve the linearized Fourier dynamics
$\hat{\delta q}(k, t{+}\mathrm{d}t) = \hat{\delta q}(k, t)\,
  \exp\!\bigl(-\lambda(k, \sigma(t))\,\mathrm{d}t\bigr)$
with per-mode decay rates
$\lambda_{\mathrm{G}}(k, \sigma) = \sigma^2 k^2 e^{-\sigma^2 k^2 / 2}$
(Gaussian kernel) and
$\lambda_{\mathrm{E}}(k, \tau) = 2\tau^3 k^2 / (1 + \tau^2 k^2)$
(exponential kernel),
and define the convergence time as the first $T$ at which
$|\hat{\delta q}(k, T)| < \varepsilon \,|\hat{\delta q}(k, 0)|$
with $\varepsilon = 10^{-3}$.
For fixed kernels, $T(k) = \log(1/\varepsilon)\,/\,\lambda(k)$;
for the annealed Gaussian schedule $\sigma(t) = \sigma_0 e^{-rt}$,
$T(k)$ solves the integral equation
$\int_0^{T} \lambda_{\mathrm{G}}(k, \sigma(t))\,\mathrm{d}t = \log(1/\varepsilon)$.
Panel~(a) shows that numerical threshold-crossing times (markers) match the analytical
curves (lines) across modes $k = 1, \ldots, 20$:
the fixed Gaussian ($\sigma = 0.3$) exhibits exponential slowdown past
$k^* = \sqrt{2}/\sigma \approx 4.7$,
the exponential kernel ($\tau = 0.3$) yields polynomial scaling,
and the annealed schedule eliminates the bottleneck entirely.
Panel~(c) plots the total spectral error
$E(t) = \sum_k |\hat{\delta q}(k, t)|$ under three annealing schedules
(exponential, linear, cosine);
the exponential schedule achieves the fastest decay, with $k{=}20$ converging
${\sim}6\times$ faster than linear.

\subsection{Stop-gradient necessity: Figures \ref{fig:loss_landscape} and \ref{fig:drift_vs_sw} }
\label{app:stop_grad_check}
We train generators $G_\theta$ on 2D targets (Swiss roll, checkerboard) and track two
quantities over training: the mean drift norm
$\bar{V} = \frac{1}{M} \sum_{i=1}^{M}
  \|V_{\mathrm{sg}}(G_\theta(z_i),\, p)\|_2$
(where $V_{\mathrm{sg}}$ denotes the exponential-kernel drift with $\tau{=}0.05$
evaluated at $M{=}2048$ generated points with the target detached),
and the sliced Wasserstein distance
$\mathrm{SW}(G_\theta, p)$ between 5k generated and 5k target samples (200 projections).
With stop-gradient, $\bar{V}$ and SW are strongly correlated (log-log $r > 0.95$,
3 seeds) and jointly decay to near zero (final SW $= 0.016$).
Without stop-gradient,where the loss $\|G_\theta(z) - (G_\theta(z) + V(G_\theta(z), p))\|^2$
backpropagates through the drift,$\bar{V}$ drops to ${\sim}10^{-8}$
while SW remains at $0.389$, demonstrating drift collapse
(\ref{thm:stopgrad}\ref{item:necessity}).
The loss landscape (\ref{fig:loss_landscape}) provides geometric intuition:
we project the parameter space onto the top-2 PCA directions of training gradients,
$\theta(\alpha,\beta) = \theta^* + \alpha\,d_1 + \beta\,d_2$,
and evaluate both the training loss $\|V\|^2$ and the sample quality $\mathrm{SW}$
on a $31 \times 31$ grid.
With stop-gradient, the loss minimum aligns with a basin of low SW;
without it, the coupled objective admits a ${\sim}100\times$ deeper minimum that
corresponds to poor sample quality.

\subsection{Sinkhorn drift feasibility: Figure (\ref{fig:sinkhorn_vs_kernel}).}
\label{app:sinkhorn_check}

\paragraph{Energy functional and validity conditions.}
We use the Sinkhorn divergence of \citet{feydy2018interpolatingoptimaltransportmmd}
as energy functional $F[q] := S_\varepsilon(q, p)$, defined in
\eqref{eq:sinkhorn_energy}.
By Theorem~1 of that reference,
$S_\varepsilon$ is symmetric positive definite, convex in each argument,
and metrizes the convergence in law; in particular $S_\varepsilon(q,p)=0
\Leftrightarrow q=p$. Lower semicontinuity and smoothness of
$\delta F/\delta q$ follow from the same reference (Proposition~2).
All three conditions (a)-(c) of \S\ref{sec:principled_drift} are
therefore satisfied, and the template $V = -\nabla_x(\delta F/\delta q)$
yields a valid drift with full JKO-frozen-field-stop-gradient guarantees.

\paragraph{Particle-level gradient.}
For empirical $q = \frac{1}{N}\sum_i \delta_{x_i}$, the gradient of
$S_\varepsilon(q,p)$ with respect to particle $x_i$ follows from
Proposition~2 of \citet{feydy2018interpolatingoptimaltransportmmd}
(equations 26-27):
\begin{equation}
\label{eq:sinkhorn_particle_grad}
\nabla_{x_i} S_\varepsilon(q,p)
\;=\; \nabla_{x_i}\,\mathrm{OT}_\varepsilon(q,p)
\;-\; \nabla_{x_i}\,\mathrm{OT}_\varepsilon(q,q),
\end{equation}
Concretely, denoting by $\pi_{qp}$ and
$\pi_{qq}$ the optimal entropic couplings, the barycentric projections
\[
T_{q\to p}(x_i) = \frac{\sum_j [\pi_{qp}]_{ij}\,y_j}{\sum_j [\pi_{qp}]_{ij}},
\qquad
T_{q\to q}(x_i) = \frac{\sum_j [\pi_{qq}]_{ij}\,x_j}{\sum_j [\pi_{qq}]_{ij}},
\]
give the drift
\begin{equation}
\label{eq:sinkhorn_drift_particles}
V_{\mathrm{SK}}(x_i)
\;\propto\;
T_{q\to p}(x_i) - T_{q\to q}(x_i),
\end{equation}

We plug this drift-operator in the stop-gradient training $\mathcal{L}(\theta) = \mathbb{E}_{\epsilon}[||f_{\theta}(\epsilon) - sg[\tilde{x}]||^2]$ with $\tilde x=x+V_{\mathrm{SK}}(x)$.  

\paragraph{Training}
We train on the checkerboard distribution using the exponential-kernel drift
($\tau{=}0.05$) and the Sinkhorn-derived drift from the debiased entropic cost
$S_\varepsilon(q,p)$ with $\varepsilon{=}10^{-4}$.
Both drifts successfully transport the generator to the target, achieving final
SW of $2.07 \times 10^{-2}$ and $1.42 \times 10^{-2}$ respectively.
The Sinkhorn drift converges to the prescribed distribution, 
confirming that the gradient-flow formulation of \S\ref{sec:principled_drift}
yields practical drift operators beyond the original kernel family.

\section{Image Generation Experimental Details}
\label{app:image_exp}

\subsection{Dataset, representation space, and architecture}
\label{app:image_exp:setup}

We train class-conditional generators on ImageNet-$256$ ($1000$ classes) in the
latent space of the Stable Diffusion VAE, which maps
a $256{\times}256{\times}3$ image to a $32{\times}32{\times}4$ latent. FID is computed after VAE-decoding back to pixels.

The generator is \texttt{DitGen-B}, a one-step DiT backbone (hidden size
$768$, depth $12$, $12$ heads, patch size $2$) with RoPE, RMSNorm, QK-norm,
SwiGLU MLPs, and AdaLN class+CFG conditioning. 

The feature extractor is a frozen MAE-pretrained ResNet-18,
from which we extract multi-scale activations at \texttt{conv1}, the four
stages \texttt{layer1}--\texttt{layer4}, and every second residual block
within each stage. The drift loss is computed independently on each stream and summed.

\subsection{Training protocol}

All training details are reported in Table \ref{tab:image_exp:hparams}
\begin{table}[h!]
\centering
\small
\caption{Training hyperparameters for the image-generation ablations
(\texttt{latent\_ablation} configuration).}
\label{tab:image_exp:hparams}
\begin{tabular}{ll}
\toprule
\textbf{Optimization} & \\
\midrule
Optimizer & AdamW ($\beta_1{=}0.9$, $\beta_2{=}0.95$, $\epsilon{=}10^{-8}$) \\
Weight decay & $0.01$ \\
Peak learning rate & $2 \times 10^{-4}$ \\
LR schedule & linear warmup ($5{,}000$ steps), then constant \\
Gradient clipping & $\|g\|_2 \le 2.0$ \\
Total steps & $30{,}000$ \\
\midrule
\textbf{Batching} & \\
\midrule
Labels per step & $1024$ \\
Generated samples per label & $64$ \\
GPUs & $16$ H100 \\
\midrule
\textbf{Memory bank} & \\
\midrule
Positive bank size (per class) & $128$ \\
Negative bank size (shared) & $1000$ \\
Pushes per step & $64$ \\
Positives / negatives drawn per label & $64$ / $16$ \\
\midrule
\textbf{CFG \& EMA} & \\
\midrule
CFG sampling range & $[1.0, 4.0]$ \\
CFG power-law exponent & $3$ \\
Unconditional-only fraction & $0$ \\
EMA decay & $0.999$ \\
\bottomrule
\end{tabular}
\end{table}
\subsection{Drift-operator implementations}

\subsubsection{Gaussian kernel drift}
\label{sec:gaussian-drift-impl}

The Gaussian-kernel drift is a drop-in replacement for the Laplacian baseline. Following the convention of \citet{deng2026drifting} we compute the kernel as a doubly-stochastic softmax (rather than the raw $\exp(-\|x-y\|^2/2\sigma^2)$ form), with quadratic logits in place of the linear Laplacian logits. Concretely, with bandwidth $\sigma$,
\begin{equation}
\mathrm{logits}(x, y) = -\frac{\|x - y\|^2}{2 \sigma^2}, \qquad \tilde k(x, y) = \sqrt{\mathrm{softmax}_y(\mathrm{logits}) \cdot \mathrm{softmax}_x(\mathrm{logits})}.
\end{equation}
The doubly-stochastic normalization preserves the antisymmetry $V_{p,q} = -V_{q,p}$ and is required for the empirical-mean Monte-Carlo interpretation of \citet{deng2026drifting} (Eq.~17 there). All other steps: masking self-interactions, splitting positive/negative streams, computing the attractive and repulsive coefficients---are unchanged from the Laplacian baseline; only the logit transformation differs. Algorithm~\ref{alg:gaussian-drift} summarizes the procedure with the bandwidth $\sigma(t)$ written generically: setting $\sigma(t) \equiv \sigma$ recovers the fixed-bandwidth Gaussian drift, while the exponential schedule $\sigma(t) = \max(\sigma_0 e^{-rt}, \sigma_{\min})$ implements the annealed variant of §6.2.

\begin{algorithm}[h!]
\caption{Gaussian-kernel drift step (with optional bandwidth schedule)}
\label{alg:gaussian-drift}
\begin{algorithmic}[1]
\REQUIRE generated batch $X = \{x_i\}_{i=1}^{N}$, data batch $Y = \{y_j\}_{j=1}^{N_{\mathrm{pos}}}$, negatives $Z = \{z_k\}_{k=1}^{N_{\mathrm{neg}}}$ (typically $Z = X$), training step $t$, bandwidth schedule $\sigma(\cdot)$ (constant $\sigma$ or $\sigma(t) = \max(\sigma_0 e^{-rt}, \sigma_{\min})$)
\STATE \textcolor{gray}{\textit{// 1. Bandwidth selection (fixed or annealed)}}
\STATE $\sigma \leftarrow \sigma(t)$
\STATE \textcolor{gray}{\textit{// 2. Pairwise distances and self-masking}}
\STATE $d^{\mathrm{pos}}_{ij} \leftarrow \|x_i - y_j\|$, \quad $d^{\mathrm{neg}}_{ik} \leftarrow \|x_i - z_k\|$
\STATE $d^{\mathrm{neg}}_{ii} \leftarrow +\infty$ \quad (mask self-interactions when $Z = X$)
\STATE \textcolor{gray}{\textit{// 3. Quadratic Gaussian logits}}
\STATE $\ell^{\mathrm{pos}}_{ij} \leftarrow -\,(d^{\mathrm{pos}}_{ij})^2 / (2\sigma^2)$, \quad $\ell^{\mathrm{neg}}_{ik} \leftarrow -\,(d^{\mathrm{neg}}_{ik})^2 / (2\sigma^2)$
\STATE \textcolor{gray}{\textit{// 4. Doubly-stochastic softmax (preserves antisymmetry)}}
\STATE $\ell \leftarrow [\ell^{\mathrm{pos}} \,\|\, \ell^{\mathrm{neg}}]$ \hfill {\small (concatenate along the column axis)}
\STATE $A^{\mathrm{row}} \leftarrow \mathrm{softmax}_{\mathrm{row}}(\ell)$, \quad $A^{\mathrm{col}} \leftarrow \mathrm{softmax}_{\mathrm{col}}(\ell)$
\STATE $A \leftarrow \sqrt{A^{\mathrm{row}} \odot A^{\mathrm{col}}}$ \hfill {\small (entrywise; doubly-stochastic kernel)}
\STATE $A^{\mathrm{pos}}, A^{\mathrm{neg}} \leftarrow$ split $A$ along columns into $[N_{\mathrm{pos}}, N_{\mathrm{neg}}]$
\STATE \textcolor{gray}{\textit{// 5. Drift via attractive / repulsive coefficients}}
\STATE $W^{\mathrm{pos}}_{ij} \leftarrow A^{\mathrm{pos}}_{ij} \cdot \sum_k A^{\mathrm{neg}}_{ik}$
\STATE $W^{\mathrm{neg}}_{ik} \leftarrow A^{\mathrm{neg}}_{ik} \cdot \sum_j A^{\mathrm{pos}}_{ij}$
\STATE $V^+(x_i) \leftarrow \sum_j W^{\mathrm{pos}}_{ij}\, y_j$, \quad $V^-(x_i) \leftarrow \sum_k W^{\mathrm{neg}}_{ik}\, z_k$
\STATE $V(x_i) \leftarrow V^+(x_i) - V^-(x_i)$
\STATE \textcolor{gray}{\textit{// 6. Stop-gradient drift loss}}
\STATE $\mathcal{L} \leftarrow \tfrac{1}{N}\sum_i \bigl\|x_i - \mathrm{sg}[x_i + V(x_i)]\bigr\|^2$
\RETURN $\mathcal{L}$
\end{algorithmic}
\end{algorithm}

\subsection{Sinkhorn-Drift operator implementations}
\label{sec:sinkhorn-implementation}

We now describe the practical implementation of the Sinkhorn-divergence drift introduced in §6.3. The key ingredients are: entropic regularization of the optimal transport problem, a numerically stable log-space Sinkhorn solver, and a diagonal-masking heuristic that is necessary in the regime $q \approx p$ encountered late in training.

\paragraph{Entropic optimal transport.}
Given two empirical distributions $\hat q = \frac{1}{N}\sum_{i=1}^{N}\delta_{x_i}$ and $\hat p = \frac{1}{M}\sum_{j=1}^{M}\delta_{y_j}$ on $\mathbb{R}^d$, with cost matrix $C_{ij} = \|x_i - y_j\|^r$ ($r \in \{1, 2\}$), the entropic optimal transport problem reads
\begin{equation}
\mathrm{OT}_\varepsilon(\hat q, \hat p) \;=\; \min_{\pi \in \Pi(\hat q, \hat p)} \;\sum_{i,j} \pi_{ij} C_{ij} \;+\; \varepsilon \sum_{i,j} \pi_{ij} \bigl(\log \pi_{ij} - 1\bigr),
\label{eq:eot-primal}
\end{equation}
where $\Pi(\hat q, \hat p)$ is the set of couplings with marginals $\frac{1}{N}\mathbf{1}_N$ and $\frac{1}{M}\mathbf{1}_M$. The unique minimizer admits the Gibbs form $\pi_{ij}^\star = \exp\bigl((f_i + g_j - C_{ij})/\varepsilon\bigr)$, where the dual potentials $(f, g)$ are obtained by alternately enforcing the two marginal constraints, the Sinkhorn iterations \citep{peyre2020computationaloptimaltransport}. To stabilize entropic regularization across data scales we set $\varepsilon = \alpha\,\mathbb{E}\|x - y\|^r$ with $\alpha$ a dimensionless temperature; this keeps the kernel exponent $-C_{ij}/\varepsilon$ at unit mean across batches and across cost powers.

\paragraph{Log-space Sinkhorn iterations.}
At small $\varepsilon$, the kernel $K_{ij} = \exp(-C_{ij}/\varepsilon)$ underflows to zero, and the standard matrix-scaling form of Sinkhorn becomes numerically unusable. We instead iterate on the dual potentials $(f, g)$ directly, using the log-sum-exp trick \citep{peyre2020computationaloptimaltransport, feydy2018interpolatingoptimaltransportmmd}. With uniform marginals $\log a_i = -\log N$ and $\log b_j = -\log M$, the updates read
\begin{align}
g_j &\leftarrow \log b_j - \mathrm{logsumexp}_i\bigl(f_i - C_{ij}/\varepsilon\bigr), \label{eq:sinkhorn-g}\\
f_i &\leftarrow \log a_i - \mathrm{logsumexp}_j\bigl(g_j - C_{ij}/\varepsilon\bigr). \label{eq:sinkhorn-f}
\end{align}
We iterate \eqref{eq:sinkhorn-g}--\eqref{eq:sinkhorn-f} for at most $T$ steps, with adaptive stopping when $\|f^{(t+1)} - f^{(t)}\|_\infty < \mathrm{tol}\cdot\varepsilon$. The coupling is reconstructed as $\log \pi_{ij} = f_i + g_j - C_{ij}/\varepsilon$ and exponentiated in a single pass.

\paragraph{Diagonal masking when $q \approx p$.}
The Sinkhorn drift is the difference of two barycentric maps, $V(x_i) \propto T_{q\to p}(x_i) - T_{q\to q}(x_i)$, and its validity rests on a non-trivial self-coupling $\pi_{qq}$. As training progresses and $q \to p$, however, the cross cost $C_{qp}$ and the self cost $C_{qq}$ become comparable in magnitude, but the latter has a free zero on its diagonal: at any finite $\varepsilon$ the entropic minimizer puts increasing mass on $\pi_{qq}^{ii}$, and in the limit $\pi_{qq} \to \frac{1}{N}\mathrm{diag}(\mathbf{1}_N)$, so $T_{q\to q}(x_i) \to x_i$ and the drift collapses to the \emph{biased} form $V \approx T_{q\to p}(x) - x$, which is no longer a Wasserstein gradient of the Sinkhorn divergence and reintroduces the bias that the debiasing in \eqref{eq:sinkhorn_energy} was designed to remove. We employ a simple and effective remedy: mask the diagonal of $C_{qq}$ by setting $C_{qq}^{ii} = +\infty$ before the Sinkhorn solve, which forces $\pi_{qq}^{ii} = 0$ and turns $T_{q\to q}(x_i)$ into a leave-one-out entropic centroid of the remaining particles. The drift then retains genuine self-interaction signal at all $\varepsilon$. We note that this heuristic introduces a small shift in the fixed point of the flow (since $\pi_{qp}$ is unmasked while $\pi_{qq}$ is masked, $V$ no longer vanishes \emph{exactly} at $q = p$); empirically we find this trade-off to be strongly favourable, and study it further in §H.4.

\paragraph{Drift operator and loss.}
With the entropic couplings $\pi_{qp}, \pi_{qq}$ in hand, the barycentric maps are computed by a single batched matrix product,
\begin{equation}
T_{q \to p}(x_i) \;=\; N\sum_{j} [\pi_{qp}]_{ij}\, y_j, \qquad T_{q \to q}(x_i) \;=\; N\sum_{j} [\pi_{qq}]_{ij}\, x_j,
\end{equation}
(the prefactor $N$ corrects for the uniform source marginal $1/N$), and the Sinkhorn drift is
\begin{equation}
V_{\mathrm{SK}}(x_i) \;=\; \eta \, \bigl(T_{q\to p}(x_i) - T_{q\to q}(x_i)\bigr),
\label{eq:sinkhorn-drift-final}
\end{equation}
where $\eta$ is a step-size hyperparameter (analogous to $\tau$ in the JKO derivation). The loss is the standard stop-gradient drift loss of Eq.~\eqref{eq:loss},
\begin{equation}
\mathcal{L}_{\mathrm{SK}}(\theta) \;=\; \mathbb{E}_{\epsilon}\bigl[\,\bigl\|f_\theta(\epsilon) - \mathrm{sg}\bigl[f_\theta(\epsilon) + V_{\mathrm{SK}}(f_\theta(\epsilon))\bigr]\bigr\|^2\,\bigr].
\end{equation}
The full procedure for a single training step is summarized in Algorithm~\ref{alg:sinkhorn-drift}.

\begin{algorithm}[t]
\caption{Sinkhorn-divergence drift step}
\label{alg:sinkhorn-drift}
\begin{algorithmic}[1]
\REQUIRE generated batch $X = \{x_i\}_{i=1}^{N}$, data batch $Y = \{y_j\}_{j=1}^{M}$, temperature $\alpha$, max iterations $T$, tolerance $\mathrm{tol}$, cost power $r \in \{1,2\}$, drift scale $\eta$
\STATE \textcolor{gray}{\textit{// 1. Cost matrices}}
\STATE $C_{qp} \leftarrow \|x_i - y_j\|^r$,\quad $C_{qq} \leftarrow \tfrac{1}{2}\bigl(\|x_i - x_j\|^r + \|x_j - x_i\|^r\bigr)$ \hfill {\small (symmetrize for fp robustness)}
\STATE $\varepsilon \leftarrow \alpha \cdot \mathrm{mean}(C_{qp})$
\STATE \textcolor{gray}{\textit{// 2. Diagonal masking on the self-cost}}
\STATE $C_{qq}^{ii} \leftarrow +\infty$ \quad for $i = 1, \dots, N$
\STATE \textcolor{gray}{\textit{// 3. Log-space Sinkhorn for both couplings}}
\STATE $\pi_{qp} \leftarrow \mathrm{LogSinkhorn}(C_{qp}, \varepsilon, T, \mathrm{tol})$ \hfill {\small (Eqs.~\eqref{eq:sinkhorn-g}--\eqref{eq:sinkhorn-f})}
\STATE $\pi_{qq} \leftarrow \mathrm{LogSinkhorn}(C_{qq}, \varepsilon, T, \mathrm{tol})$
\STATE \textcolor{gray}{\textit{// 4. Drift via barycentric maps}}
\STATE $T_{q \to p}(x_i) \leftarrow N\sum_j [\pi_{qp}]_{ij}\, y_j$
\STATE $T_{q \to q}(x_i) \leftarrow N\sum_j [\pi_{qq}]_{ij}\, x_j$
\STATE $V_{\mathrm{SK}}(x_i) \leftarrow \eta\,\bigl(T_{q \to p}(x_i) - T_{q \to q}(x_i)\bigr)$
\STATE \textcolor{gray}{\textit{// 5. Stop-gradient drift loss}}
\STATE $\mathcal{L} \leftarrow \tfrac{1}{N}\sum_i \bigl\|x_i - \mathrm{sg}[x_i + V_{\mathrm{SK}}(x_i)]\bigr\|^2$
\RETURN $\mathcal{L}$
\end{algorithmic}
\end{algorithm}

All operations are batched over the leading dimension and run end-to-end on GPU; the per-step overhead relative to the kernel baseline is dominated by the $T$ Sinkhorn iterations on two $N \times M$ and $N \times N$ cost matrices, as quantified in Table~\ref{tab:drift-compute}.

\subsection{Qualitative samples}
\label{app:image_exp:samples}

Figures~\ref{fig:baseline_panel}, \ref{fig:gaussian_panel}, \ref{fig:annealed_panel}, and \ref{fig:sinkhorn_panel} show $48$ uncurated $256{\times}256$ samples from each of our four models, exponential baseline \citep{deng2026drifting}, Gaussian-drift, Gaussian-drift with annealing, and Sinkhorn-drift after $30{,}000$ training steps. Each grid mixes samples from $10$ ImageNet classes (macaw, hummingbird, flamingo, golden retriever, tiger, monarch butterfly, African elephant, sports car, cheeseburger, volcano).

\begin{figure}[p]
  \centering
  \def\cs{2.3cm}
  \newcommand{\sscell}[1]{\includegraphics[width=\cs,height=\cs]{figures/baseline/#1.png}}
  \begin{tikzpicture}[
    every node/.style={inner sep=0pt, outer sep=0pt, anchor=north west},
    x=\cs, y=-\cs
  ]
    \node at (0,0) {\sscell{class_817_sports_car__009}};
    \node at (1,0) {\sscell{class_386_african_elephant__005}};
    \node at (2,0) {\sscell{class_130_flamingo__009}};
    \node at (3,0) {\sscell{class_207_golden_retriever__000}};
    \node at (4,0) {\sscell{class_130_flamingo__000}};
    \node at (5,0) {\sscell{class_980_volcano__007}};
    \node at (0,1) {\sscell{class_130_flamingo__001}};
    \node at (1,1) {\sscell{class_207_golden_retriever__003}};
    \node at (2,1) {\sscell{class_386_african_elephant__002}};
    \node at (3,1) {\sscell{class_817_sports_car__001}};
    \node at (4,1) {\sscell{class_933_cheeseburger__000}};
    \node at (5,1) {\sscell{class_933_cheeseburger__005}};
    \node at (0,2) {\sscell{class_323_monarch_butterfly__004}};
    \node at (1,2) {\sscell{class_094_hummingbird__004}};
    \node at (2,2) {\sscell{class_088_macaw__000}};
    \node at (3,2) {\sscell{class_292_tiger__006}};
    \node at (4,2) {\sscell{class_207_golden_retriever__008}};
    \node at (5,2) {\sscell{class_130_flamingo__006}};
    \node at (0,3) {\sscell{class_980_volcano__003}};
    \node at (1,3) {\sscell{class_292_tiger__008}};
    \node at (2,3) {\sscell{class_292_tiger__004}};
    \node at (3,3) {\sscell{class_980_volcano__008}};
    \node at (4,3) {\sscell{class_323_monarch_butterfly__005}};
    \node at (5,3) {\sscell{class_817_sports_car__006}};
    \node at (0,4) {\sscell{class_207_golden_retriever__006}};
    \node at (1,4) {\sscell{class_088_macaw__001}};
    \node at (2,4) {\sscell{class_817_sports_car__000}};
    \node at (3,4) {\sscell{class_207_golden_retriever__005}};
    \node at (4,4) {\sscell{class_386_african_elephant__001}};
    \node at (5,4) {\sscell{class_094_hummingbird__003}};
    \node at (0,5) {\sscell{class_980_volcano__001}};
    \node at (1,5) {\sscell{class_323_monarch_butterfly__002}};
    \node at (2,5) {\sscell{class_323_monarch_butterfly__003}};
    \node at (3,5) {\sscell{class_094_hummingbird__001}};
    \node at (4,5) {\sscell{class_088_macaw__004}};
    \node at (5,5) {\sscell{class_130_flamingo__007}};
    \node at (0,6) {\sscell{class_386_african_elephant__004}};
    \node at (1,6) {\sscell{class_933_cheeseburger__009}};
    \node at (2,6) {\sscell{class_817_sports_car__002}};
    \node at (3,6) {\sscell{class_292_tiger__007}};
    \node at (4,6) {\sscell{class_094_hummingbird__002}};
    \node at (5,6) {\sscell{class_386_african_elephant__007}};
    \node at (0,7) {\sscell{class_088_macaw__009}};
    \node at (1,7) {\sscell{class_088_macaw__006}};
    \node at (2,7) {\sscell{class_933_cheeseburger__004}};
    \node at (3,7) {\sscell{class_323_monarch_butterfly__000}};
    \node at (4,7) {\sscell{class_094_hummingbird__008}};
    \node at (5,7) {\sscell{class_292_tiger__003}};
  \end{tikzpicture}
  \caption{Uncurated samples from the exponential baseline \citep{deng2026drifting} at $30{,}000$ steps, CFG $w{=}2.2$. Each cell is an independently generated $256{\times}256$ image; samples from $10$ ImageNet classes (macaw, hummingbird, flamingo, golden retriever, tiger, monarch butterfly, African elephant, sports car, cheeseburger, volcano) are shuffled across the grid.}
  \label{fig:baseline_panel}
\end{figure}

\begin{figure}[p]
  \centering
  \def\cs{2.3cm}
  \newcommand{\sscell}[1]{\includegraphics[width=\cs,height=\cs]{figures/gaussian/#1.png}}
  \begin{tikzpicture}[
    every node/.style={inner sep=0pt, outer sep=0pt, anchor=north west},
    x=\cs, y=-\cs
  ]
    \node at (0,0) {\sscell{class_817_sports_car__009}};
    \node at (1,0) {\sscell{class_386_african_elephant__005}};
    \node at (2,0) {\sscell{class_130_flamingo__009}};
    \node at (3,0) {\sscell{class_207_golden_retriever__000}};
    \node at (4,0) {\sscell{class_130_flamingo__000}};
    \node at (5,0) {\sscell{class_980_volcano__007}};
    \node at (0,1) {\sscell{class_130_flamingo__001}};
    \node at (1,1) {\sscell{class_207_golden_retriever__003}};
    \node at (2,1) {\sscell{class_386_african_elephant__002}};
    \node at (3,1) {\sscell{class_817_sports_car__001}};
    \node at (4,1) {\sscell{class_933_cheeseburger__000}};
    \node at (5,1) {\sscell{class_933_cheeseburger__005}};
    \node at (0,2) {\sscell{class_323_monarch_butterfly__004}};
    \node at (1,2) {\sscell{class_094_hummingbird__004}};
    \node at (2,2) {\sscell{class_088_macaw__000}};
    \node at (3,2) {\sscell{class_292_tiger__006}};
    \node at (4,2) {\sscell{class_207_golden_retriever__008}};
    \node at (5,2) {\sscell{class_130_flamingo__006}};
    \node at (0,3) {\sscell{class_980_volcano__003}};
    \node at (1,3) {\sscell{class_292_tiger__008}};
    \node at (2,3) {\sscell{class_292_tiger__004}};
    \node at (3,3) {\sscell{class_980_volcano__008}};
    \node at (4,3) {\sscell{class_323_monarch_butterfly__005}};
    \node at (5,3) {\sscell{class_817_sports_car__006}};
    \node at (0,4) {\sscell{class_207_golden_retriever__006}};
    \node at (1,4) {\sscell{class_088_macaw__001}};
    \node at (2,4) {\sscell{class_817_sports_car__000}};
    \node at (3,4) {\sscell{class_207_golden_retriever__005}};
    \node at (4,4) {\sscell{class_386_african_elephant__001}};
    \node at (5,4) {\sscell{class_094_hummingbird__003}};
    \node at (0,5) {\sscell{class_980_volcano__001}};
    \node at (1,5) {\sscell{class_323_monarch_butterfly__002}};
    \node at (2,5) {\sscell{class_323_monarch_butterfly__003}};
    \node at (3,5) {\sscell{class_094_hummingbird__001}};
    \node at (4,5) {\sscell{class_088_macaw__004}};
    \node at (5,5) {\sscell{class_130_flamingo__007}};
    \node at (0,6) {\sscell{class_386_african_elephant__004}};
    \node at (1,6) {\sscell{class_933_cheeseburger__009}};
    \node at (2,6) {\sscell{class_817_sports_car__002}};
    \node at (3,6) {\sscell{class_292_tiger__007}};
    \node at (4,6) {\sscell{class_094_hummingbird__002}};
    \node at (5,6) {\sscell{class_386_african_elephant__007}};
    \node at (0,7) {\sscell{class_088_macaw__009}};
    \node at (1,7) {\sscell{class_088_macaw__006}};
    \node at (2,7) {\sscell{class_933_cheeseburger__004}};
    \node at (3,7) {\sscell{class_323_monarch_butterfly__000}};
    \node at (4,7) {\sscell{class_094_hummingbird__008}};
    \node at (5,7) {\sscell{class_292_tiger__003}};
  \end{tikzpicture}
  \caption{Uncurated samples from the Gaussian-drift model at $30{,}000$ steps, CFG $w{=}1.833$. Each cell is an independently generated $256{\times}256$ image; samples from $10$ ImageNet classes (macaw, hummingbird, flamingo, golden retriever, tiger, monarch butterfly, African elephant, sports car, cheeseburger, volcano) are shuffled across the grid.}
  \label{fig:gaussian_panel}
\end{figure}

\begin{figure}[p]
  \centering
  \def\cs{2.3cm}
  \newcommand{\sscell}[1]{\includegraphics[width=\cs,height=\cs]{figures/annealed/#1.png}}
  \begin{tikzpicture}[
    every node/.style={inner sep=0pt, outer sep=0pt, anchor=north west},
    x=\cs, y=-\cs
  ]
    \node at (0,0) {\sscell{class_817_sports_car__009}};
    \node at (1,0) {\sscell{class_386_african_elephant__005}};
    \node at (2,0) {\sscell{class_130_flamingo__009}};
    \node at (3,0) {\sscell{class_207_golden_retriever__000}};
    \node at (4,0) {\sscell{class_130_flamingo__000}};
    \node at (5,0) {\sscell{class_980_volcano__007}};
    \node at (0,1) {\sscell{class_130_flamingo__001}};
    \node at (1,1) {\sscell{class_207_golden_retriever__003}};
    \node at (2,1) {\sscell{class_386_african_elephant__002}};
    \node at (3,1) {\sscell{class_817_sports_car__001}};
    \node at (4,1) {\sscell{class_933_cheeseburger__000}};
    \node at (5,1) {\sscell{class_933_cheeseburger__005}};
    \node at (0,2) {\sscell{class_323_monarch_butterfly__004}};
    \node at (1,2) {\sscell{class_094_hummingbird__004}};
    \node at (2,2) {\sscell{class_088_macaw__000}};
    \node at (3,2) {\sscell{class_292_tiger__006}};
    \node at (4,2) {\sscell{class_207_golden_retriever__008}};
    \node at (5,2) {\sscell{class_130_flamingo__006}};
    \node at (0,3) {\sscell{class_980_volcano__003}};
    \node at (1,3) {\sscell{class_292_tiger__008}};
    \node at (2,3) {\sscell{class_292_tiger__004}};
    \node at (3,3) {\sscell{class_980_volcano__008}};
    \node at (4,3) {\sscell{class_323_monarch_butterfly__005}};
    \node at (5,3) {\sscell{class_817_sports_car__006}};
    \node at (0,4) {\sscell{class_207_golden_retriever__006}};
    \node at (1,4) {\sscell{class_088_macaw__001}};
    \node at (2,4) {\sscell{class_817_sports_car__000}};
    \node at (3,4) {\sscell{class_207_golden_retriever__005}};
    \node at (4,4) {\sscell{class_386_african_elephant__001}};
    \node at (5,4) {\sscell{class_094_hummingbird__003}};
    \node at (0,5) {\sscell{class_980_volcano__001}};
    \node at (1,5) {\sscell{class_323_monarch_butterfly__002}};
    \node at (2,5) {\sscell{class_323_monarch_butterfly__003}};
    \node at (3,5) {\sscell{class_094_hummingbird__001}};
    \node at (4,5) {\sscell{class_088_macaw__004}};
    \node at (5,5) {\sscell{class_130_flamingo__007}};
    \node at (0,6) {\sscell{class_386_african_elephant__004}};
    \node at (1,6) {\sscell{class_933_cheeseburger__009}};
    \node at (2,6) {\sscell{class_817_sports_car__002}};
    \node at (3,6) {\sscell{class_292_tiger__007}};
    \node at (4,6) {\sscell{class_094_hummingbird__002}};
    \node at (5,6) {\sscell{class_386_african_elephant__007}};
    \node at (0,7) {\sscell{class_088_macaw__009}};
    \node at (1,7) {\sscell{class_088_macaw__006}};
    \node at (2,7) {\sscell{class_933_cheeseburger__004}};
    \node at (3,7) {\sscell{class_323_monarch_butterfly__000}};
    \node at (4,7) {\sscell{class_094_hummingbird__008}};
    \node at (5,7) {\sscell{class_292_tiger__003}};
  \end{tikzpicture}
  \caption{Uncurated samples from the Gaussian-drift model with annealing at $30{,}000$ steps, CFG $w{=}1.944$. Each cell is an independently generated $256{\times}256$ image; samples from $10$ ImageNet classes (macaw, hummingbird, flamingo, golden retriever, tiger, monarch butterfly, African elephant, sports car, cheeseburger, volcano) are shuffled across the grid.}
  \label{fig:annealed_panel}
\end{figure}

\begin{figure}[p]
  \centering
  \def\cs{2.3cm}
  \newcommand{\sscell}[1]{\includegraphics[width=\cs,height=\cs]{figures/sinkhorn/#1.png}}
  \begin{tikzpicture}[
    every node/.style={inner sep=0pt, outer sep=0pt, anchor=north west},
    x=\cs, y=-\cs
  ]
    \node at (0,0) {\sscell{class_817_sports_car__009}};
    \node at (1,0) {\sscell{class_386_african_elephant__005}};
    \node at (2,0) {\sscell{class_130_flamingo__009}};
    \node at (3,0) {\sscell{class_207_golden_retriever__000}};
    \node at (4,0) {\sscell{class_130_flamingo__000}};
    \node at (5,0) {\sscell{class_980_volcano__007}};
    \node at (0,1) {\sscell{class_130_flamingo__001}};
    \node at (1,1) {\sscell{class_207_golden_retriever__003}};
    \node at (2,1) {\sscell{class_386_african_elephant__002}};
    \node at (3,1) {\sscell{class_817_sports_car__001}};
    \node at (4,1) {\sscell{class_933_cheeseburger__000}};
    \node at (5,1) {\sscell{class_933_cheeseburger__005}};
    \node at (0,2) {\sscell{class_323_monarch_butterfly__004}};
    \node at (1,2) {\sscell{class_094_hummingbird__004}};
    \node at (2,2) {\sscell{class_088_macaw__000}};
    \node at (3,2) {\sscell{class_292_tiger__006}};
    \node at (4,2) {\sscell{class_207_golden_retriever__008}};
    \node at (5,2) {\sscell{class_130_flamingo__006}};
    \node at (0,3) {\sscell{class_980_volcano__003}};
    \node at (1,3) {\sscell{class_292_tiger__008}};
    \node at (2,3) {\sscell{class_292_tiger__004}};
    \node at (3,3) {\sscell{class_980_volcano__008}};
    \node at (4,3) {\sscell{class_323_monarch_butterfly__005}};
    \node at (5,3) {\sscell{class_817_sports_car__006}};
    \node at (0,4) {\sscell{class_207_golden_retriever__006}};
    \node at (1,4) {\sscell{class_088_macaw__001}};
    \node at (2,4) {\sscell{class_817_sports_car__000}};
    \node at (3,4) {\sscell{class_207_golden_retriever__005}};
    \node at (4,4) {\sscell{class_386_african_elephant__001}};
    \node at (5,4) {\sscell{class_094_hummingbird__003}};
    \node at (0,5) {\sscell{class_980_volcano__001}};
    \node at (1,5) {\sscell{class_323_monarch_butterfly__002}};
    \node at (2,5) {\sscell{class_323_monarch_butterfly__003}};
    \node at (3,5) {\sscell{class_094_hummingbird__001}};
    \node at (4,5) {\sscell{class_088_macaw__004}};
    \node at (5,5) {\sscell{class_130_flamingo__007}};
    \node at (0,6) {\sscell{class_386_african_elephant__004}};
    \node at (1,6) {\sscell{class_933_cheeseburger__009}};
    \node at (2,6) {\sscell{class_817_sports_car__002}};
    \node at (3,6) {\sscell{class_292_tiger__007}};
    \node at (4,6) {\sscell{class_094_hummingbird__002}};
    \node at (5,6) {\sscell{class_386_african_elephant__007}};
    \node at (0,7) {\sscell{class_088_macaw__009}};
    \node at (1,7) {\sscell{class_088_macaw__006}};
    \node at (2,7) {\sscell{class_933_cheeseburger__004}};
    \node at (3,7) {\sscell{class_323_monarch_butterfly__000}};
    \node at (4,7) {\sscell{class_094_hummingbird__008}};
    \node at (5,7) {\sscell{class_292_tiger__003}};
  \end{tikzpicture}
  \caption{Uncurated samples from the Sinkhorn-drift model at $30{,}000$ steps, CFG $w{=}1.167$. Each cell is an independently generated $256{\times}256$ image; samples from $10$ ImageNet classes (macaw, hummingbird, flamingo, golden retriever, tiger, monarch butterfly, African elephant, sports car, cheeseburger, volcano) are shuffled across the grid.}
  \label{fig:sinkhorn_panel}
\end{figure}

\subsection{Compute, memory, and implementation notes}
\label{app:image_exp:compute}

All ablations run on $16$ NVIDIA H100 GPUs. A full $30{,}000$-step ablation
takes roughly $10$--$11$ wall-clock hours depending on the loss (the
two-distribution Sinkhorn variant solves three Sinkhorn problems per step
when CFG is active). Checkpoints are evaluated at all
CFG scales; we report the best.

All drift variants share the same $\mathcal{O}(B^2 d)$ pairwise-distance cost; the Sinkhorn drift additionally runs $S$ Sinkhorn iterations on the entropic couplings $\pi_{qp}$ and $\pi_{qq}$. Table~\ref{tab:drift-compute} reports per-step wall-clock on a single H100, averaged over $100$ runs after $10$ warm-up iterations, for representative $(B, d)$ shapes.

\begin{table}[h!]
  \caption{Per-step drift wall-clock on a single H100, in milliseconds (mean $\pm$ std over $100$ iterations after $10$ warm-ups). The ``softmax'' column is the Laplacian-kernel softmax baseline; ``annealed (3R)'' is the Gaussian drift with the $3R$ exponential bandwidth schedule of Section~\ref{sec:annealing}. \textbf{Bold} marks the fastest variant in each row.}
  \label{tab:drift-compute}
  \centering
  \small
  \setlength{\tabcolsep}{8pt}
  \renewcommand{\arraystretch}{1.2}
  \begin{tabular}{@{}cc cccc@{}}
    \toprule
    \multicolumn{2}{c}{\textbf{Shape}} & \multicolumn{4}{c}{\textbf{Wall-clock per step (ms)}} \\
    \cmidrule(lr){1-2}\cmidrule(lr){3-6}
    $B$ & $d$ & softmax & Gaussian & annealed (3R) & Sinkhorn \\
    \midrule
    $64$    & $256$ & $\mathbf{0.761{\scriptstyle\,\pm\,0.020}}$ & $0.775{\scriptstyle\,\pm\,0.012}$ & $0.780{\scriptstyle\,\pm\,0.014}$ & $1.004{\scriptstyle\,\pm\,0.105}$ \\
    $256$   & $256$ & $1.214{\scriptstyle\,\pm\,0.004}$ & $1.265{\scriptstyle\,\pm\,0.005}$ & $1.261{\scriptstyle\,\pm\,0.004}$ & $\mathbf{1.042{\scriptstyle\,\pm\,0.011}}$ \\
    $1024$  & $256$ & $4.211{\scriptstyle\,\pm\,0.005}$ & $4.343{\scriptstyle\,\pm\,0.005}$ & $4.341{\scriptstyle\,\pm\,0.005}$ & $\mathbf{2.552{\scriptstyle\,\pm\,0.011}}$ \\
    \addlinespace[2pt]
    $64$    & $768$ & $\mathbf{0.899{\scriptstyle\,\pm\,0.005}}$ & $0.930{\scriptstyle\,\pm\,0.005}$ & $0.935{\scriptstyle\,\pm\,0.006}$ & $1.040{\scriptstyle\,\pm\,0.088}$ \\
    \bottomrule
  \end{tabular}
\end{table}

\paragraph{Analysis.}
Three observations stand out. (i) The three kernel-based variants (softmax, Gaussian, annealed Gaussian) are within $2$--$4\%$ of one another at every shape: the bandwidth schedule and kernel choice contribute negligible overhead on top of the shared $B^2 d$ pairwise-distance computation. (ii) The Sinkhorn drift carries a fixed iteration overhead from the $S$ Sinkhorn sweeps on the two coupling matrices, which dominates at small $B$: at $(B,d){=}(64,256)$ it is $\sim\!32\%$ slower than the softmax baseline. (iii) This overhead is rapidly amortized as $B$ grows. At $(256,256)$ Sinkhorn is already on par with the kernel variants ($1.04$ vs.\ $1.21$ ms), and at $(1024,256)$ it is in fact $\sim\!1.65{\times}$ \emph{faster} ($2.55$ vs.\ $4.21$ ms): the $B^2$ pairwise-distance and softmax-normalization passes that the kernel drifts execute eagerly are absorbed into the fused Sinkhorn iterations, whose constant factor is smaller. The picture is qualitatively unchanged at $d{=}768$: Sinkhorn pays a fixed-cost premium at $B{=}64$, and the gap is dominated by the iteration count rather than the embedding dimension. In summary, Sinkhorn is not a meaningful bottleneck at the batch sizes used in our image-generation experiments ($B \geq 256$), and is strictly cheaper than the kernel baselines once $B$ is large enough for the $B^2 d$ cost to dominate.

\end{document}

%% file: math_commands.tex
\usepackage{amsmath,amsfonts,bm}

\def\eqref#1{equation~\ref{#1}}

\def\1{\bm{1}}

\DeclareMathAlphabet{\mathsfit}{\encodingdefault}{\sfdefault}{m}{sl}
\SetMathAlphabet{\mathsfit}{bold}{\encodingdefault}{\sfdefault}{bx}{n}

\newcommand{\E}{\mathbb{E}}

\newcommand{\R}{\mathbb{R}}

\newcommand{\KL}{D_{\mathrm{KL}}}

\DeclareMathOperator*{\argmin}{arg\,min}